\newcommand{\denselist}{\itemsep -1pt\partopsep 0pt}
\newcommand{\E}{\mathop{\mathbb{E}}} 
\newcommand{\Var}{\mathop{\mathbb{V}}} 
\newcommand{\gzrep}{g^\mathrm{rep}}
\newcommand{\gzdrep}{g^\mathrm{STL}}
\newcommand{\grep}{g^\mathrm{rep}_\alpha}
\newcommand{\gdrep}{g^\mathrm{drep}_\alpha}
\newcommand{\ga}{g_\alpha}
\newcommand{\gaj}{g_{\alpha j}}
\newcommand{\R}{\lambda}
\newcommand{\SNR}{\mathrm{SNR}}
\newcommand{\KL}{\mathrm{KL}}
\newcommand{\sw}{\sigma_{w}}
\newcommand{\sv}{\sigma_{v}}
\newcommand{\Cw}{\Sigma_{w}}
\newcommand{\Cv}{\Sigma_{v}}
\newcommand{\Sw}{S_{w}}
\newcommand{\T}{\mathcal{T}_w(\epsilon)}
\newcommand{\Ti}{\mathcal{T}_{w_i}(\epsilon_i)}
\newcommand{\Tj}{\mathcal{T}_{w_j}(\epsilon_j)}
\newcommand{\f}{f(\R, \alpha)}
\newcommand{\g}{g(\R, \alpha)}
\declaretheorem[name=Theorem]{thm}
\newtheorem{lemma}[thm]{Lemma}
\newtheorem{proposition}[thm]{Proposition}
\icmltitlerunning{On the Difficulty of Unbiased Alpha Divergence Minimization}
\begin{document}

\twocolumn[
\icmltitle{On the Difficulty of Unbiased Alpha Divergence Minimization}



\icmlsetsymbol{equal}{*}

\begin{icmlauthorlist}
\icmlauthor{Tomas Geffner}{umass}
\icmlauthor{Justin Domke}{umass}
\end{icmlauthorlist}

\icmlaffiliation{umass}{College of Information and Computer Science, University of Massachusetts, Amherst, MA, USA}

\icmlcorrespondingauthor{Tomas Geffner}{tgeffner@cs.umass.edu}
\icmlcorrespondingauthor{Justin Domke}{domke@cs.umass.edu}

\icmlkeywords{Machine Learning, ICML}

\vskip 0.3in
]



\printAffiliationsAndNotice{}  

\begin{abstract}

Several approximate inference algorithms have been proposed to minimize an alpha-divergence between an approximating distribution and a target distribution. Many of these algorithms introduce bias, the magnitude of which becomes problematic in high dimensions. Other algorithms are unbiased. These often seem to suffer from high variance, but little is rigorously known. In this work we study unbiased methods for alpha-divergence minimization through the Signal-to-Noise Ratio (SNR) of the gradient estimator. We study several representative scenarios where strong analytical results are possible, such as fully-factorized or Gaussian distributions. We find that when alpha is not zero, the SNR worsens exponentially in the dimensionality of the problem. This casts doubt on the practicality of these methods. We empirically confirm these theoretical results.

\end{abstract}



\section{Introduction}

Variational inference (VI) typically minimizes the KL-divergence from an approximating distribution $q_w$ to a target distribution $p$ \citep{jordan1999introduction, blei2017variational, zhang2017advances}. While computationally convenient, the use of this objective may lead to distributions $q_w$ with undesirable statistical properties (e.g. variance underestimation \citep{minka2005divergence}). To avoid this, recent methods instead attempt to minimize an alpha-divergence \citep{amari1985differential}. This is a class of divergences indexed by a parameter $\alpha$, that reduces to the typical KL-divergence when $\alpha \rightarrow 0$. The $\alpha$ parameter determines the divergence's properties. For instance, for $\alpha \gg 0$, the divergence penalizes distributions $q_w$ that place no mass in regions where $p$ does, penalizing variance underestimation. For many use-cases, approximating distributions minimizing these divergences would be more useful.


Existing alpha-divergence minimization algorithms can be classified into two broad groups: biased methods \citep{renyiVI, regli2018alpha} and unbiased methods \citep{kuleshov2017neural, dieng2017chivi}. While some positive empirical results have been obtained using biased methods, it has been recently observed that, in problems with high dimensionality, these often fail to minimize the target alpha-divergence \cite{biasedalphafail}.\footnote{In fact, it was observed that, in high dimensions, biased methods often just minimize the typical VI objective, the KL divergence from $q_w$ to $p$, regardless of the target alpha-divergence chosen.} Thus, in this paper we turn our attention to unbiased methods. These attempt to minimize an alpha-divergence by running stochastic optimization algorithms with unbiased estimators of the divergence's gradient.

The major open question for unbiased methods concerns the difficulty of the optimization problem. Too much variability in the gradient estimator would force a very small step-size and thus a huge number of optimization steps. While some positive empirical results have been observed, authors have cautioned the the gradient estimators used can be temperamental \citep{kuleshov2017neural, dieng2017chivi}. It is currently unclear when these methods will succeed, and how their performance depends on $\alpha$ or the dimensionality of the problem.


We address this question. Informally, our main conclusion is that methods based on unbiased gradient estimates of an alpha-divergence will often require catastrophically large amounts of computation to scale to high dimensional models for any $\alpha \neq 0$. This is not always due to high variance gradients, but to an extremely low Signal-to-Noise ratio (SNR).

In Section \ref{sec:motiv} we present two gradient estimators used by unbiased methods, one obtained using reparameterization, and a novel one obtained by applying ``double'' reparameterization. An empirical evaluation shows that, even in simple scenarios, these methods do not seem scale to problems of moderately high dimension ($d \approx 100$), nor to moderately large values of $\alpha$ ($\alpha \approx 0.4$). Curiously, optimization fails even in cases where the gradient estimator has provably low variance. Instead, we propose that this failure is best explained by the estimator's (SNR), which is known to be related to optimization convergence (Section~\ref{sec:SNRSGD}).

The main contribution of this paper is a theoretical analysis of the gradient estimators' SNR, given in Section \ref{sec:NSR}. We analyze two representative scenarios: The first is when the target and approximating family are arbitrary fully-factorized distributions. The second is when the target and approximating family are both full-rank Gaussians. We give exact results and bounds for the SNR in these cases. We show that for \textit{any} $\alpha \neq 0$, under mild assumptions, the SNR decreases \textit{exponentially} in the dimensionality of the problem. Thus, even in these seemingly ``easy'' scenarios, unbiased methods will not scale to high dimensional problems. Finally, in Section \ref{sec:exps} we empirically confirm that the same phenomena seems to occur in real problems.

Our results are pessimistic. Ideally, one might hope to guarantee a good SNR under some favorable assumptions about the target. For example, one might hope that the SNR could be guaranteed to be reasonably large if the log-posterior obeyed some common regularity conditions, e.g. that it were fully-factorized, concave, strongly concave, Lipschitz smooth, Gaussian, or even a fully-factorized Gaussian. Our results show that, for general alpha-divergences, no such guarantee is possible.

We do not rule out the possibility of a good SNR guarantee under some other assumptions about the target. However, these assumptions would have to be \textit{stronger} than any of those listed above, and also \textit{prohibit} the cases we typically think of as easy, e.g. fully-factorized or Gaussian distributions. This suggests that a general-purpose algorithm for optimizing an alpha-divergence based on currently available unbiased gradient estimators may be unachievable.


\section{Preliminaries}

\textbf{Variational Inference} (VI) is an approximate inference algorithm that finds $w$ such that the approximating distribution $q_w(z)$ is close to some target distribution $p(z)$. This is usually done by minimizing $\KL(q_w||p)$. In most cases the gradient of this objective with respect to $w$ cannot be computed exactly. However, unbiased estimates are often available. Two popular alternatives are reparameterization \citep{doublystochastic_titsias, vaes_welling, rezende2014stochastic} and the ``sticking the landing'' (STL) estimator \citep{stickingthelanding}. Both require a mapping $\mathcal{T}_w$ that transforms a base density $q_0$ into $q_w$. Then, the estimators are computed as
\begin{equation} \begin{array}{rcl}
\gzrep(p, q_w, \epsilon) & = & \nabla_w \log \frac{q_w(\T)}{p(\T)}\\ \\
\gzdrep(p, q_w, \epsilon) & = & \left. \nabla_w \log \frac{q_v(\T)}{p(\T)} \right|_{v=w}, \label{eq:estz}
\end{array} \end{equation}
where $\epsilon \sim q_0(\epsilon)$. While both estimators have shown good empirical performance, it has been observed that $\gzdrep$ often leads to better results \citep{stickingthelanding}. Also, it has the desirable property of being deterministically zero at the optimum $p = q_w$, which is not true for the reparameterization estimator.

While minimizing $\KL(q_w||p)$ is computationally convenient, it may lead to distributions $q_w$ with undesirable properties. For instance, the resulting $q_w$ tends to underestimate the variance of $p$ \citep{minka2005divergence}. This is problematic, for instance, if $q_w$ will be used as a proposal distribution to estimate the expectation $\E_p f(z)$ with importance sampling. An under-dispersed distribution $q_w$ leads to importance weights with high variance \citep{mcbook}.
For reasons like this, recent work has developed methods to minimize other divergence measures \citep{powerEP, hernandez2016black, renyiVI, kuleshov2017neural, dieng2017chivi, regli2018alpha, wang2018variational, fdivVI, MarkovianSC}.


\textbf{Alpha-divergences} may be used as an objective for VI. The alpha-divergence between distributions $p$ and $q_w$ is given by
\small
\begin{equation}
D_\alpha(p||q_w) = \frac{1}{\alpha(\alpha - 1)} \E_{q_w}\left[ \left(\frac{p(z)}{q_w(z)}\right)^\alpha - 1 \right],\quad \alpha \in \mathbb{R} \setminus \{0, 1\}. \label{eq:obj_a}
\end{equation}
\normalsize
For $\alpha \gg 0$ the divergence will penalize distributions $q_w$ that place no mass in regions where $p$ does, penalizing under-dispersion. For instance, minimizing $D_\alpha(p||q_w)$ for $\alpha = 2$ is equivalent to minimizing the variance of the importance weights $p(z)/q_w(z)$. Also, alpha-divergences recover several well known divergences for different values of $\alpha$ \citep{amari2010families}: the $\chi^2$-divergence for $\alpha = 2$, and the Hellinger distance for $\alpha = 0.5$. For $\alpha = 0$ it is defined as the limit $\alpha \rightarrow 0$, which result in $\KL(q_w||p)$, the divergence typically used for VI. Algorithms to minimize alpha-divergences can be classified into two groups:

\textbf{Biased methods} \citep{powerEP, RWS, hernandez2016black, renyiVI, regli2018alpha}. These either minimize local surrogates and/or use biased gradient estimators. Therefore, the distributions $q_w$ returned by these methods are not minimizers of $D_\alpha(p||q_w)$. \citet{biasedalphafail} present an extensive empirical evaluation of methods based on biased gradients showing that, in high dimensions, these methods often fail to minimize the target alpha-divergence; they return suboptimal distributions $q_w$ that heavily underestimate the variance of $p$. This is problematic, since one of the goals of using alpha-divergences is to avoid under-dispersion.

\textbf{Unbiased methods} \citep{kuleshov2017neural, dieng2017chivi}. These methods were developed for the $\chi^2$-divergence. However, as mentioned by the authors, the same approach can be used for $\alpha \neq 2$. These methods attempt to minimize $D_\alpha(p||q_w)$ exactly by running SGD with an unbiased estimator of $\nabla_w D_\alpha(q_w||p)$. Under an appropriate choice for the step-size this is guaranteed to converge \citep{bottou_SGDreview}. As mentioned previously, these methods often present scalability issues, and it is unclear when they may be successfully applied.\footnote{While in their simulations \cite{dieng2017chivi} use a biased algorithm, their whole formulation was carried out for unbiased divergence minimization.}

The \textbf{signal-to-noise ratio} (SNR) can be used as a measure of an estimator's quality. We define the SNR of a random vector $X$ as
\small
\begin{equation}
\SNR[X] = \frac{\left \Vert \mathbb{E} X  \right\Vert^2}{\mathbb{E} \Vert X \Vert^2} = \frac{\left \Vert \mathbb{E} X \right\Vert^2}{\left \Vert \mathbb{E} X \right\Vert^2 + \mathbb{V} \Vert X \Vert}, \label{eq:NSR}
\end{equation}
\normalsize
which is always less than or equal to one, with equality holding if and only if the variance of the random variable is zero. If some of the expectations do not exist, the SNR is not defined. We will use this quantity to evaluate the quality of an estimator. This has been previously done in the context of importance weighted auto-encoders \citep{rainforth2018tighter}.


\section{Gradient Estimators}

\label{sec:motiv}

\begin{figure*}[t]
  \centering
  \includegraphics[scale=0.24, trim = {0 2.3cm 0 0}, clip]{./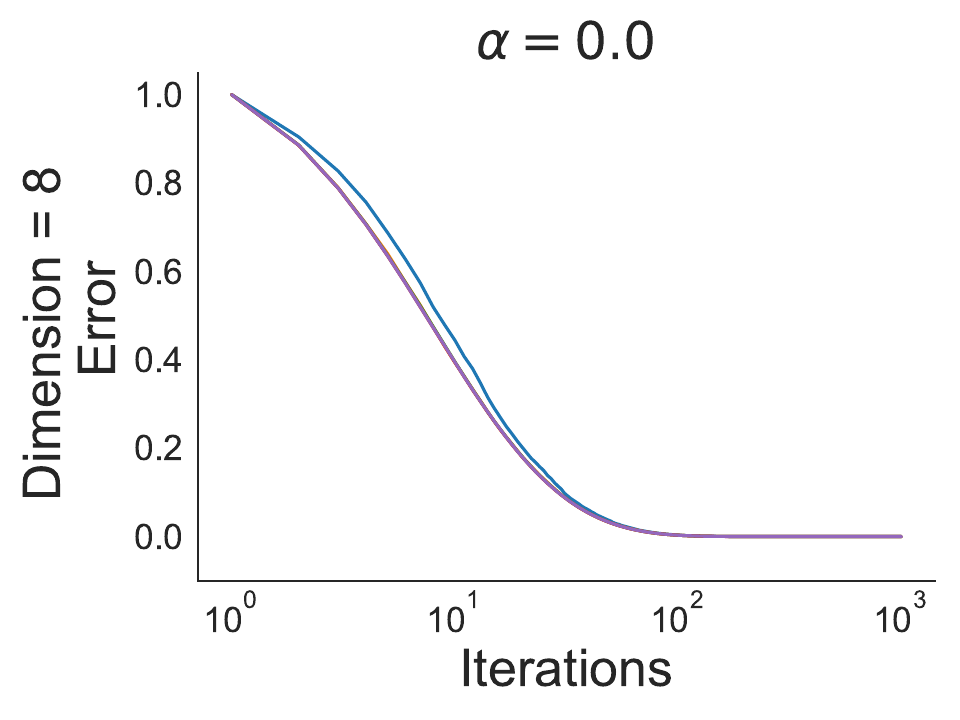}\hfill
  \includegraphics[scale=0.24, trim = {2.31cm 2.3cm 0 0}, clip]{./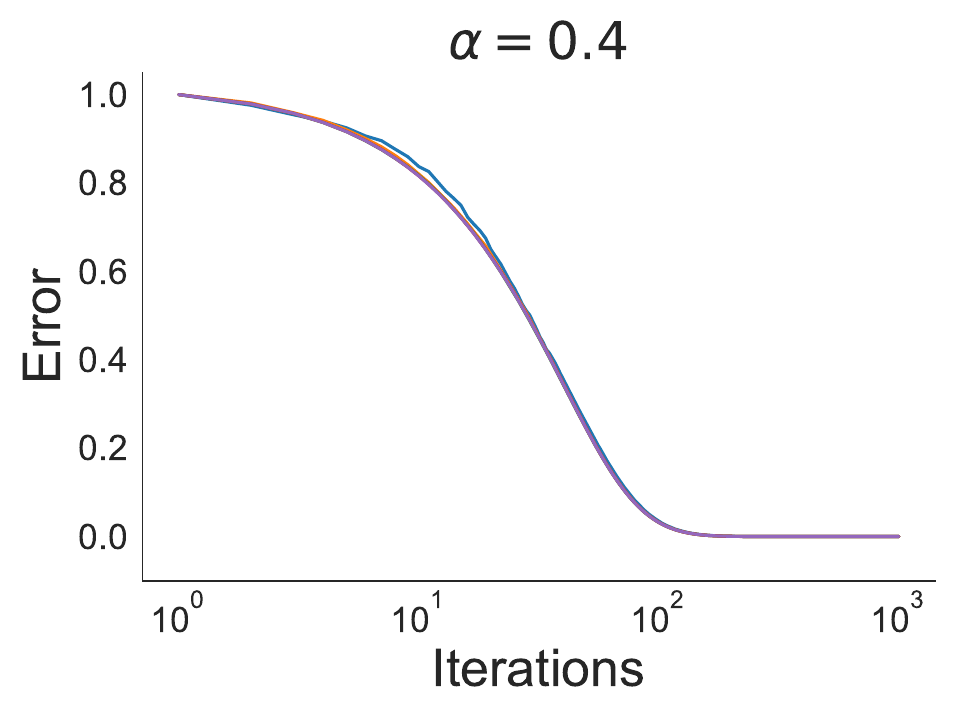}\hfill
  \includegraphics[scale=0.24, trim = {2.31cm 2.3cm 0 0}, clip]{./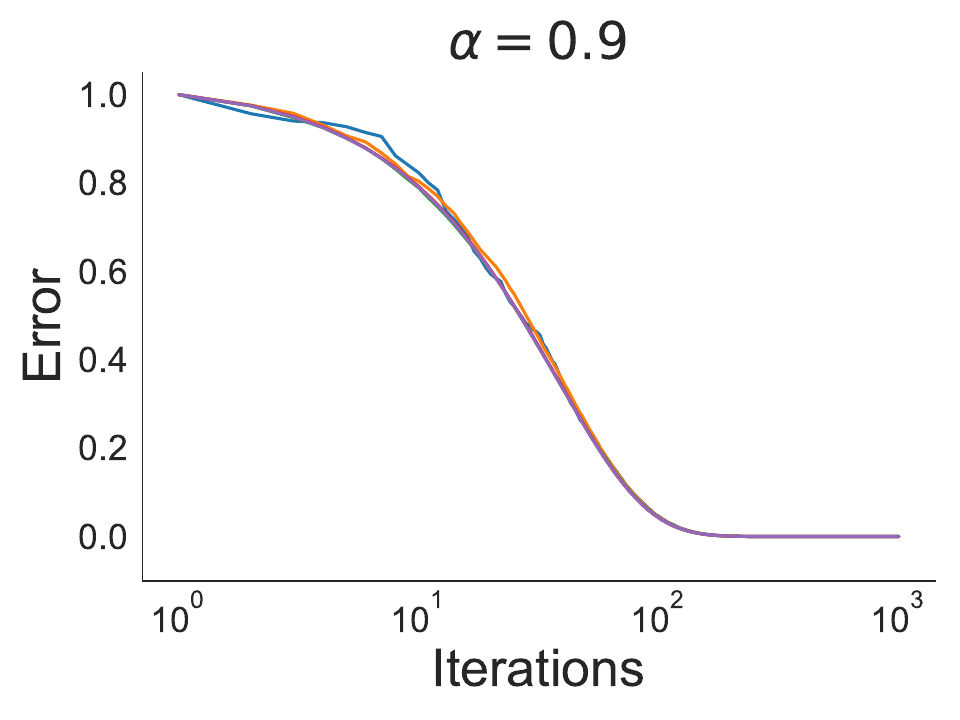}\hfill
  \includegraphics[scale=0.24, trim = {2.31cm 2.3cm 0 0}, clip]{./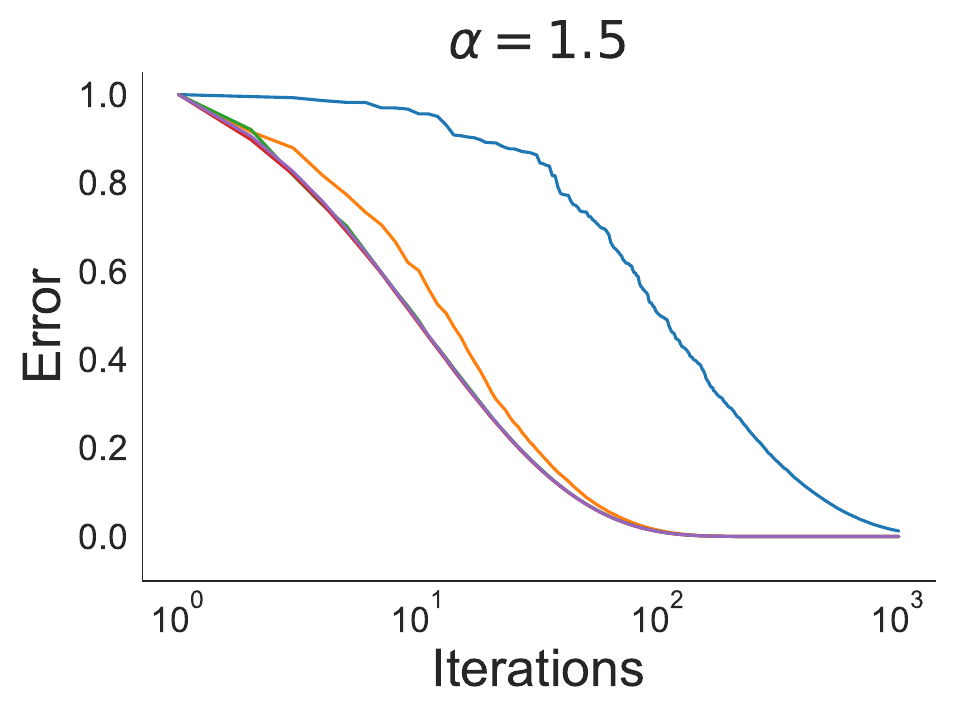}

  \vspace{0.2cm}

  \includegraphics[scale=0.24, trim = {0 2.3cm 0 1.05cm}, clip]{./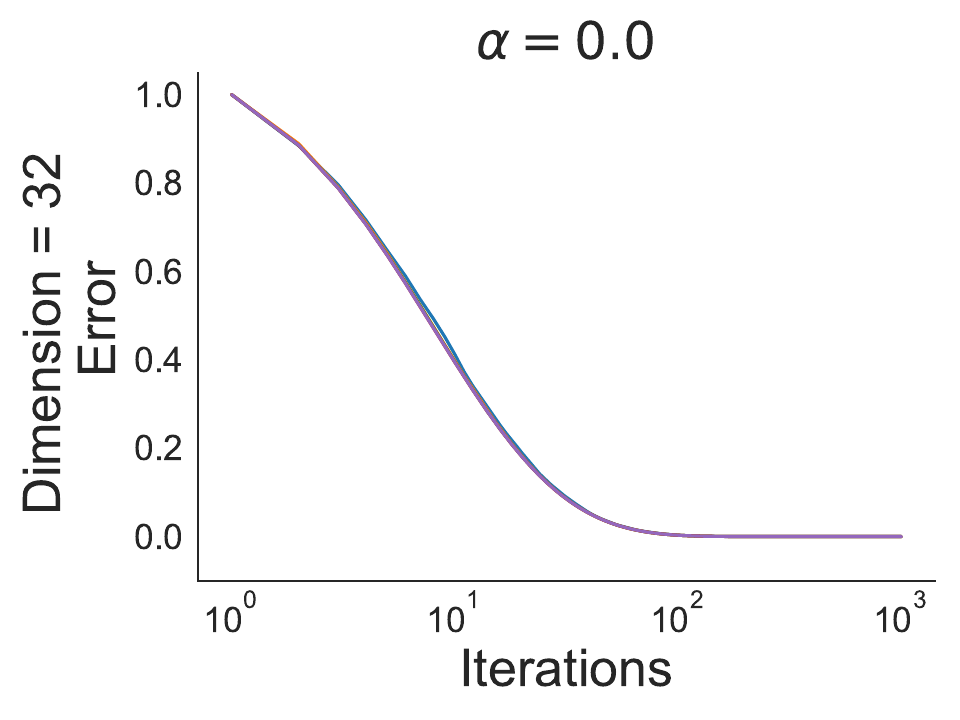}\hfill
  \includegraphics[scale=0.24, trim = {2.31cm 2.3cm 0 1.05cm}, clip]{./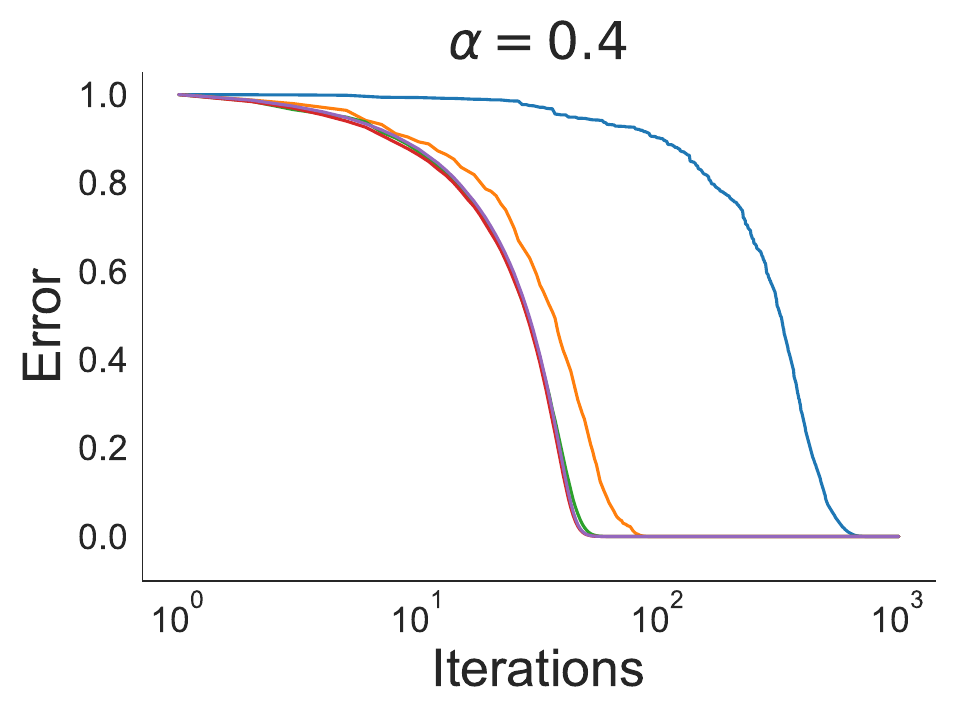}\hfill
  \includegraphics[scale=0.24, trim = {2.31cm 2.3cm 0 1.05cm}, clip]{./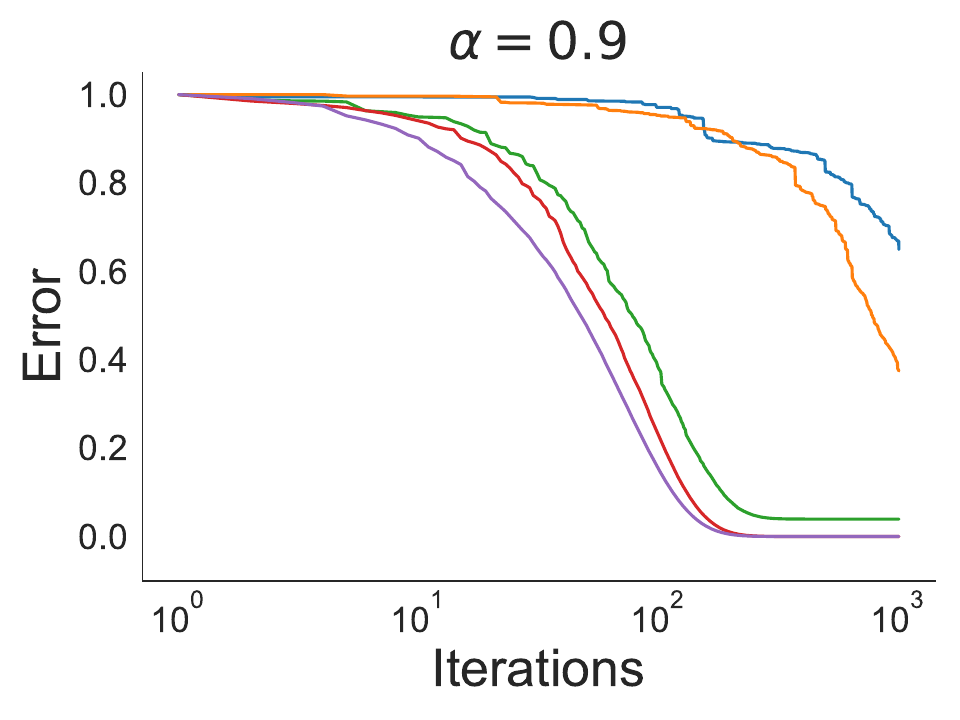}\hfill
  \includegraphics[scale=0.24, trim = {2.31cm 2.3cm 0 1.05cm}, clip]{./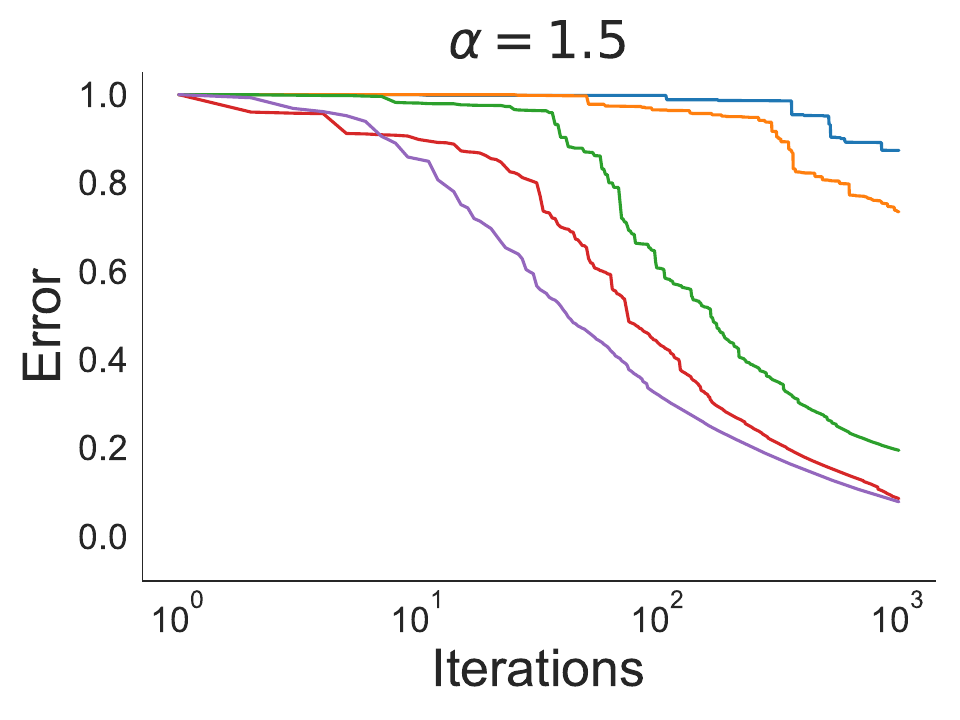}

  \vspace{0.2cm}

  \includegraphics[scale=0.24, trim = {0 0 0 1.1cm}, clip]{./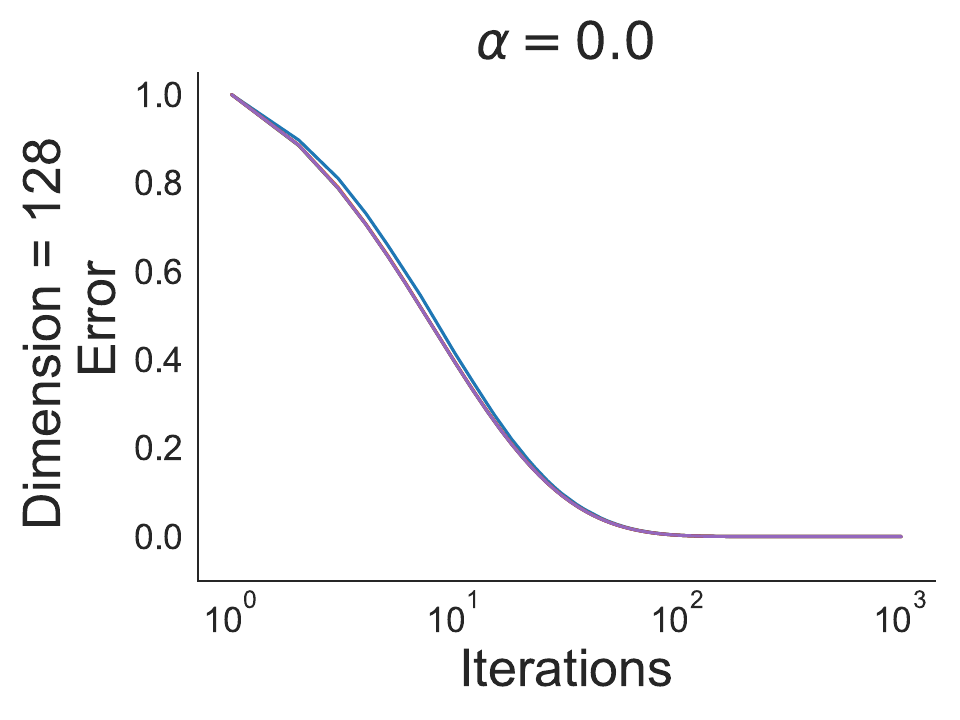}\hfill
  \includegraphics[scale=0.24, trim = {2.31cm 0 0 1.1cm}, clip]{./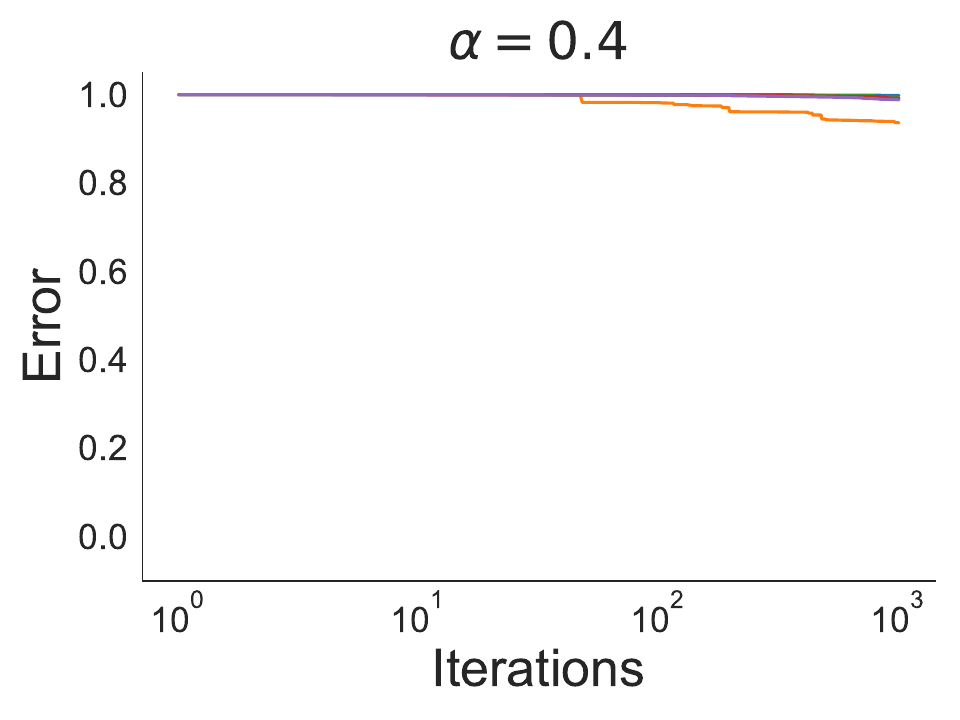}\hfill
  \includegraphics[scale=0.24, trim = {2.31cm 0 0 1.1cm}, clip]{./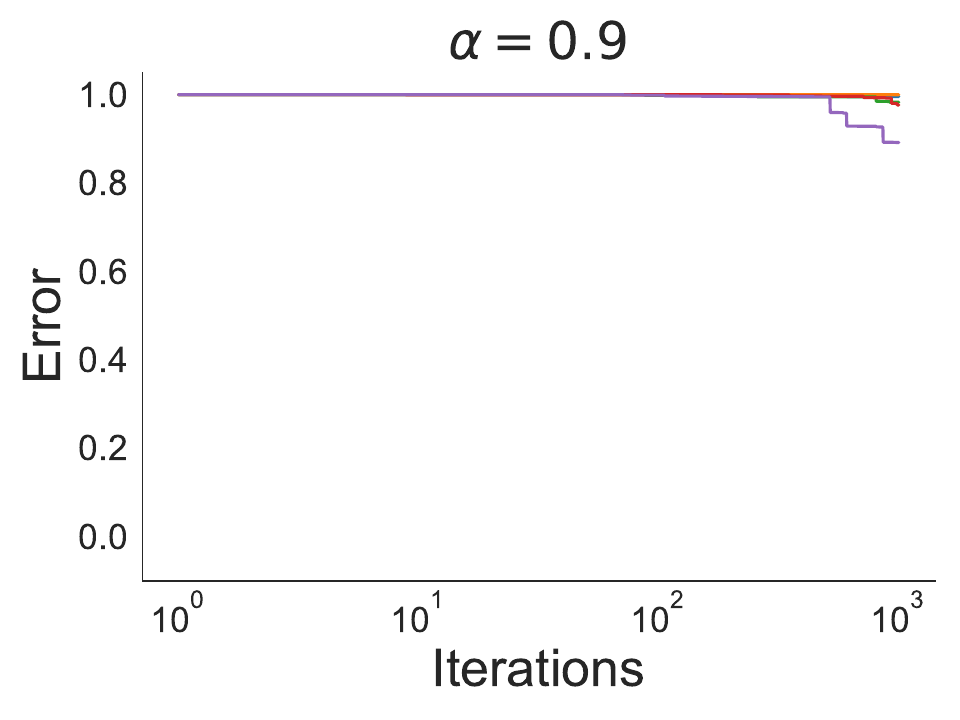}\hfill
  \includegraphics[scale=0.24, trim = {2.31cm 0 0 1.1cm}, clip]{./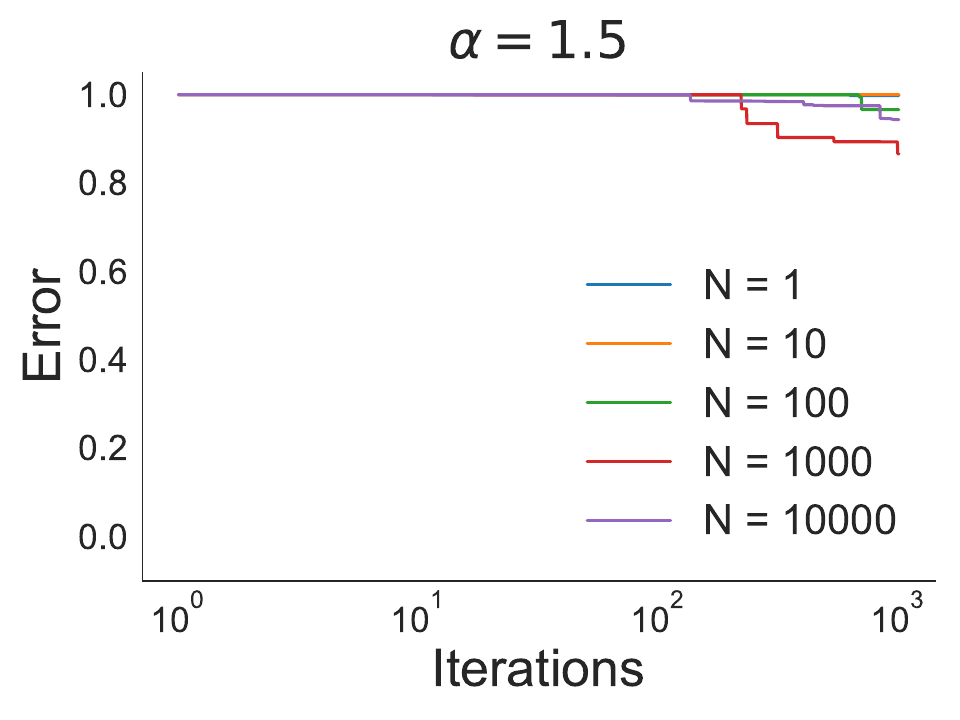}
  
  \caption{\textbf{Running SGD with unbiased gradient estimates becomes inefficient for moderately large dimensions and $\mathbf{\alpha}$.} Each plot shows the optimization results for each pair $(d, \alpha)$ for all values of $N$ considered. Plots show ``Error vs. Iteration'', where error is the normalized distance between the scale parameters of $p$ and $q_w$, computed as $(1/ d) \sum_i (\sigma_{qi} - 1)^2$.}
  \label{fig:noopt_gauss}
\end{figure*}

We present two unbiased estimators for $\nabla_w D_\alpha(p||q_w)$. The first one, obtained via reparameterization, is given by \citep{dieng2017chivi}
\small
\begin{equation}
    \grep(p, q_w, \epsilon) =
    \begin{cases}
    \displaystyle \frac{1}{\alpha^2 - \alpha} \nabla_w \left( \frac{p(\mathcal{T}_w(\epsilon))}{q_w(\mathcal{T}_w(\epsilon))} \right)^\alpha &
    \mbox{ if } \alpha \not\in \{0, 1\} \vspace{.25cm}\\
    \displaystyle \nabla_w \log \frac{q_w(\T)}{p(\T)} & \mbox{ if } 
    \alpha \rightarrow 0,
    \end{cases}
    \label{eq:grep}
\end{equation}
\normalsize
where $\epsilon \sim q_0(\epsilon)$. For $\alpha \rightarrow 0$ this estimator recovers $\gzrep$ from eq.~\ref{eq:estz}. As presented, this estimator is not defined for $\alpha \rightarrow 1$. A second unbiased estimator, obtained by applying reparameterization twice, is given by
\small
\begin{equation}
    \gdrep(p, q_w, \epsilon) =
    \begin{cases}
    \displaystyle \left . -\frac{1}{\alpha} \nabla_w \left( \frac{p(\mathcal{T}_w(\epsilon))}{q_v(\mathcal{T}_w(\epsilon))} \right)^\alpha \right|_{v=w} &
    \mbox{ if } \alpha \neq 0 \vspace{.25cm}\\
    \displaystyle \left . \nabla_w \log \frac{q_v(\T)}{p(\T)}  \right|_{v=w} & \mbox{ if } 
    \alpha \rightarrow 0.
    \end{cases}
    \label{eq:gdrep}
\end{equation}
\normalsize
This estimator is novel. We show its derivation in Appendix \ref{app:estimators}. For $\alpha\rightarrow 0$ it recovers $\gzdrep$ from eq.~\ref{eq:estz}. For $\alpha \neq 0$ it is derived by applying the reparameterization trick twice. This is similar to the derivation for the ``doubly-reparameterized'' gradient estimator \citep{doublyrep} for importance weighted auto-encoders \citep{IWVAE}. A third unbiased estimator may be obtained via the score function method \citep{williams_reinforce}. In this work we do not focus on this one since it has been consistently observed that reparameterization estimators outperform their score function counterparts.

In practice, we observed that $\gdrep$ often works better than $\grep$ (Appendix \ref{app:compare} shows an empirical comparison). This may not be surprising since (i) $\gdrep$ is a natural extension of $\gzdrep$, which often works better than $\gzrep$ when $\alpha \rightarrow 0$; (ii) $\gdrep$ has the property of being deterministically zero at the optimum $p = q_w$, which is not true for $\grep$; (iii) The use of double reparameterization has led to significant improvements over ``plain'' reparameterization for multi-samples objectives \citep{doublyrep}.

\textbf{Empirical evaluation.} We now present empirical results that motivate this work. These demonstrate two important phenomena. First, for larger $\alpha$, optimization scales poorly to high dimensions. Understanding this is the central goal of this paper. Second, this may happen even when the gradient estimator's variance is very small. This explains why we study SNR rather than ``raw'' variance.

We set $p$ to be a standard Gaussian in $d$ dimensions and $q_w$ to be a mean-zero fully-factorized Gaussian. The parameters of $q_w$ are $w = \sigma \in \R^d$, representing the standard deviation of each dimension of $q_w$. We initialize $\sigma_i = 2$, and optimize $D_\alpha(p||q_w)$ from eq.~\ref{eq:obj_a}. We do so by running SGD with the gradient estimator $\gdrep$ for 1000 steps. (Appendix \ref{app:compare} shows results using $\grep$, which are worse.)

We perform this optimization for three different dimensionalities, $d \in \{8, 32, 128\}$, for $\alpha \in \{0, 0.4, 0.9, 1.5\}$, and for gradient estimators obtained averaging $N$ samples, for $N \in \{1,10,10^2,10^3,10^4\}$. For each triplet $(d, \alpha, N)$ we tuned the step-size; we ran simulations for all step-sizes in the set $\{10^i\}_{i=-7}^7$ and selected the one that lead to the best final performance. All results are averages over 15 simulations.



Fig.~\ref{fig:noopt_gauss} shows the results. Optimization succeeds when the dimensionality $d$ is small or $\alpha$ is small. Indeed, for $\alpha \rightarrow 0$, optimization converges in approximately $30$ steps, regardless of the dimensionality and the number of samples used to estimate each gradient. However, when $\alpha$ is larger, increasing $d$ seems to cause major difficulties. For instance, for $d = 32$ and $\alpha = 1.5$, optimization does not meaningfully converge within 1000 steps, even using $10^4$ samples to estimate gradients. Furthermore, for $d = 128$ and $\alpha \neq 0$ optimization barely makes any progress regardless of the number of samples used to estimate gradients.

Optimization results using Adam \cite{adam} are shown  in Fig.~\ref{fig:noopt_gauss_adam} (Appendix~\ref{app:adam}). The same effect is observed; optimization converges properly when the dimensionality or $\alpha$ are low, but fails in high dimensions for larger values of $\alpha$. (We include a brief discussion of Adam's performance in Section~\ref{sec:discussion}.)

For $\alpha = 2$, previous work has attributed the scaling issues of unbiased methods to the use of gradient estimates with high variance \citep{kuleshov2017neural, dieng2017chivi}. While correct in spirit, care is needed. Some of the failures in Fig.~\ref{fig:noopt_gauss} happen despite {\em low}-variance gradients, because the true gradient is even smaller. Take $d = 128$, and let $\sigma_i = \sigma$ for all $i$, so that $q_w$ is isotropic. Fig.~\ref{fig:varNSRmotiv} (left) shows the variance of the gradient estimator for different values of $\sigma$ and $\alpha$. We see that increasing $\alpha$ sometimes decreases gradient variance. Instead, we attribute optimization's scaling issues to the estimator's SNR. As shown in Fig.~\ref{fig:varNSRmotiv} (center) we see that this decreases \textit{very} rapidly with higher $\alpha$.\footnote{In Appendix \ref{app:lowvar} we give an upper bound for the variance, and show that the variance of $\gdrep$ for $\alpha = 0.4$ becomes ``small'' for high dimensional problems. Surprisingly, the variance of each component of the estimator decreases as the dimension increases.}

\begin{figure*}[t]
  \centering
  \includegraphics[scale=0.25]{./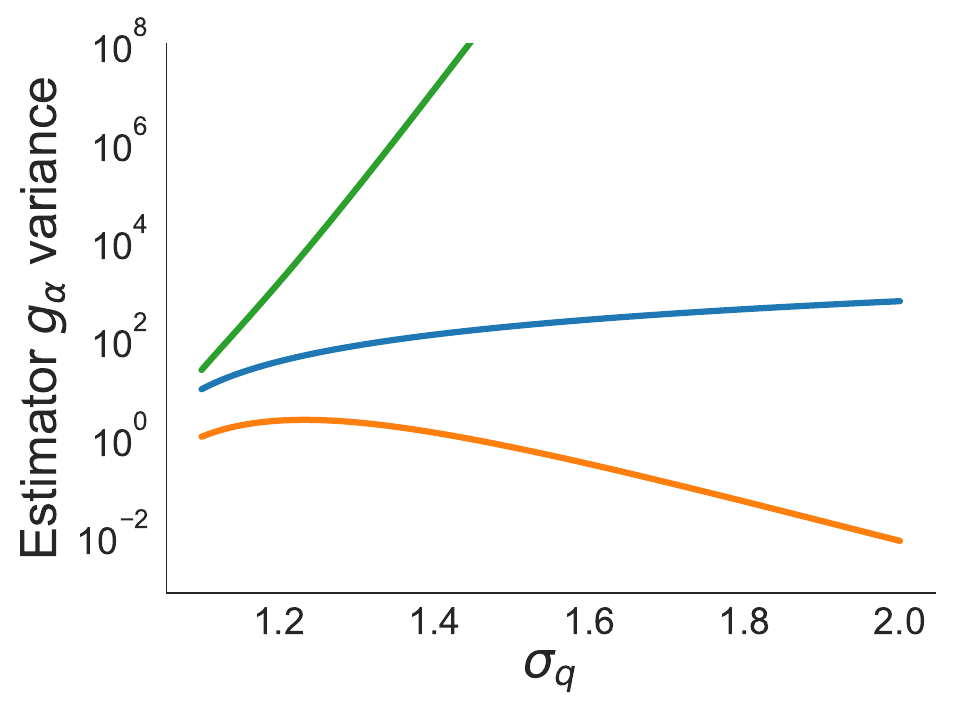}
  \includegraphics[scale=0.25]{./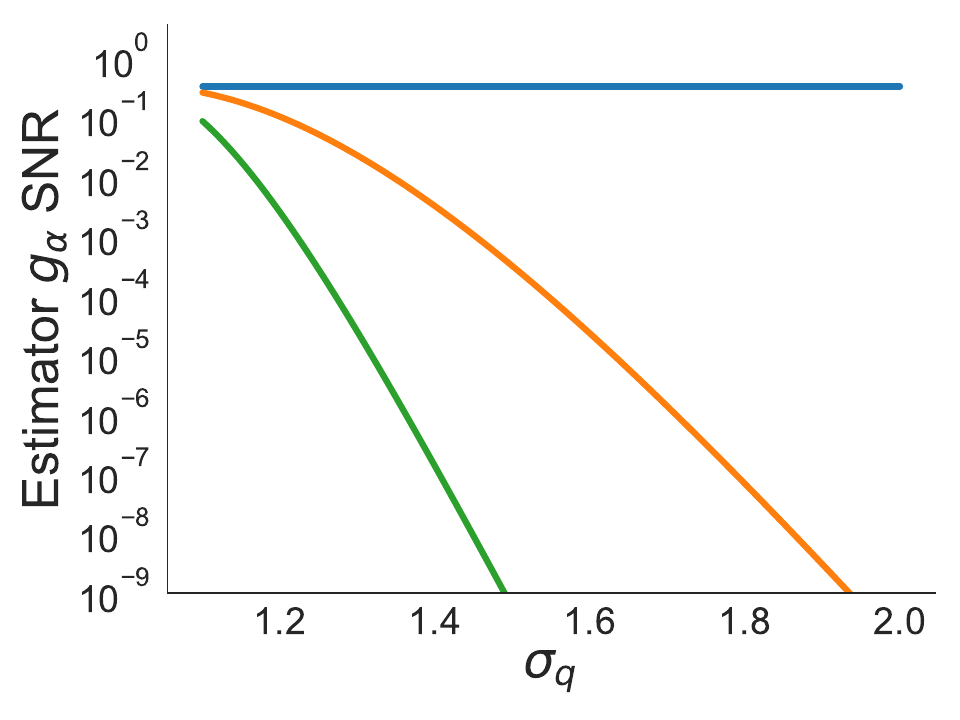}
  \includegraphics[scale=0.25]{./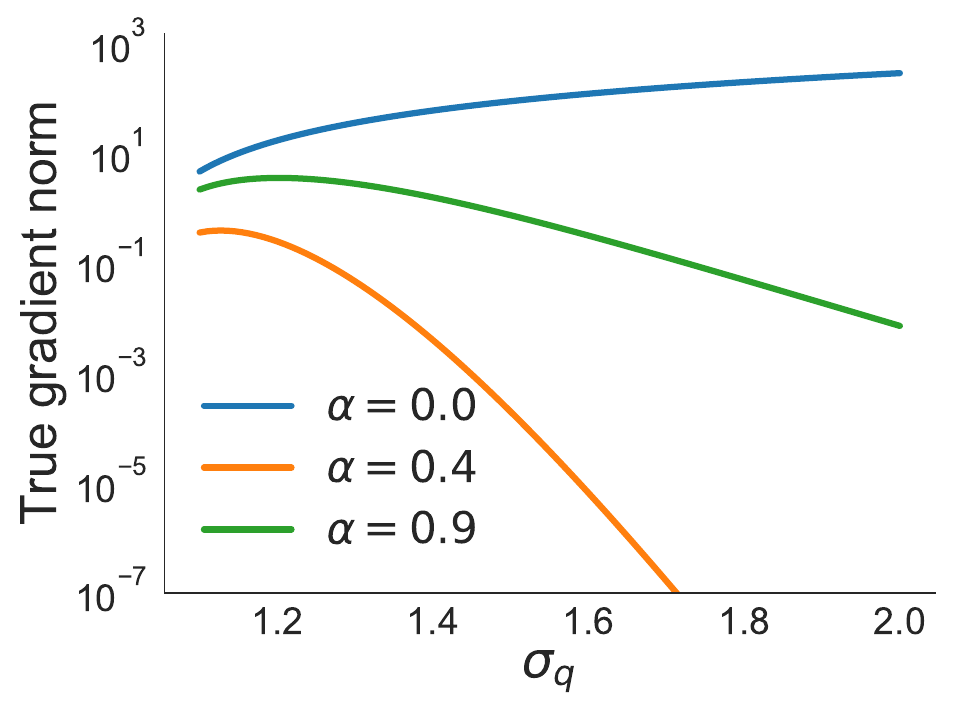}
  \caption{\textbf{Optimization difficulty is explained not by high variance but by low SNR.} $p$ is a standard Gaussian in $d=128$ dimensions and $q_w$ is an isotropic Gaussian with standard deviation $\sigma_q$. Left: Variance of gradient estimator. Center: SNR. Right: Squared norm of the true gradient, $\E[g]$.}
  \label{fig:varNSRmotiv}
\end{figure*}

\section{SNR Analysis} \label{sec:NSR}

In this section we present a detailed analysis of the SNR of gradient estimators for two general and representative scenarios. First, we consider the case where $p$ and $q_w$ are arbitrary fully-factorized distributions. Second, we consider the case where $p$ and $q_w$ are Gaussians with an arbitrary full-rank covariance matrices. This case is particularly relevant, since Gaussians are a good approximation of a huge range of posteriors (Bayesian central limit theorem). We show that, in both cases, for $\alpha \neq 0$ the gradient estimator's SNR becomes very small for problems with high dimensionality $d$. In fact, we present examples for which the SNR decreases \textit{exponentially} in $d$. In contrast to this, we show that, for $\alpha \rightarrow 0$ (typical VI), the SNR of the estimator decreases at most as $1/d$, and does not depend on $d$ if both $p$ and $q_w$ are factorized. Intuitively, a low SNR means that the level of noise present in the estimator is considerably larger than the ``learning signal'', which difficults optimization.

We proceed in a similar way for all scenarios considered. We first present a rigorous result, and then give a simple and intuitive interpretation and examples.

\subsection{Fully-Factorized Distributions}

We begin by studying the case where both $p$ and $q_w$ are arbitrary fully-factorized distributions. Of course, if we knew that $p$ is fully factorized, it would make sense to perform inference on each component separately. The point of examining this case is the insight it gives us into how gradient estimators behave as dimensionality changes. For simplicity, we assume that there is one parameter $w_i$ for each coordinate $z_i$. This assumption is used to simplify notation and can be removed, as long as each component is determined by disjoint sets of parameters.

\begin{restatable}{thm}{fullyfactgeneral}
\label{thm:NSR-gff}
Let $p(z)=\prod_{i=1}^d p_i(z_i)$\normalsize, $q_w(z)=\prod_{i=1}^d q_{w_i}(z_i)$\normalsize, and $\ga \in \{\grep, \gdrep\}$\normalsize. 

If $p_j \neq q_{w_j}$ and $g_\alpha$ has finite variance, the SNR of the $j$-th component of the estimator, $\gaj$, is given by
\small
\begin{equation}
\hspace{-1.7cm}\SNR[\gaj(p, q_w, \epsilon)] = \SNR\left[ \ga(p_j, q_{w_j}, \epsilon_j) \right]\label{eq:NSR-gff-z}
\end{equation}
for $\alpha \rightarrow 0$, and by
\begin{multline}
\SNR[\gaj(p, q_w, \epsilon)] \label{eq:NSR-gff-nz} = \\
\SNR\left[ \ga(p_j, q_{w_j}, \epsilon_j) \right] \prod^d_{\substack{i=1 \\ i\neq j}} \SNR\left[ \tilde D_\alpha(p_i, q_{w_i}, \epsilon_i) \right]
\end{multline}
\normalsize
for $\alpha \neq 0$, where 
\small$\tilde D_\alpha(p_i, q_{w_i}, \epsilon_i) = \left( \frac{p_i(\mathcal{T}_{w_i}(\epsilon_i))}{q_{w_i}(\mathcal{T}_{w_i}(\epsilon_i))} \right)^\alpha$\normalsize\
is an unbiased estimator of \small$\alpha (\alpha - 1) D_\alpha(p_i||q_{w_i}) + 1$\normalsize.

If $p_j = q_{w_j}$, the SNR is $0$ for $\grep$ and is undefined for $\gdrep$ (because $\gdrep$ is deterministically zero).
\end{restatable}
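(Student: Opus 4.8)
The plan is to show that, in the fully-factorized setting, the $j$-th component $\gaj(p,q_w,\epsilon)$ equals, \emph{as a random variable}, a product of independent one-dimensional pieces, and then to exploit the fact that the SNR is multiplicative over independent factors. Throughout I will use the natural coordinate-wise reparameterization for a factorized $q_w$, namely $\mathcal{T}_w(\epsilon)=(\mathcal{T}_{w_1}(\epsilon_1),\dots,\mathcal{T}_{w_d}(\epsilon_d))$ with $\epsilon=(\epsilon_1,\dots,\epsilon_d)$ having independent components $\epsilon_i\sim q_0(\epsilon_i)$, which is what the notation $\mathcal{T}_{w_i}(\epsilon_i)$ in the statement refers to. \emph{Step 1 (factorization of the estimator).} For $\alpha\neq 0$ both $\grep$ and $\gdrep$ are, up to a constant, the $w$-gradient of a power of a density ratio which, by independence, factors as $\prod_{i=1}^d R_i$ with $R_i := \tilde D_\alpha(p_i, q_{w_i}, \epsilon_i)$; for $\gdrep$ the denominator carries the auxiliary variable $v$, but since $v$ is held fixed while differentiating and then set to $w$, each factor with $i\neq j$ contributes no derivative and reduces to $R_i$ afterwards. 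Because $w_j$ affects only the $j$-th factor, the product rule gives $\gaj(p,q_w,\epsilon) = \ga(p_j,q_{w_j},\epsilon_j)\prod_{i\neq j} R_i$, where $\ga(p_j,q_{w_j},\epsilon_j)$ is precisely the one-dimensional estimator applied to coordinate $j$. For $\alpha\to 0$ the logarithm turns the product into a sum and $\nabla_{w_j}$ removes every term but the $j$-th, so $\gaj(p,q_w,\epsilon)=\ga(p_j,q_{w_j},\epsilon_j)$, which is exactly eq.~\ref{eq:NSR-gff-z}.

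\emph{Step 2 (multiplicativity of the SNR).} I will use the elementary fact that for mutually independent scalar random variables $X,Y_1,\dots,Y_m$ with finite second moments one has $\E[X\prod_k Y_k]=\E X\prod_k\E Y_k$ and $\E[(X\prod_k Y_k)^2]=\E[X^2]\prod_k\E[Y_k^2]$, hence $\SNR[X\prod_k Y_k]=\SNR[X]\prod_k\SNR[Y_k]$. Since $\epsilon_1,\dots,\epsilon_d$ are independent, the factors $\ga(p_j,q_{w_j},\epsilon_j)$ and $R_i$ ($i\neq j$) are mutually independent, so applying this with $X=\ga(p_j,q_{w_j},\epsilon_j)$ and $\{Y_k\}=\{R_i:i\neq j\}$ yields eq.~\ref{eq:NSR-gff-nz}. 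The stated unbiasedness of $\tilde D_\alpha$ follows from the change of variables $z_i=\mathcal{T}_{w_i}(\epsilon_i)$ applied to eq.~\ref{eq:obj_a}: $\E R_i=\E_{q_{w_i}}[(p_i/q_{w_i})^\alpha]=\alpha(\alpha-1)D_\alpha(p_i\|q_{w_i})+1>0$. Combined with the finite-variance hypothesis on $\ga$, this makes each of $\E R_i$, $\E[R_i^2]$, and $\E[\ga(p_j,q_{w_j},\epsilon_j)^2]$ finite and strictly positive (the last one because, when $p_j\neq q_{w_j}$, the one-dimensional estimator is not almost surely zero), so all the SNRs appearing on the right-hand side of eq.~\ref{eq:NSR-gff-nz} are well-defined.

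\emph{Step 3 (the degenerate case $p_j=q_{w_j}$).} For $\gdrep$: if $p_j$ and $q_{w_j}$ coincide as densities, then $q_{v_j}$ at $v_j=w_j$ equals $p_j$, so $p_j(\mathcal{T}_{w_j}(\epsilon_j))/q_{v_j}(\mathcal{T}_{w_j}(\epsilon_j))\equiv 1$ as a function of $w_j$ for every $\epsilon_j$; its $w_j$-gradient vanishes identically, so $\gdrep(p_j,q_{w_j},\epsilon_j)\equiv 0$, hence $\gaj\equiv 0$ and $\SNR[\gaj]=0/0$ is undefined. For $\grep$: interchanging $\E$ and $\nabla_{w_j}$ gives $\E[\grep(p_j,q_{w_j},\epsilon_j)]=\nabla_{w_j}D_\alpha(p_j\|q_{w_j})=0$ since $D_\alpha(p_j\|\cdot)\ge 0$ with equality at $q_{w_j}=p_j$, so $\E[\gaj]=0$; but $\grep$, unlike $\gdrep$, does not stop the gradient through the denominator, so $\grep(p_j,q_{w_j},\epsilon_j)$ coincides up to a constant with a reparameterization-type gradient of $\log(p_j/q_{w_j})$, which is not almost surely constant, giving $\E\|\gaj\|^2>0$ and therefore $\SNR[\gaj]=0$.

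I expect the main obstacle to be not the product-rule factorization, which is routine, but (i) the bookkeeping for $\gdrep$ — making sure the stop-gradient on $v$ genuinely leaves the coordinatewise factorization intact — and (ii) the moment conditions: checking finiteness and strict positivity of $\E R_i$, $\E[R_i^2]$ and $\E[\ga(p_j,q_{w_j},\epsilon_j)^2]$ under only the stated hypotheses (``$\ga$ has finite variance'' and $p_j\neq q_{w_j}$), together with the claim in Step 3 that for $\grep$ at $p_j=q_{w_j}$ the estimator has strictly positive second moment, so that its SNR is $0$ rather than undefined.
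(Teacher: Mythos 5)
Your proposal is correct and follows essentially the same route as the paper's proof: factor the estimator coordinatewise into $\ga(p_j,q_{w_j},\epsilon_j)\prod_{i\neq j}\tilde D_\alpha(p_i,q_{w_i},\epsilon_i)$ and use independence of the $\epsilon_i$ to split the first and second moments into products, which is exactly your ``SNR is multiplicative over independent factors'' step written out explicitly. Your Step 3 is in fact slightly more thorough than the paper's, which only verifies the degenerate case $p_j=q_{w_j}$ for $\gdrep$ and leaves the $\grep$ claim (zero mean but positive second moment) unargued.
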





To clarify the Theorem's notation, $\gaj(p,q_w,\epsilon)$ is the $j$-th component of the estimator $\ga$ for the vector \small$\nabla_w D_\alpha(p\Vert q_w)$\normalsize. On the other hand, $\ga(p_j,q_{w_j},\epsilon_j)$ is the estimator for the scalar quantity \small$\nabla_{w_j} D_\alpha(p_j \Vert q_{w_j})$\normalsize, the derivative (with respect to $w_j$) of the divergence between the one dimensional distributions $p_j$ and $q_{w_j}$.

What does the theorem say? If $\alpha \rightarrow 0$ (eq.~\ref{eq:NSR-gff-z}), the SNR of each component of the estimator consists on a single term, which is the same as if inference were performed on each dimension of $p$ separately. That is, the SNR of the estimator's $j$-th component only depends on $p_j$ and $q_{w_j}$, and is not affected by the dimensionality of the problem $d$ in any way. In contrast, if $\alpha \neq 0$ (eq.~\ref{eq:NSR-gff-nz}), there are $d-1$ additional terms. These determine how the SNR scales with dimensionality. Since these terms can be expressed as the SNR of an estimator for {\small $D_\alpha(p_i\Vert q_{w_i})$} (for each $i\neq j$, up to scaling constants), each of them is at most one, with equality only if $p_i = q_{w_i}$. Thus, for $\alpha \neq 0$, discrepancies in several dimensions of $p$ and $q_w$ accumulate (as products of terms strictly smaller than one), leading to a large detrimental effect on the estimator's SNR. Intuitively, the larger the dimensionality of the problem $d$, the worse this effect becomes. 

An example that clearly illustrates this curse of dimensionality is given by the case where $p$ and $q$ are isotropic distributions. Suppose each component of $p$ and $q_w$ are the same, that is, $p_i = p_1$ and $q_{w_i} = q_{w_1}$ for all $i$. Following eq.~\ref{eq:NSR-gff-nz}, if $\alpha \neq 0$, the SNR of the $j$-th component of the gradient estimator is given by

\small
\begin{equation}\SNR\left[ \ga(p_1, q_{w_1}, \epsilon_1) \right] \left ( \SNR\left[ \tilde D_\alpha(p_1, q_{w_1}, \epsilon_1) \right]\right)^{d-1},\end{equation}
\normalsize
which worsens \textit{exponentially} in $d$. In contrast, if $\alpha \rightarrow 0$, the SNR does not depend on $d$ at all (eq.~\ref{eq:NSR-gff-z}).


\subsubsection{Fully-Factorized Gaussians}

As a second example of fully-factorized distributions, we consider the case where $p$ and $q_w$ are $d$-dimensional diagonal Gaussians with mean zero. The parameters are the standard deviations of each component of $q_w$, i.e. $w = \{\sigma_{q1}, \hdots, \sigma_{qd}\}$. In this case we can compute each term in eq.~\ref{eq:NSR-gff-nz} in closed form.

\begin{restatable}{corollary}{fullyfactgauss}
\label{thm:NSR-ffg}
Let $p$ and $q$ be two fully-factorized $d$-dimensional Gaussian distributions with mean zero and variances $\sigma_{pi}^2$ and $\sigma_{qi}^2$. Let $\R_i = \nicefrac{\sigma_{qi}^2}{\sigma_{pi}^2}$ and $\ga = \gdrep$.

If $\R_j \neq 1$ and $1 + 2\alpha(\R_i - 1) > 0$ for all $i$,
	\small\begin{multline}
	\SNR[\gaj(p, q_w, \epsilon)] = \\ \underbrace{\frac{1 + 2\alpha(\R_j - 1)}{3} f(\R_j, \alpha)^3}_{\scriptsize \SNR\left[ \ga(p_j, q_{w_j}, \epsilon_j) \right]\small}
	\,\,\prod^d_{\substack{i=1 \\ i\neq j}}
	\underbrace{f(\R_i, \alpha) \vphantom{\frac{1 + 2\alpha \R_j - 2\alpha}{3} f(\R_j, \alpha)^3}}_{\scriptsize \SNR\left[ \tilde D_\alpha(p_i, q_{w_i}, \epsilon_i) \right] \small}\hspace{-0.7cm} \label{eq:NSR-ffg}
	\end{multline}\normalsize
	where \small
	\begin{equation}
	f(\R, \alpha) = \frac{1}{\sqrt{1 + \alpha^2 \frac{(\R - 1)^2}{1 + 2\alpha(\R - 1)}}}. \label{eq:fc2}
	\end{equation}
	\normalsize
Otherwise, the SNR is not defined. If $\R_j = 1$ this is because the estimator $\gaj$ is 0 deterministically. If $1 + 2\alpha(\R_i - 1) \leq 0$ for any $i$, this is because the estimator has infinite variance.
\end{restatable}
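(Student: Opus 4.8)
The plan is to specialize Theorem~\ref{thm:NSR-gff} to mean-zero Gaussians so that everything reduces to one-dimensional Gaussian integrals. For $\alpha\neq 0$ that theorem already gives the factorization $\SNR[\gaj(p,q_w,\epsilon)] = \SNR[\ga(p_j,q_{w_j},\epsilon_j)]\prod_{i\neq j}\SNR[\tilde D_\alpha(p_i,q_{w_i},\epsilon_i)]$, so it suffices to evaluate the two kinds of one-dimensional SNR appearing there, using the reparameterization map $\mathcal{T}_{w_i}(\epsilon_i)=\sigma_{qi}\epsilon_i$ with $\epsilon_i\sim N(0,1)$. The backbone of every computation is the elementary identity $\E_{\epsilon\sim N(0,1)}[\epsilon^{2k}e^{-b\epsilon^2}]$, which equals $(1+2b)^{-1/2}$, $(1+2b)^{-3/2}$, $3(1+2b)^{-5/2}$ for $k=0,1,2$ when $1+2b>0$ and diverges otherwise; the cases $k=1,2$ follow from $k=0$ by differentiating under the integral in $b$.

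For the $\tilde D_\alpha$ factors, evaluating the ratio of two mean-zero Gaussian densities along the reparameterization path gives $p_i(\mathcal{T}_{w_i}(\epsilon_i))/q_{w_i}(\mathcal{T}_{w_i}(\epsilon_i)) = \sqrt{\R_i}\,\exp(-\tfrac{1}{2}(\R_i-1)\epsilon_i^2)$, hence $\tilde D_\alpha(p_i,q_{w_i},\epsilon_i)=\R_i^{\alpha/2}\exp(-\tfrac{\alpha}{2}(\R_i-1)\epsilon_i^2)$. Its first and second moments are then $\R_i^{\alpha/2}(1+\alpha(\R_i-1))^{-1/2}$ and $\R_i^{\alpha}(1+2\alpha(\R_i-1))^{-1/2}$, finite iff $1+\alpha(\R_i-1)>0$ and $1+2\alpha(\R_i-1)>0$ respectively; note the second condition implies the first. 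Taking the ratio, $\SNR[\tilde D_\alpha] = \sqrt{1+2\alpha(\R_i-1)}/(1+\alpha(\R_i-1))$, and the identity $(1+\alpha(\R-1))^2 = 1+2\alpha(\R-1)+\alpha^2(\R-1)^2$ identifies this with $f(\R_i,\alpha)$ from eq.~\ref{eq:fc2}.

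For the $j$-th factor, differentiating $(p(\mathcal{T}_w(\epsilon))/q_v(\mathcal{T}_w(\epsilon)))^\alpha$ with respect to $w=\sigma_{qj}$ with $v$ held fixed and then substituting $v=w$ yields $\gdrep(p_j,q_{w_j},\epsilon_j) = \tfrac{\R_j-1}{\sigma_{qj}}\R_j^{\alpha/2}\,\epsilon_j^2\exp(-\tfrac{\alpha}{2}(\R_j-1)\epsilon_j^2)$; the key point is that the $v$-subscript is frozen, so only the $\epsilon_j^2$ prefactor survives the chain rule. Applying the moment identity with $k=1$ for $\E[g]$ ($b=\tfrac{\alpha}{2}(\R_j-1)$) and $k=2$ for $\E[g^2]$ ($b=\alpha(\R_j-1)$) and taking the ratio gives $\SNR[\gdrep(p_j,q_{w_j},\epsilon_j)] = (1+2\alpha(\R_j-1))^{5/2}/\big(3(1+\alpha(\R_j-1))^3\big)$, which equals $\tfrac{1+2\alpha(\R_j-1)}{3}f(\R_j,\alpha)^3$. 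Multiplying by the $d-1$ factors $f(\R_i,\alpha)$ gives eq.~\ref{eq:NSR-ffg}. For the degenerate cases: $\R_j=1$ forces $\gaj\equiv 0$ through the $\R_j-1$ prefactor, so the SNR is undefined (consistent with Theorem~\ref{thm:NSR-gff}); and if $1+2\alpha(\R_i-1)\le 0$ for some $i$, the corresponding second-moment integral fails to converge, so $\E[\gaj^2]=\infty$ and the SNR is undefined.

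I expect the main obstacle to be bookkeeping rather than conceptual: carrying out the double-reparameterization derivative correctly (keeping $v$ frozen before substitution), and tracking the existence conditions — in particular checking that $1+2\alpha(\R_i-1)>0$ for all $i$ is the single binding requirement across all the integrals (it implies the weaker conditions needed for the first moments), and that the algebra collapses precisely onto the stated closed form $f$.
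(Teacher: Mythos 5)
Your proposal is correct and follows essentially the same route as the paper: it invokes the factorization from Theorem~\ref{thm:NSR-gff} and then evaluates the two one-dimensional Gaussian SNR factors in closed form, which is exactly what the paper's Lemmas~\ref{app:lemmaffg1}, \ref{app:lemmaf}, and \ref{app:lemmaffg3} do (your moments, the expression for $\gdrep(p_j,q_{w_j},\epsilon_j)$, and the algebra identifying the ratios with $f(\R,\alpha)$ all match the paper's computations). The only cosmetic difference is that you organize the Gaussian integrals around the single identity $\E[\epsilon^{2k}e^{-b\epsilon^2}]=(1{+}2b)^{-1/2},(1{+}2b)^{-3/2},3(1{+}2b)^{-5/2}$ obtained by differentiating under the integral, whereas the paper computes each integral directly; this is a minor streamlining, not a different argument.
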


\begin{figure*}[t]
	\centering
	\includegraphics[scale=0.27]{./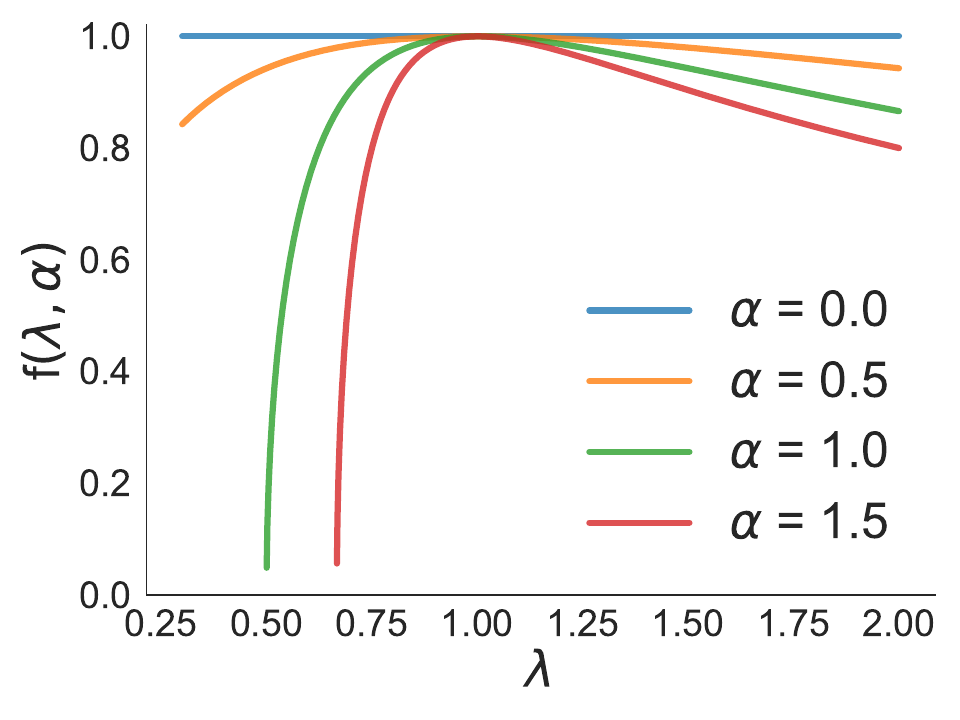}
	\includegraphics[scale=0.27]{./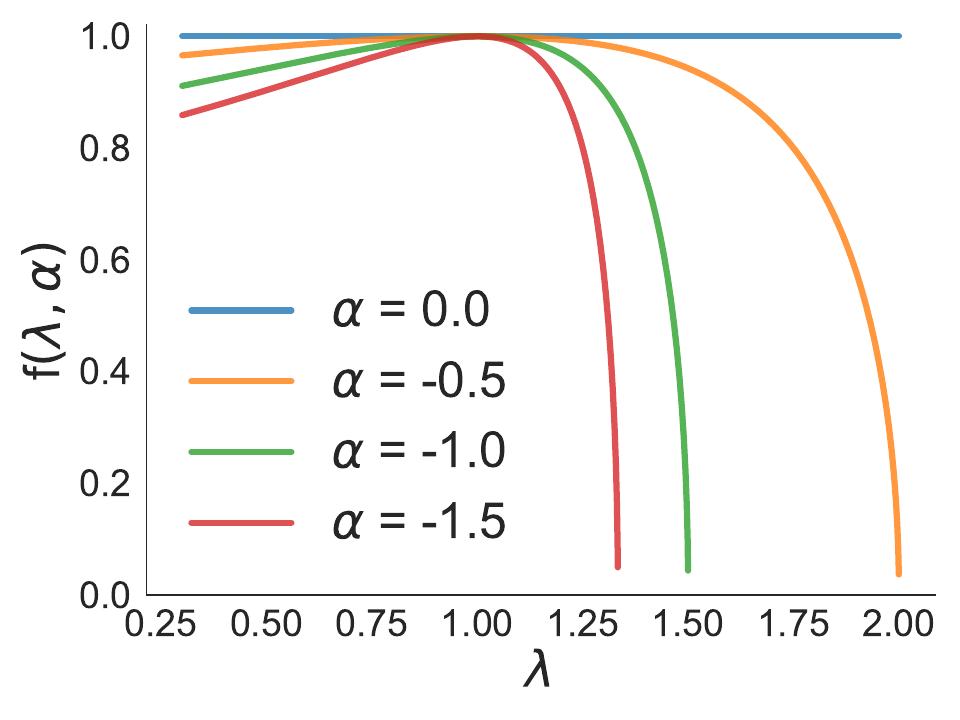}
	\includegraphics[scale=0.27]{./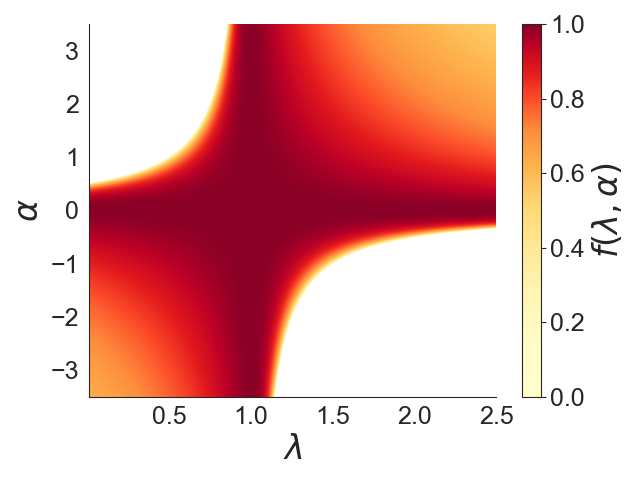}
	\caption{$f(\R, \alpha)$. White (rightmost plot) indicates regions where $1 + 2\alpha(\R - 1) > 0$ is not satisfied (in this region the estimator has infinite variance, and thus the SNR is not defined).}
	\label{fig:f}
\end{figure*}

Corollary \ref{thm:NSR-ffg} gives conditions under which the SNR of the gradient estimator is well-defined (i.e. estimator has finite variance), and gives an expression for the SNR in such cases. In order to understand this expression, it is important to understand the behavior of the function $f(\R, \alpha)$. Fig.~\ref{fig:f} shows a visualization. It can be observed that (i) $f(\R, \alpha)$ achieves its maximum value of 1 if and only if $\R = 1$ or $\alpha = 0$; and (ii) $f(\R, \alpha)$ decreases as $\alpha$ moves aways from $0$ or $\R$ moves away from $1$. We present a formal characterization of this function in Lemma \ref{lem:f-props} (Appendix~\ref{app:fproof}).

Again, the behavior of the SNR for problems with high dimensionalities is determined by the $d$-term product $\prod_i f(\R_i, \alpha)$ in eq.~\ref{eq:NSR-ffg}. We see that, if $\alpha \rightarrow 0$, each term in this product is just one (because $f(\R, 0) = 1$), and thus the SNR is just $1/3$. On the other hand, if $\alpha \neq 0$, each of the terms is at most one (with equality only if the corresponding $\R_i = 1$). Therefore, if $\alpha \neq 0$, discrepancies in several dimensions of $p$ and $q_w$ accumulate, leading to a large detrimental effect on the SNR of \textit{every} component of the estimator. In addition, in this case we can exactly quantify this deterioration in terms of $\alpha$ and $\R_i$: The SNR worsens for $\alpha$ values with large absolute value and when discrepancies between components is large (i.e. $\R_i$ far from 1), since for these cases $\f$ is significantly less than one (see Fig.~\ref{fig:f}).

In addition, there are entirely non-pathological cases for which the estimator has \textit{infinite} variance. This occurs whenever the condition $1 + 2\alpha(\R_i - 1) > 0$ is not satisfied for some $i$. For instance, this happens if we set $\alpha = 1$, $\sigma_{pi} = 1$ and $\sigma_{qi} = 0.7$. More generally, the condition is equivalent to $\alpha \sigma_q^2 > (\alpha - \nicefrac{1}{2})\sigma_p^2$. This is always satisfied for $0 \leq \alpha < \nicefrac{1}{2}$. If $\alpha \geq \nicefrac{1}{2}$, this means that the variance of $q_w$ cannot be much smaller than that of $p$. If $\alpha < 0$, this means that the variance of $q_w$ cannot be much larger than that of $p$.

\textbf{Example.} Consider the case where $\alpha = 0.4$, $p$ is a standard Gaussian with dimension $d = 128$, and $q_w$ is a mean zero factorized Gaussian with $\sigma_{qi} = 2$ for all $i$. Eq.~\ref{eq:NSR-ffg} yields $\SNR[\gaj] \approx 1.2 \times 10^{-10}$. This means that the variance of the estimator is approximately $8\times 10^9$ times larger than the actual signal. In contrast, for $\alpha \rightarrow 0$, the SNR is just $1/3$. Obtaining an estimator with a similar SNR for $\alpha = 0.4$ would require averaging $N \approx 4\times 10^9$ independent samples (this quantity grows exponentially if problems of larger dimensionality are considered).

\subsection{Gaussians with arbitrary Covariances}

We now move away from factorized distributions and consider the case in which both $p$ and $q_w$ are $d$-dimensional Gaussians with mean zero and arbitrary full-rank covariances $\Sigma_p$ and $\Sigma_q$. The set of parameters is given by $w = S$, where $S$ is a matrix such that $S S^\top = \Sigma_q$, and reparameterization is given by $\mathcal{T}_w(\epsilon) = S \epsilon$, where $\epsilon \sim \mathcal{N}(0, I)$.

\begin{restatable}{thm}{fullrankbound}
\label{thm:frg-NSR}
Let $p(z) = \mathcal{N}(z | 0, \Sigma_p)$ and $q(z) = \mathcal{N}(z | 0, \Sigma_q)$. Let $\lambda_1,\dots, \lambda_d$ be the eigenvalues of $\Sigma_p^{-1} \Sigma_q$ and $\ga = \gdrep$. 

If $\Sigma_p \neq \Sigma_q$ and $1 + 2\alpha(\lambda_i - 1) > 0$ for all $i$ we get
\small
\begin{equation}
\hspace{-4cm}\SNR[\ga(p, q_w, \epsilon)] = \frac{1}{d + 2} \label{eq:frg-z}
\end{equation}
\normalsize
for $\alpha \rightarrow 0$,
\small
\begin{equation}
\SNR[\ga(p, q_w, \epsilon)] \leq \left(\frac{1 + \alpha(\lambda_\mathrm{min} - 1)}{1 + 2\alpha(\lambda_\mathrm{max} - 1)}\right)^2 \prod_{i=1}^d f(\lambda_i, \alpha) \label{eq:NSR-frgp}
\end{equation}
\normalsize
for $\alpha > 0$, and 
\small
\begin{equation}
\SNR[\ga(p, q_w, \epsilon)] \leq \left(\frac{1 + \alpha(\lambda_\mathrm{max} - 1)}{1 + 2\alpha(\lambda_\mathrm{min} - 1)}\right)^2 \prod_{i=1}^d f(\lambda_i, \alpha)\label{eq:NSR-frgn}
\end{equation}
\normalsize
for $\alpha < 0$, where $\lambda_\mathrm{max} = \max_i \lambda_i$, $\lambda_\mathrm{min} = \min_i \lambda_i$ (these are both positive), and $f(\R, \alpha)$ is defined in eq.~\ref{eq:fc2}.

Otherwise, the SNR is not defined. If $\Sigma_p = \Sigma_q$, this is because the estimator is zero deterministically. If $1 + 2\alpha(\lambda_i - 1) \leq 0$ for any $i$, this is because the estimator has infinite variance.
\end{restatable}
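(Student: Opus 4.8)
The plan is to obtain a closed form for $\gdrep$ in this Gaussian model, choose a change of variables that decouples the coordinates, treat $\alpha\to 0$ exactly, and for $\alpha\neq 0$ reduce the SNR to the one-dimensional Gaussian integrals already appearing in Corollary~\ref{thm:NSR-ffg}. Plugging the Gaussian densities and $\mathcal T_w(\epsilon)=S\epsilon$ into eq.~\ref{eq:gdrep}, the ratio $p(S\epsilon)/q_v(S\epsilon)$ is $\exp$ of a quadratic form in $\epsilon$ up to a determinant factor; differentiating with respect to $S$ only through $\mathcal T_w$ and setting $v=w$ afterward gives
\[
\gdrep(p,q_w,\epsilon)=\Bigl(\tfrac{p(S\epsilon)}{q(S\epsilon)}\Bigr)^{\!\alpha}(\Sigma_p^{-1}-\Sigma_q^{-1})\,S\epsilon\,\epsilon^\top ,
\]
a scalar weight times a rank-one matrix; as $\alpha\to 0$ the weight tends to $1$ and this recovers the STL estimator. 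Using $SS^\top=\Sigma_q$ one checks the SNR is unchanged by the orthogonal ambiguity in $S$, so we may take $S=\Sigma_q^{1/2}$, and the relevant norm is the Frobenius norm (equivalently, the SNR of the vectorized gradient).

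The decoupling step is the substitution $z=S\epsilon\sim\mathcal N(0,\Sigma_q)$, then $z=\Sigma_q^{1/2}y$ with $y\sim\mathcal N(0,I)$, then a rotation $y=P\eta$ where $P$ diagonalizes $\Sigma_q^{1/2}\Sigma_p^{-1}\Sigma_q^{1/2}=P\,\mathrm{diag}(\lambda_1,\dots,\lambda_d)\,P^\top$. The gain is that the weight \emph{factorizes}, $\bigl(p(S\epsilon)/q(S\epsilon)\bigr)^\alpha=\prod_{i=1}^d\lambda_i^{\alpha/2}e^{-\frac\alpha2(\lambda_i-1)\eta_i^2}=:\prod_i w_i(\eta_i)$, i.e., it is a product of exactly the one-dimensional weights underlying Corollary~\ref{thm:NSR-ffg}, while $\|\epsilon\|^2=\|\eta\|^2$ and $\|(\Sigma_p^{-1}-\Sigma_q^{-1})S\epsilon\|^2=\eta^\top(\Lambda-I)R(\Lambda-I)\eta$ with $R=P^\top\Sigma_q^{-1}P$ symmetric positive definite. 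Since $w(\eta)^2$ is even in each $\eta_i$, only the diagonal entries $R_{ii}$ survive in both $\E[\gdrep]$ and $\E\|\gdrep\|_F^2$; writing $\kappa_i=(\lambda_i-1)^2 R_{ii}\ge 0$ this yields $\SNR[\gdrep]=\bigl(\sum_i\kappa_i c_i^2\bigr)\big/\bigl(\sum_i\kappa_i D_i\bigr)$, where $c_i$ and $D_i$ are explicit expressions built from $\E[w_l]$, $\E[w_l^2]$ and the low moments $\E[w_i\eta^2]$, $\E[w_i^2\eta^2]$, $\E[w_i^2\eta^4]$.

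For $\alpha\to 0$ the weight is $1$, so $\E[\gdrep]=(\Sigma_p^{-1}-\Sigma_q^{-1})S$ and, by the Gaussian fourth-moment identity $\E[\epsilon^\top C\epsilon\,\|\epsilon\|^2]=(d+2)\,\mathrm{tr}(C)$ for $\epsilon\sim\mathcal N(0,I)$, $\E\|\gdrep\|_F^2=(d+2)\,\mathrm{tr}(S^\top(\Sigma_p^{-1}-\Sigma_q^{-1})^2S)$; the ratio is $1/(d+2)$, positive (hence the SNR is defined) exactly when $\Sigma_p\neq\Sigma_q$. For $\alpha\neq 0$, $\kappa_i\ge 0$ gives $\SNR[\gdrep]\le\max_i c_i^2/D_i$. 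Each of the moments above is an elementary one-dimensional Gaussian integral; they exist precisely when $1+2\alpha(\lambda_i-1)>0$ for all $i$ (this is where that hypothesis enters, through $\E[w_i^2]\propto(1+2\alpha(\lambda_i-1))^{-1/2}$), and otherwise $\E\|\gdrep\|^2=\infty$ so the SNR is undefined. Evaluating them one gets $\E[w_l]^2/\E[w_l^2]=f(\lambda_l,\alpha)$ and $\E[w_i\eta^2]^2/\E[w_i^2\eta^4]=\tfrac{1+2\alpha(\lambda_i-1)}{3}f(\lambda_i,\alpha)^3$ --- the very quantities of Corollary~\ref{thm:NSR-ffg} --- so $c_i^2/D_i$ equals $\prod_i f(\lambda_i,\alpha)$ times a ratio of $1+\alpha(\lambda_i-1)$ and $1+2\alpha(\lambda_i-1)$ terms; since $\lambda\mapsto 1+\alpha(\lambda-1)$ and $\lambda\mapsto 1+2\alpha(\lambda-1)$ are monotone (increasing for $\alpha>0$, decreasing for $\alpha<0$), replacing the relevant indices by $\lambda_{\min}$ / $\lambda_{\max}$ and discarding the nonnegative cross-terms in $D_i$ produces the two stated bounds. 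The case $\Sigma_p=\Sigma_q$ makes $\gdrep$ deterministically zero, so the SNR is undefined there.

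The main obstacle is that $\Sigma_p$ and $\Sigma_q$ need not commute, so $\|(\Sigma_p^{-1}-\Sigma_q^{-1})S\epsilon\|^2$ is not diagonalized in the same basis as the weight and the computation produces the Gram matrix $R=P^\top\Sigma_q^{-1}P$. The two facts that make the argument go through are that the off-diagonal entries of $R$ drop out by parity (even weight against odd moments) and that the surviving $\kappa_i$ occur identically in numerator and denominator, hence cancel in the $\max_i$ bound; after that, turning the index-dependent ratio into the clean $\lambda_{\min}/\lambda_{\max}$ prefactor by monotonicity --- while checking the bound stays valid after dropping the $\eta_i^2\eta_k^2$ cross-terms --- is the delicate bookkeeping. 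The $\alpha\to0$ computation and the scalar Gaussian integrals are routine.
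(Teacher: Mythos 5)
Your reduction is sound and, up to the last step, arguably cleaner than the paper's route: the paper first establishes the exact expression of Theorem~\ref{thm:thmafrg} (via the matrices $U$ and $V$) and then applies operator-norm inequalities, whereas you diagonalize $\Sigma_q^{1/2}\Sigma_p^{-1}\Sigma_q^{1/2}$ so that the importance weight factorizes, kill the off-diagonal entries of $R$ by parity, and bound the resulting ratio of $\kappa_i$-weighted sums by the largest per-coordinate ratio; your $\alpha\to 0$ computation coincides with Lemma~\ref{lem:frg-a0}. The gap is the final sentence, where you assert that the bookkeeping ``produces the two stated bounds.'' It does not, and it cannot. Carrying your own reduction to the end: dropping the nonnegative cross-terms gives $c_i^2/D_i \le \frac{a_i^2}{c_i'}\prod_{l\ne i}\frac{\E[w_l]^2}{\E[w_l^2]}$ with $\frac{a_i^2}{c_i'}=\frac{1+2\alpha(\lambda_i-1)}{3}f(\lambda_i,\alpha)^3$, hence
\begin{equation*}
\SNR[\ga(p,q_w,\epsilon)] \;\le\; \max_i \frac{\bigl(1+2\alpha(\lambda_i-1)\bigr)^2}{3\bigl(1+\alpha(\lambda_i-1)\bigr)^2}\;\prod_{l=1}^d f(\lambda_l,\alpha),
\end{equation*}
whose prefactor is essentially the \emph{reciprocal} of the one in eqs.~\ref{eq:NSR-frgp}--\ref{eq:NSR-frgn}. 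No further manipulation recovers the stated prefactor, because the stated inequality is itself false: for $d=1$, $\alpha=1$, $\lambda=2$ your expression is exact (there are no cross-terms to drop) and gives $\SNR = 3\sqrt{3}/8\approx 0.65$, while eq.~\ref{eq:NSR-frgp} would assert $\SNR\le (2/3)^2 f(2,1)=2\sqrt{3}/9\approx 0.38$.

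The same inversion occurs in the paper's own proof: it writes $\Vert B U^{-1}\Vert_F^2 \le d\Vert B\Vert_2^2\bigl(\max_i\lambda_i(U^{-1})\bigr)^2$ and then identifies $\max_i\lambda_i(U^{-1})$ with $\lambda_{\min}(U)$ rather than $1/\lambda_{\min}(U)$ (and likewise $\lambda_{\min}(V^{-1})$ with $\lambda_{\max}(V)$ rather than its reciprocal); tracking the reciprocals correctly, that argument yields the prefactor $\bigl(\lambda_{\max}(V)/\lambda_{\min}(U)\bigr)^2$, not $\bigl(\lambda_{\min}(U)/\lambda_{\max}(V)\bigr)^2$. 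So the place where your write-up hand-waves is exactly the place where the theorem, as printed, breaks. What your method does correctly establish is the displayed bound above; since $1+2\alpha(\lambda-1)=2u-1<2u$ for $u=1+\alpha(\lambda-1)>1/2$, its prefactor is at most $4/3$ independently of $d$, so the qualitative conclusion of the theorem --- a dimension-independent constant times $\prod_i f(\lambda_i,\alpha)$, hence exponential decay of the SNR --- survives. You should state and prove that corrected bound rather than the one in the statement.
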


The results in Theorem \ref{thm:frg-NSR} can be interpreted similarly to those in Corollary \ref{thm:NSR-ffg}. If $\alpha \rightarrow 0$, the SNR is just $1/(d+2)$, independent of $p$ and $q_w$. If $\alpha \neq 0$, the SNR's upper bound contains the product of $d$ terms, all at most 1, with equality only if the corresponding $\lambda_i = 1$. As with factorized distributions, for $\alpha \neq 0$, discrepancies between several dimensions of $p$ and $q_w$ accumulate, leading to a small SNR. As with fully-factorized Gaussians, this deterioration worsens for $\alpha$ values with large magnitude and for $\lambda_i$ far from one. The condition that must be satisfied to get an estimator with finite variance is similar to the one for factorized Gaussians. The only difference is that, in this case, $\lambda_i$ represents an eigenvalue of $\Sigma_p^{-1} \Sigma_q$, instead of the ratio $\nicefrac{\sigma_{qi}^2}{\sigma_{pi}^2}$.

\textbf{Example.} Consider the case where $p$ and $q_w$ are $d$ dimensional isotropic Gaussians with covariances $\sigma_p^2 I$ and $\sigma_q^2 I$, with $\sigma_p \neq \sigma_q$. If $\alpha \neq 0$, the estimator's SNR is upper bounded by $\propto f(\lambda, \alpha)^d$, where $f(\lambda, \alpha)$ is strictly less than 1. This upper bound goes to zero \textit{exponentially} as a function of $d$. In contrast, for $\alpha \rightarrow 0$, the SNR decreases as $1/d$.

It is worth mentioning that the bounds in Theorem~\ref{thm:frg-NSR} are obtained as a relaxation of an exact but much more technical result, shown in Section~\ref{sec:discussion} (Theorem~\ref{thm:thmafrg}). While this latter result is fully precise (it gives an exact expression for the SNR) it is hard to interpret, so we do not include it here.

\subsection{Effect of SNR on Optimization} \label{sec:SNRSGD}

We presented general and representative scenarios for which the gradient estimator's SNR becomes extremely small as the dimensionality of the problem increases. How does this affect optimization convergence? Under some regularity assumptions, an SGD convergence guarantee assuming a bound on the SNR is known. See, for instance, Theorem 4.8 by \cite{bottou_SGDreview}. Their eq. 4.9 is equivalent to an SNR bound. They show that (under some assumptions) SGD requires a number of iterations that is $\mathcal{O}(1/\SNR)$ to converge. Thus, an exponentially small SNR translates to an exponentially large number of SGD iterations. Intuitively, this is because a small SNR leads to a small step-size, which in turn leads to a large number of SGD iterations.

In addition, there are papers that analyze this from a more empirical perspective. For instance, \cite{shalev2017failures} relate gradient estimators with extremely low SNRs to complete failures of gradient-based optimization methods. They mention that when the SNR approaches small values, the noise can completely mask the signal, and thus gradients are not sufficiently informative for optimization to succeed.


\section{Experiments and Results} \label{sec:exps}

Results in this work show that, for the scenarios considered, if $\alpha \neq 0$ unbiased estimates of \small$\nabla_w D_\alpha(p||q_w)$\normalsize\ suffer from a low SNR, which worsens fast with the dimensionality of the problem. Thus, methods based on these estimates will not scale to high dimensional problems. In this section we empirically show similar severe scalability issues for Bayesian logistic regression models.

We use two datasets: \textit{Iris} and \textit{Australian}, which have dimensionalities $4$ and $14$, respectively. For both datasets we used a subset of $100$ samples. For \textit{Iris} this reduced to keeping only data-points from two classes (out of the original three), while for \textit{Australian} we subsampled $100$ data-points. We use a diagonal Gaussian as variational distribution $q_w$, initialized to have mean zero and covariance identity.

When $p$ is a posterior, we cannot directly estimate gradients of the alpha-divergence since $p(z|x)$ is intractable. However, if we define the ``$\alpha$-ELBO''
\begin{equation}
\mathcal{L}_\alpha(w) = \frac{1}{\alpha(1 - \alpha)} \E_{q_w(z)}\left[ \left(\frac{p(x, z)}{q_w(z)}\right)^\alpha - 1\right], \label{eq:obj_L}
\end{equation}
then it's easy to show that maximizing $\mathcal{L}_\alpha$ is equivalent to minimizing the alpha-divergence, since
\begin{equation}
f_\alpha(p(x)) = \mathcal{L}_\alpha(w) + p(x)^\alpha D_\alpha(p(z|x)||q(z)) \label{eq:elbo_decomp}
\end{equation}
for $f_\alpha(x) = \frac{1}{\alpha(1-\alpha)}(x^\alpha - 1)$. Thus a gradient of $\mathcal{L}_\alpha$ is equal to a gradient of $D_\alpha$ up to a sign change and a multiplication by the constant factor of $p(x)^\alpha$. Eq. \ref{eq:elbo_decomp} gives a lower-bound on $p(x)$ for $\alpha < 1$ and an upper-bound for $\alpha > 1$ (corresponding to the cases where $f_\alpha$ is increasing and decreasing, respectively).

We optimize $\mathcal{L}_\alpha$ by running SGD with unbiased gradient estimates for $1000$ steps. We do this for $\alpha \in \{0, 0.1, 0.2, 0.3\}$ and for $N \in \{1, 10, 10^2, 10^3, 10^4\}$ (number of samples used to estimate the gradient at each step). For each pair $(\alpha, N)$ we tuned the step-size; we ran simulations for all step-sizes in the set $\{10^i\}_{i=-7}^{7}$ and selected the top-performing one. All results shown are averages over 15 simulations.

\begin{figure*}[ht]
  \centering
  \includegraphics[scale=0.24, trim = {0 2cm 0 0}, clip]{./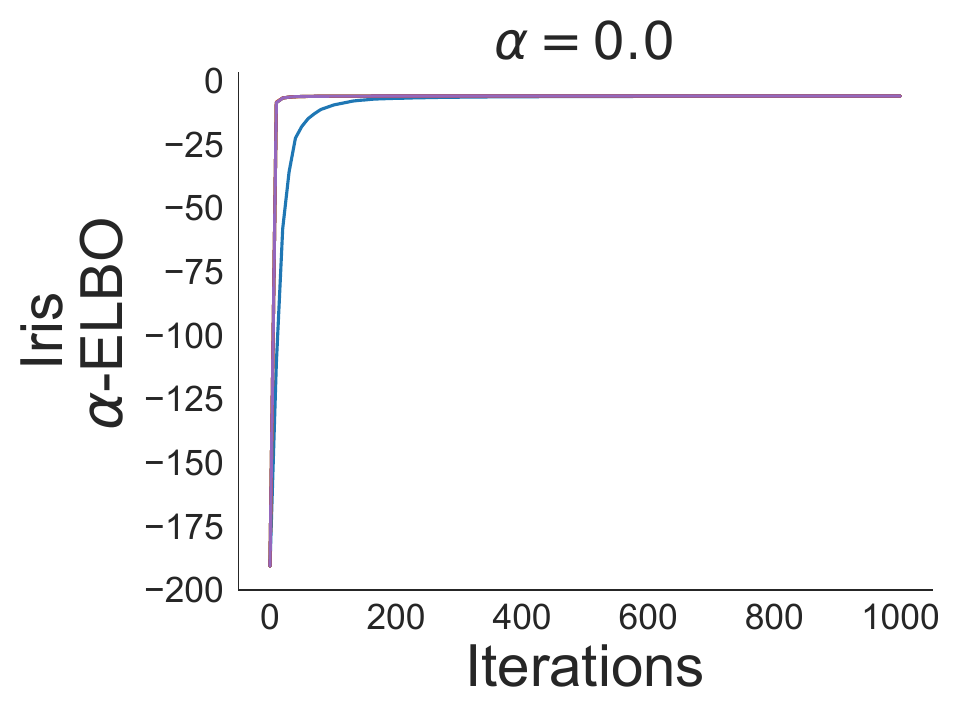}\hfill
  \includegraphics[scale=0.24, trim = {0 2cm 0 0}, clip]{./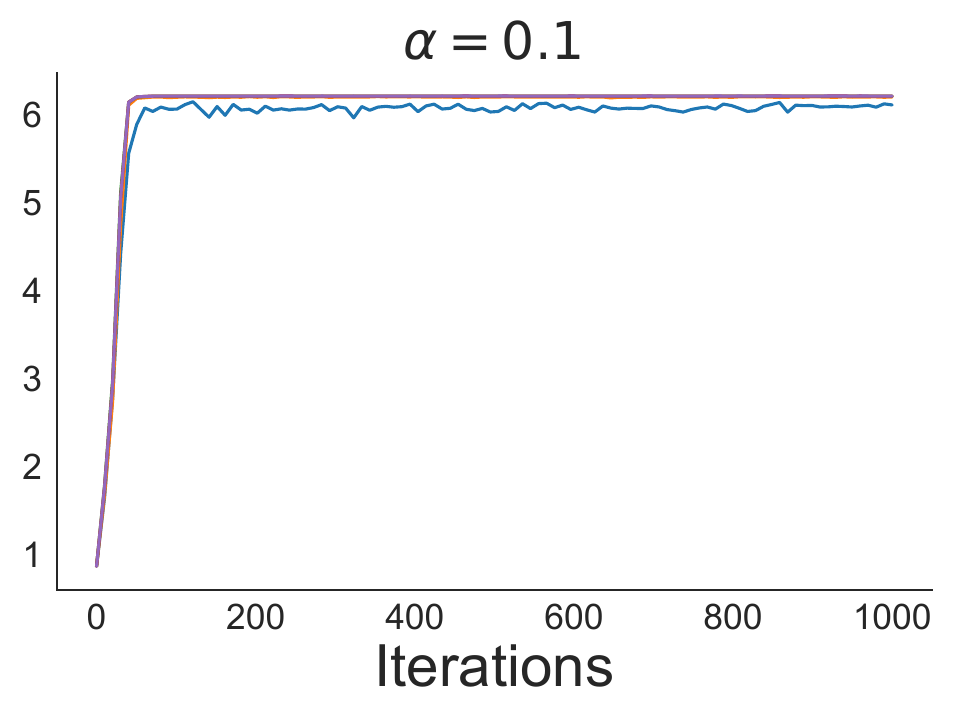}\hfill
  \includegraphics[scale=0.24, trim = {0 2cm 0 0}, clip]{./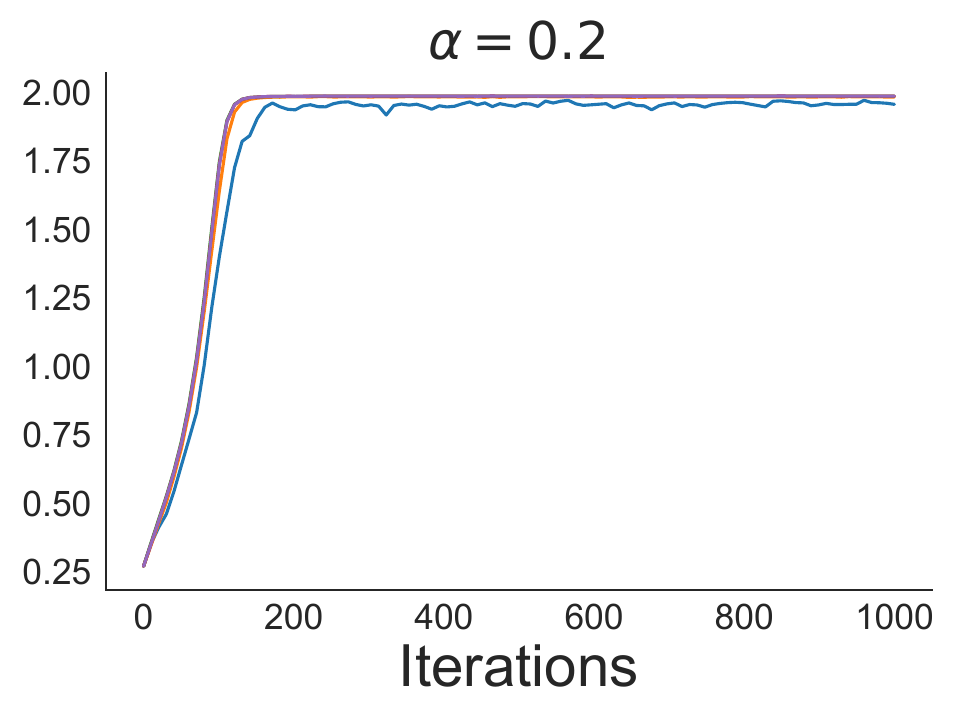}\hfill
  \includegraphics[scale=0.24, trim = {0 2cm 0 0}, clip]{./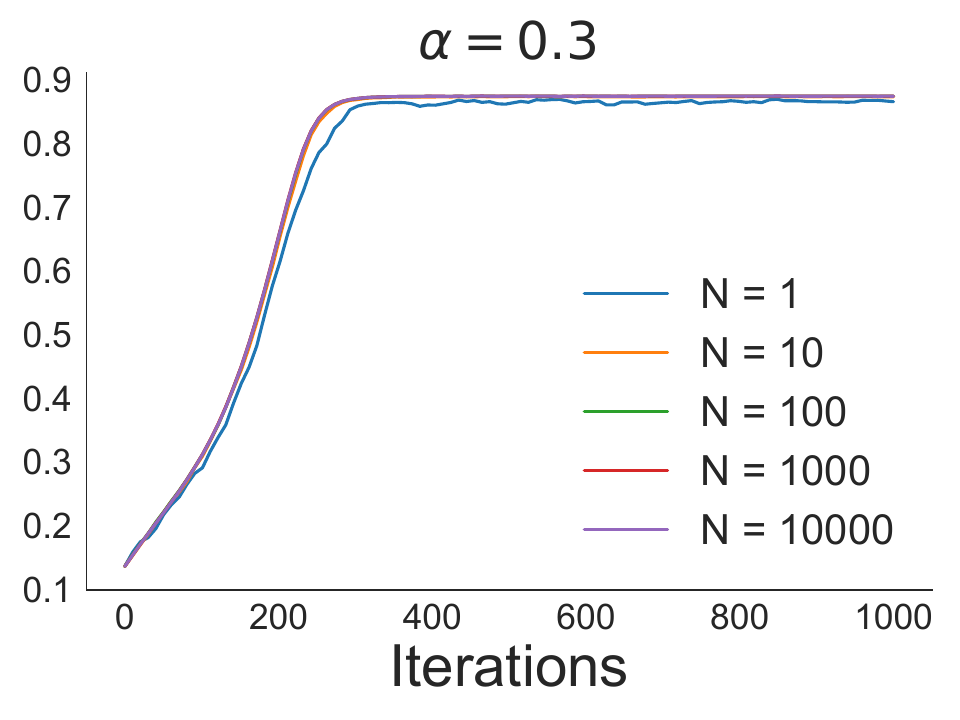}

  \vspace{0.1cm}

  \includegraphics[scale=0.24, trim = {0 0 0 0}, clip]{./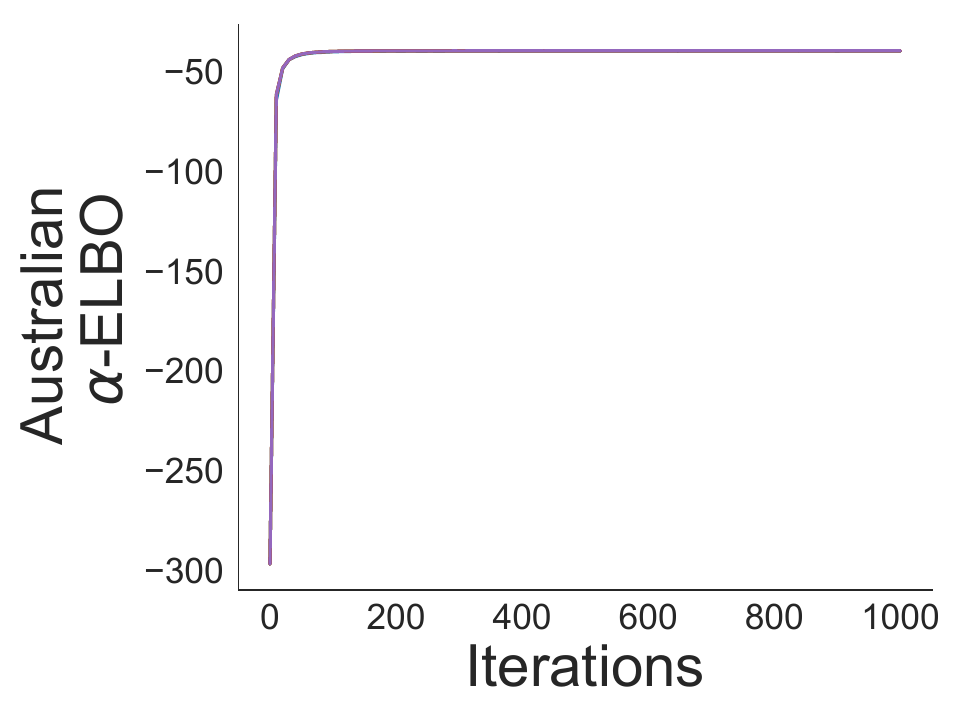}\hfill
  \includegraphics[scale=0.24, trim = {0 0 0 0}, clip]{./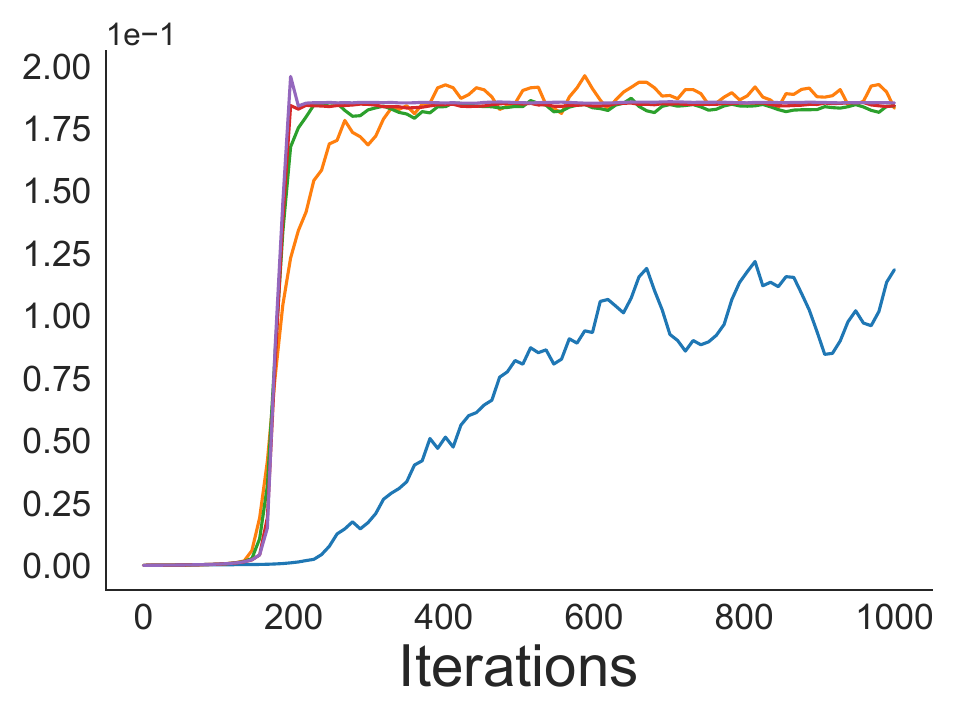}\hfill
  \includegraphics[scale=0.24, trim = {0 0 0 0}, clip]{./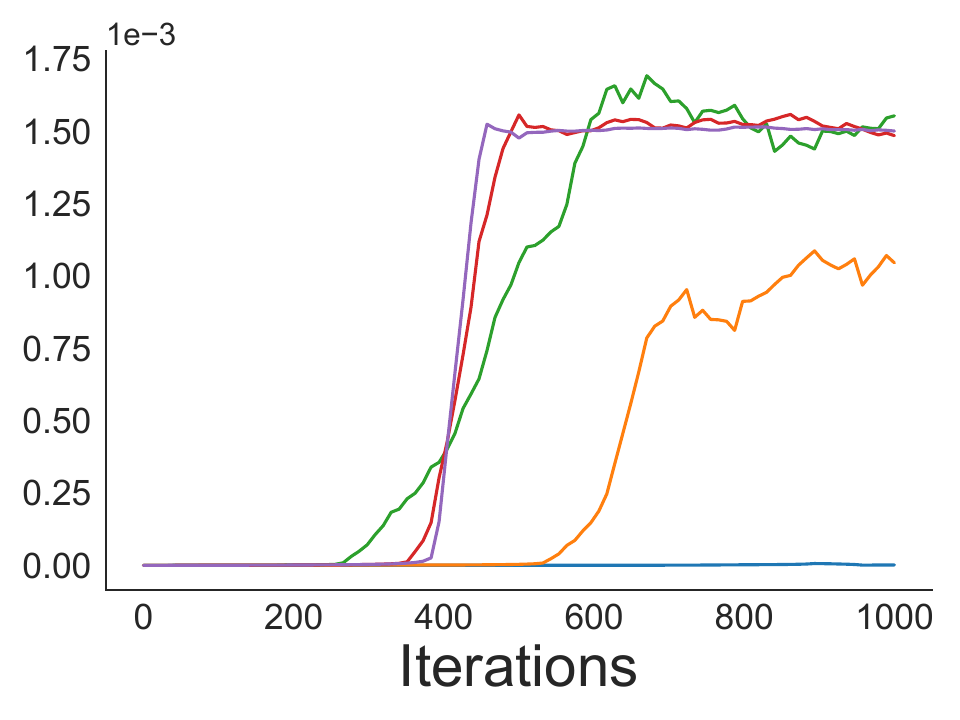}\hfill
  \includegraphics[scale=0.24, trim = {0 0 0 0}, clip]{./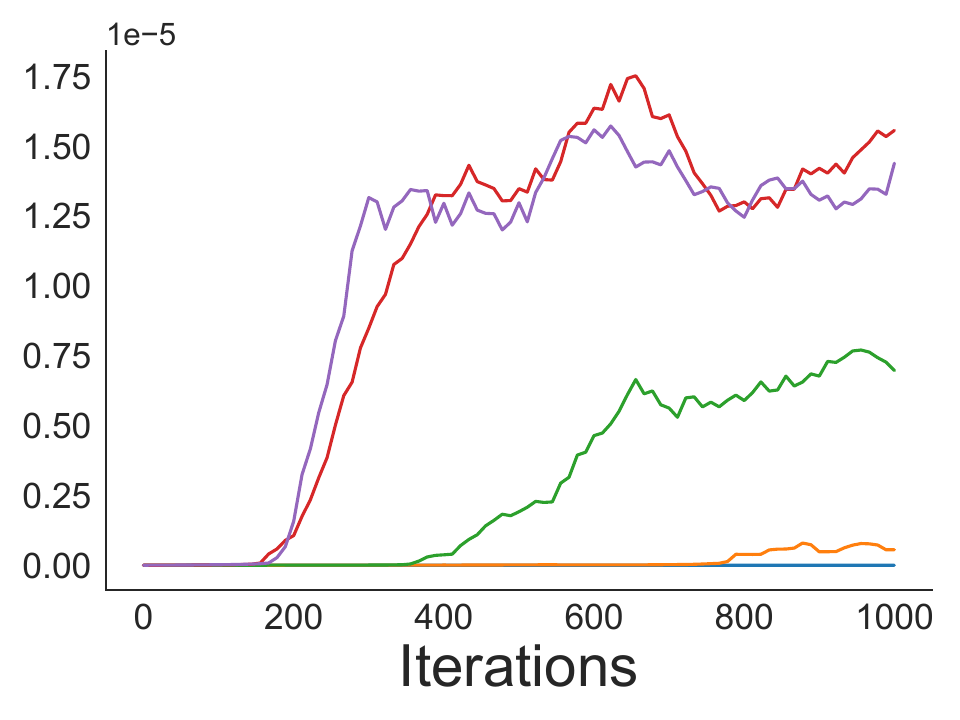}
  \caption{Optimization results for each $\alpha$ for all values of $N$ considered. The loss at each step (eq. \ref{eq:obj_L}) is estimated using $2.5 \times 10^5$ samples for both datasets.}
  \label{fig:exps_lr_opt}
\end{figure*}

Optimization results are shown in Fig.~\ref{fig:exps_lr_opt}. For the smaller dataset, \textit{Iris} ($d = 4$), optimization converges properly for all values of $\alpha$ considered regardless of the number of samples $N$ used to estimate the gradient. The situation is different for the \textit{Australian} dataset ($d = 14$). In this case, optimization converges properly for $\alpha \rightarrow 0$, but as $\alpha$ is increased a much larger number of samples $N$ is required to retain convergence ($N \geq 1000$ for $\alpha = 0.3$). This shows that, even for a simple logistic regression model of low dimension ($d = 14$), alpha-divergence minimization methods based on unbiased gradient estimates scale \textit{very} poorly with the dimensionality of the problem when $\alpha \neq 0$. We also ran simulations with larger datasets ($d \approx 40$), for which optimization barely made any progress at all regardless of the number of samples $N$ used. Similar results are obtained using the Adam optimizer (shown in Fig.~\ref{fig:exps_lr_opt_adam}, Appendix~\ref{app:adam}).

Fig.~\ref{fig:NSRLR} shows the SNR of the estimator for different values of $\alpha$ along a single optimization path, obtained by minimizing $\mathcal{L}_\alpha$ for $\alpha \rightarrow 0$ (equivalent to maximizing the ELBO). For the smaller dataset, \textit{Iris} ($d = 4$), all values of $\alpha$ considered lead to comparable SNRs. In contrast, for the \textit{Australian} dataset ($d = 14$), the SNR descreases rapidly as $\alpha$ is increased.

\begin{figure}[ht]
  \centering
  \includegraphics[scale=0.24, trim = {0 0 0 0}, clip]{./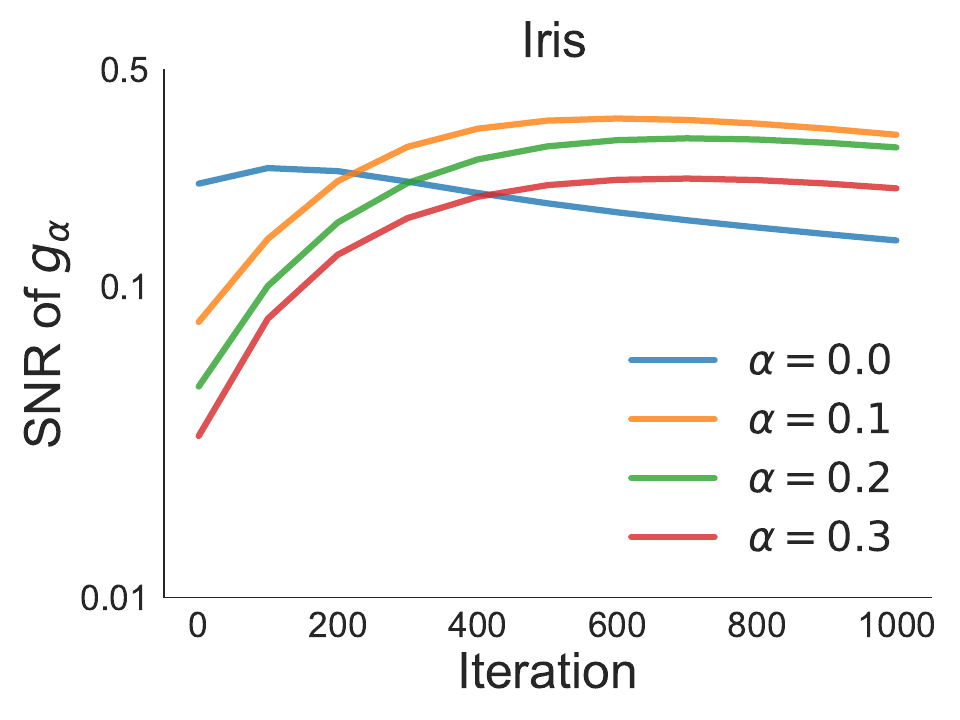}
  \includegraphics[scale=0.24, trim = {1.2cm 0 0 0}, clip]{./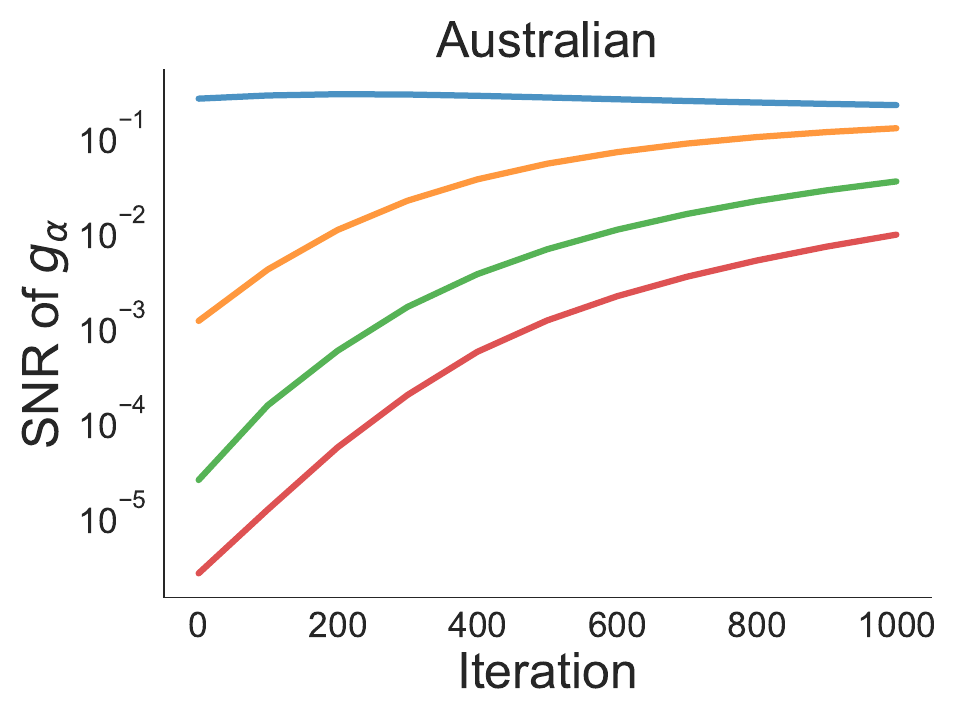}
  \caption{Direct test of SNR for different values of $\alpha$ along a single shared optimization path.}
  \label{fig:NSRLR}
\end{figure}

\section{Discussion} \label{sec:discussion}

We study unbiased methods for alpha-divergence minimization with the goal of understanding when these methods may be successfully applied. We present a detailed analysis of the SNR of unbiased gradient estimates for different scenarios. Our results are pessimistic. Suppose the variational family is any fully-factorized family, or the set of full-rank Gaussians. We show that in the favorable case where the posterior is inside the variational family, the SNR degrades catastrophically in the dimensionality. Optimization theory suggests that an exponential amount of computation time would be needed to optimize the objectives.

Interestingly, results in this work rule out some potential ``intuitive'' solutions. For instance, one might think that, for $\alpha > 0$, using an over-dispersed distribution $q_w$ could mitigate variance issues. However, Theorem \ref{thm:NSR-gff} shows this is not the case. While an over-dispersed distribution might help avoid estimators with infinite variance, it would not avoid the exponential deterioration of the SNR in terms of the dimensionality of the problem.

In addition, one could also consider using an adaptive gradient optimization method, such as Adam \cite{adam}. As it can be observed in Figs.~\ref{fig:noopt_gauss_adam} and \ref{fig:exps_lr_opt_adam} in Appendix~\ref{app:adam}, this does not solve the issue. This is not really surprising. Intuitively, this is because the extremely low SNR is a property of the gradient estimator, and thus will difficult optimization regardless of the gradient-based optimizer used. Indeed, Section 2.1 of the original Adam paper \cite{adam} states\footnote{v9 on arxiv.} ``With a smaller SNR the effective step-size $\Delta t$ will be closer to zero. This is a desirable property, since a smaller SNR means that there is greater uncertainty about whether the direction of $m$ corresponds to the direction of the true gradient.'' In addition, the theoretical results by \citet{shalev2017failures} are independent of the optimization algorithm used, and some of their negative empirical results were obtained using Adam.

Why is the behavior for $\alpha \neq 0$ so different to $\alpha \to 0$? Our understanding is that that the problematic terms vanish in the limit of $\alpha \to 0$. For example, eq.~\ref{eq:NSR-gff-nz} becomes eq.~\ref{eq:NSR-gff-z}, due to the fact that $\tilde{D}_\alpha \to 1$ as $\alpha \to 0$. Similarly, in eq.~\ref{eq:NSR-ffg}, we get that $f(\lambda, \alpha) \to 1$ as $\alpha \to 0$, meaning that only $\mathrm{SNR}[g_\alpha]$ remains. For full-rank Gaussians, it is probably easiest to understand via the exact result in Thm.~\ref{thm:thmafrg} (see below). There, if $\alpha \to 0$, we have that $f(\lambda, \alpha) \to 1$, $U \to I$, $V \to I$, meaning the overall SNR becomes $1 / (d + 2)$ (exactly eq.~\ref{eq:frg-z}) which has no problematic exponential dependence on dimensionality.

Given the failure of unbiased methods, one could consider using some biased alternative. However, it has been recently observed that, in high dimensions, these methods return suboptimal solutions that fail to minimize the target alpha-divergence \cite{biasedalphafail}. An analysis analogous to the one presented in this work is needed to understand this failure. We believe that such an analysis might be related to the curse of dimensionality for self-normalized importance sampling, which conjectures that to get meaningful results the number of samples used from the proposal distribution should be exponential in the dimensionality of the problem \citep{bengtsson2008curse, bugallo2017adaptive}.

\textbf{Exact result for Gaussians.} Theorem~\ref{thm:frg-NSR} bounds the gradient estimator's SNR for Gaussians with arbitrary covariances. While it admits a nice interpretation, it is not tight. As mentioned previously, an exact result is possible. We include it here. While this result is fully precise, it is harder to interpret than the bounds from Theorem~\ref{thm:frg-NSR}. It may be possible to find tighter bounds that are still ``simple'', or to find an intuitive interpretation of the exact result. We believe a step in these directions may further increase our understanding of these methods. 

\begin{restatable}{thm}{fullrankexact}
\label{thm:thmafrg}
Take the setting of Theorem \ref{thm:frg-NSR} with $\Sigma_p \neq \Sigma_q$ and $1 + 2\alpha (\lambda_i - 1) > 0$ for all $i$. Let $S$ be a matrix such that $\Sigma_q = SS^\top$ and let $\alpha \neq 0$. Then,
\small
\begin{equation*}
\SNR[\ga(p, q_w, \epsilon)] = \frac{\Vert B U^{-1} \Vert^2_F \,\times\, \prod_{i=1}^d f(\lambda_i, \alpha)}{\mathrm{tr}(V^{-1}) \mathrm{tr}(B V^{-1}B^\top) + 2 \Vert B V^{-1} \Vert_F^2},
\end{equation*}
\normalsize
where  $B = (\Sigma_p^{-1} - \Sigma_q^{-1}) S$\normalsize, $U = (1-\alpha) I + \alpha S^\top \Sigma_p^{-1} S$\normalsize, and $V = (1-2\alpha) I + 2\alpha S^\top \Sigma_p^{-1} S$\normalsize.
\end{restatable}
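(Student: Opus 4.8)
The plan is to get a closed matrix form for $\gdrep$ and then evaluate $\|\E\,\gdrep\|_F^2$ and $\E\|\gdrep\|_F^2$ directly as Gaussian integrals. First I would write out the two Gaussian densities at $z=\mathcal{T}_w(\epsilon)=S\epsilon$ and differentiate $(p(S\epsilon)/q_v(S\epsilon))^\alpha$ with respect to the matrix $S$, holding the matrix parameterizing $q_v$ fixed, and only afterwards set $v=w$. Using that $S$ is invertible (full-rank covariance), so that $S^\top\Sigma_q^{-1}S=I$, the $q_w$-part of the log-ratio collapses to $\tfrac12\|\epsilon\|^2$, and with $A:=S^\top\Sigma_p^{-1}S$ (which is similar to $\Sigma_p^{-1}\Sigma_q$, hence has eigenvalues $\lambda_1,\dots,\lambda_d$ and $\det A=|\Sigma_q|/|\Sigma_p|=\prod_i\lambda_i$) this should reduce to
\[
\gdrep=\Bigl(\prod_i\lambda_i\Bigr)^{\alpha/2}\exp\Bigl(-\tfrac{\alpha}{2}\,\epsilon^\top(A-I)\epsilon\Bigr)\,B\epsilon\epsilon^\top,\qquad B=(\Sigma_p^{-1}-\Sigma_q^{-1})S,
\]
where the factor $B\epsilon\epsilon^\top$ is $-\nabla_S\log(p(S\epsilon)/q_w(S\epsilon))$ and the scalar prefactor is $(p(S\epsilon)/q_w(S\epsilon))^\alpha$.

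The key device for both moments is exponential tilting: multiplying the $\mathcal{N}(0,I)$ density by $\exp(-\tfrac{t\alpha}{2}\epsilon^\top(A-I)\epsilon)$ yields, up to a normalizing constant, a Gaussian with precision matrix $(1-t\alpha)I+t\alpha A$, which is exactly $U$ for $t=1$ and $V$ for $t=2$. The hypothesis $1+2\alpha(\lambda_i-1)>0$ makes $V\succ0$, and then $U\succ0$ follows since its eigenvalues $1+\alpha(\lambda_i-1)$ are the average of $1$ and $1+2\alpha(\lambda_i-1)$. Hence any polynomial expectation against the tilted weight equals $|M_t|^{-1/2}$ times the corresponding ordinary moment under $\mathcal{N}(0,M_t^{-1})$ ($M_1=U$, $M_2=V$), and in particular the estimator's second moment is finite precisely because $V\succ0$.

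The remainder is bookkeeping. For the numerator, take $t=1$ with the polynomial $B\epsilon\epsilon^\top$ and use $\E_{\mathcal{N}(0,U^{-1})}[\epsilon\epsilon^\top]=U^{-1}$ to get $\E\,\gdrep=(\prod_i\lambda_i)^{\alpha/2}|U|^{-1/2}BU^{-1}$, hence $\|\E\,\gdrep\|_F^2=(\prod_i\lambda_i)^{\alpha}|U|^{-1}\|BU^{-1}\|_F^2$. For the denominator, since $B\epsilon\epsilon^\top$ is rank one, $\|B\epsilon\epsilon^\top\|_F^2=(\epsilon^\top B^\top B\epsilon)(\epsilon^\top\epsilon)$, so with $t=2$ I need the Gaussian fourth-moment identity $\E_{x\sim\mathcal{N}(0,\Xi)}[(x^\top Cx)(x^\top Dx)]=\mathrm{tr}(C\Xi)\mathrm{tr}(D\Xi)+2\,\mathrm{tr}(C\Xi D\Xi)$ with $C=B^\top B$, $D=I$, $\Xi=V^{-1}$, giving $\E\|\gdrep\|_F^2=(\prod_i\lambda_i)^{\alpha}|V|^{-1/2}[\mathrm{tr}(V^{-1})\mathrm{tr}(BV^{-1}B^\top)+2\|BV^{-1}\|_F^2]$. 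Forming the ratio cancels $(\prod_i\lambda_i)^{\alpha}$ and leaves the determinant factor $|V|^{1/2}|U|^{-1}=\prod_i\sqrt{1+2\alpha(\lambda_i-1)}/(1+\alpha(\lambda_i-1))$; a term-by-term check using $(1+\alpha(\lambda-1))^2=1+2\alpha(\lambda-1)+\alpha^2(\lambda-1)^2$ identifies this with $\prod_i f(\lambda_i,\alpha)$ from eq.~\ref{eq:fc2}, which is the claimed expression.

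I expect the only genuinely delicate points to be (i) performing the matrix derivative $\nabla_S(p/q_v)^\alpha|_{v=w}$ correctly — in particular keeping the $q_v$ parameter frozen while differentiating, which is exactly what distinguishes the doubly-reparameterized estimator from the plain one and what produces the clean rank-one factor $B\epsilon\epsilon^\top$ — and (ii) tracking the normalization constants of the two tilted Gaussians, since it is precisely the $|U|$ and $|V|$ factors that combine to produce $\prod_i f(\lambda_i,\alpha)$; everything else is standard Gaussian moment algebra.
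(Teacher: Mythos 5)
Your proposal is correct and follows essentially the same route as the paper: derive the closed form $\gdrep \propto (p/q_w)^\alpha\, B\epsilon\epsilon^\top$, absorb the exponential factor into a tilted Gaussian with precision $U$ (first moment) or $V$ (second moment), and identify the leftover determinant ratio $\sqrt{\det V}/\det U$ with $\prod_i f(\lambda_i,\alpha)$ via the eigenvalues $1+\alpha(\lambda_i-1)$ and $1+2\alpha(\lambda_i-1)$. The only (cosmetic) difference is that you evaluate $\E\Vert B\epsilon\epsilon^\top\Vert_F^2$ via the rank-one identity and the standard quartic moment formula $\E[(x^\top Cx)(x^\top Dx)]=\mathrm{tr}(C\Xi)\mathrm{tr}(D\Xi)+2\,\mathrm{tr}(C\Xi D\Xi)$, where the paper performs the equivalent component-wise sum by hand.
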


\section*{Acknowledgements}

We would like to thank Javier Burroni and Juan Cristi for useful feedback and suggestions.

\bibliography{control}
\bibliographystyle{icml2021}

\appendix
\newpage
\onecolumn

\appendix

\section{Comparison of Estimators} \label{app:compare}

In this section we present a simple empirical comparison between the estimators $\gdrep$ and $\grep$. We consider the same setting as in Section \ref{sec:motiv}, where $p$ is a $d$-dimensional standard Gaussian, and $q_w$ is set to be a mean-zero factorized Gaussian, with scales initialized to 2. We optimize $D_\alpha(p||q_w)$ from eq. \ref{eq:obj_a}. We do so by running SGD with the estimators $\gdrep$ and $\grep$ for 1000 steps.

We perform this optimization for four different dimensionalities, $d \in \{2, 8, 16, 32\}$, for $\alpha \in \{0, 0.4, 0.9, 1.5\}$, and for gradient estimators obtained averaging $N$ samples, for $N \in \{1,10,10^2\}$. For each estimator and triplet $(d, \alpha, N)$ we tune the step-size; we ran simulations for all step-sizes in the set $\{10^i\}_{i=-7}^7$ and select the one that leads to the best final performance. All results shown are average over 15 simulations.

Fig. \ref{fig:compare_gauss} shows the results. It can be observed that the results obtained using estimator $\gdrep$ are strictly better that the results with $\grep$. In some cases we can even observe that $\gdrep$ using $N = 1$ performs better than $\grep$ using $N = 100$.

\begin{figure}[h]
  \centering
  \includegraphics[scale=0.22, trim = {0 2.3cm 0 0}, clip]{./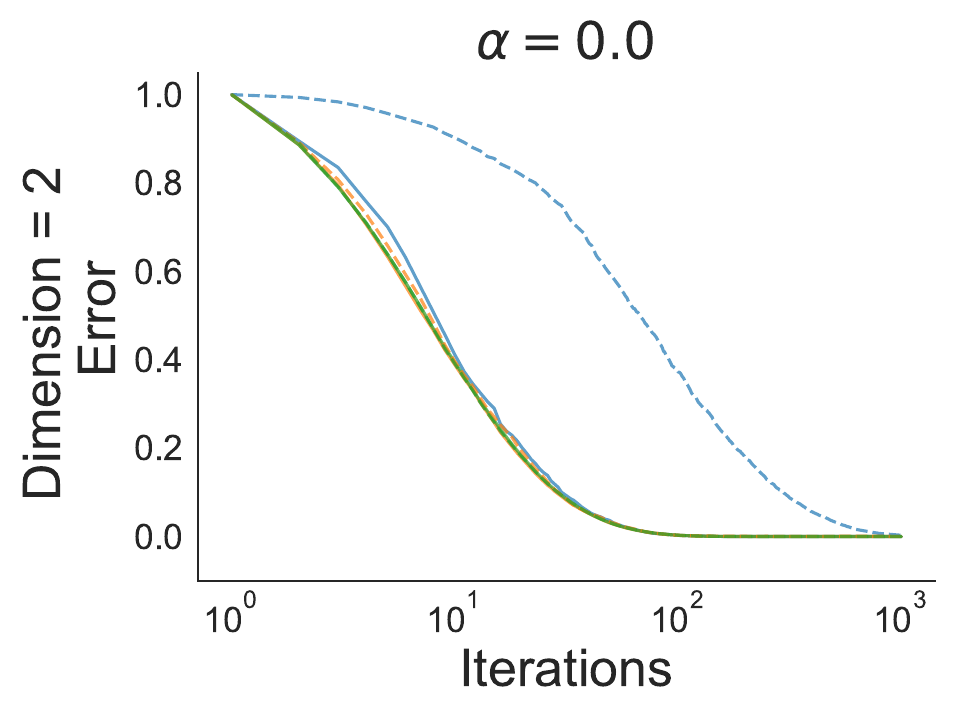}\hfill
  \includegraphics[scale=0.22, trim = {2.31cm 2.3cm 0 0}, clip]{./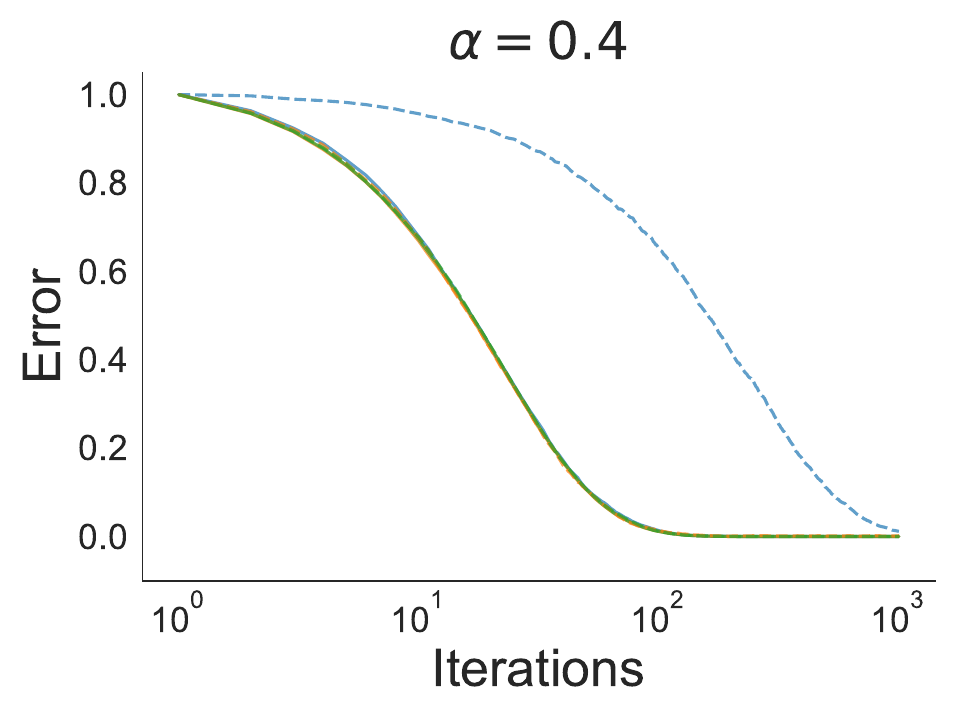}\hfill
  \includegraphics[scale=0.22, trim = {2.31cm 2.3cm 0 0}, clip]{./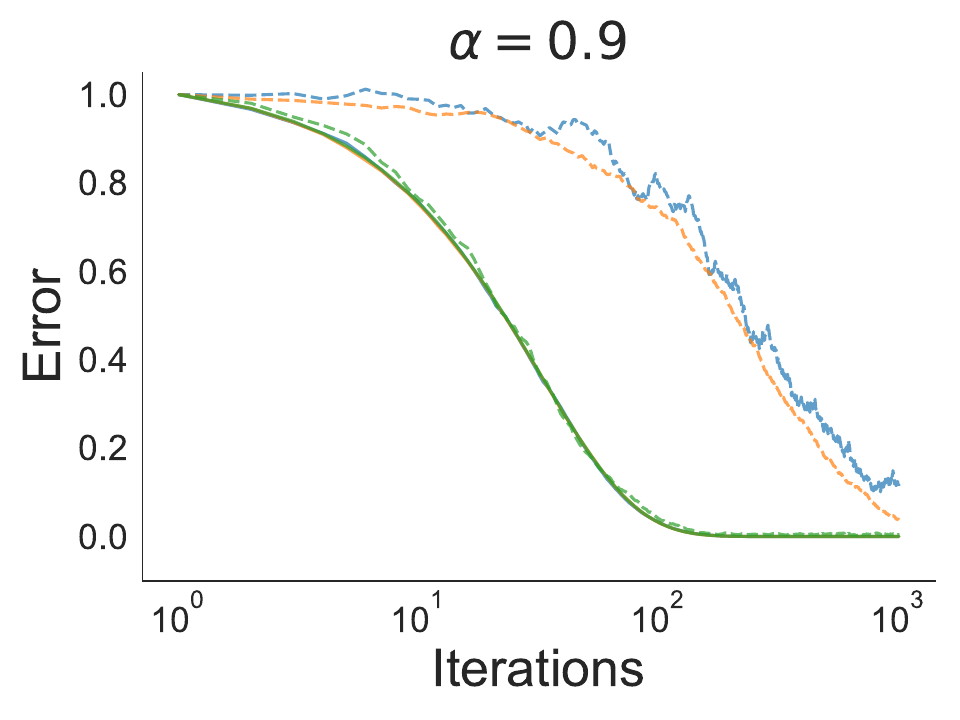}\hfill
  \includegraphics[scale=0.22, trim = {2.31cm 2.3cm 0 0}, clip]{./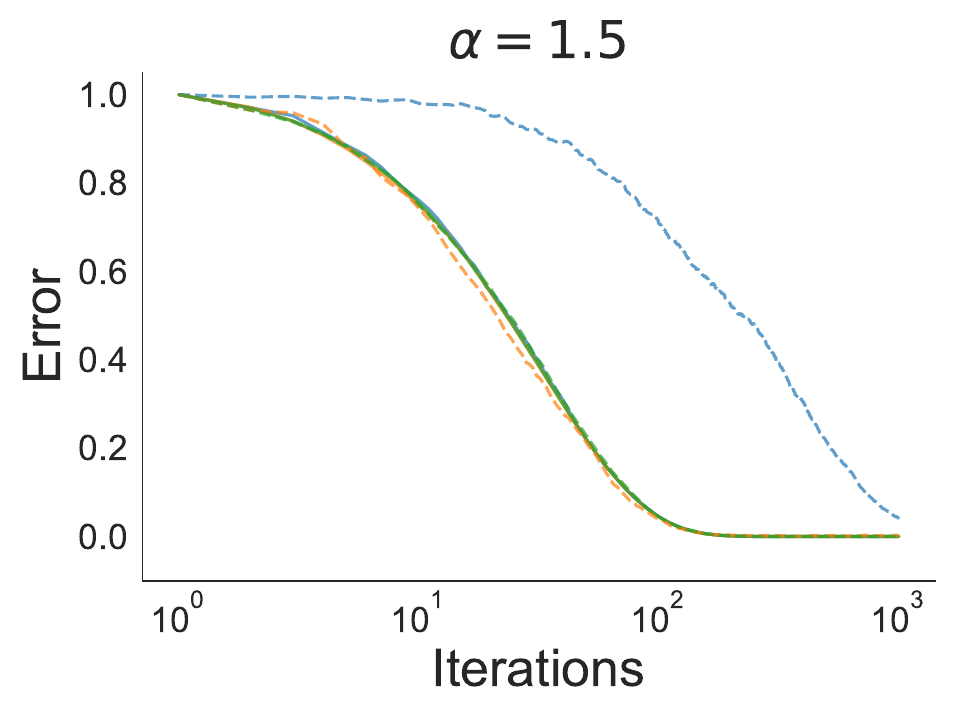}

  \vspace{0.2cm}

  \includegraphics[scale=0.22, trim = {0 2.3cm 0 1.05cm}, clip]{./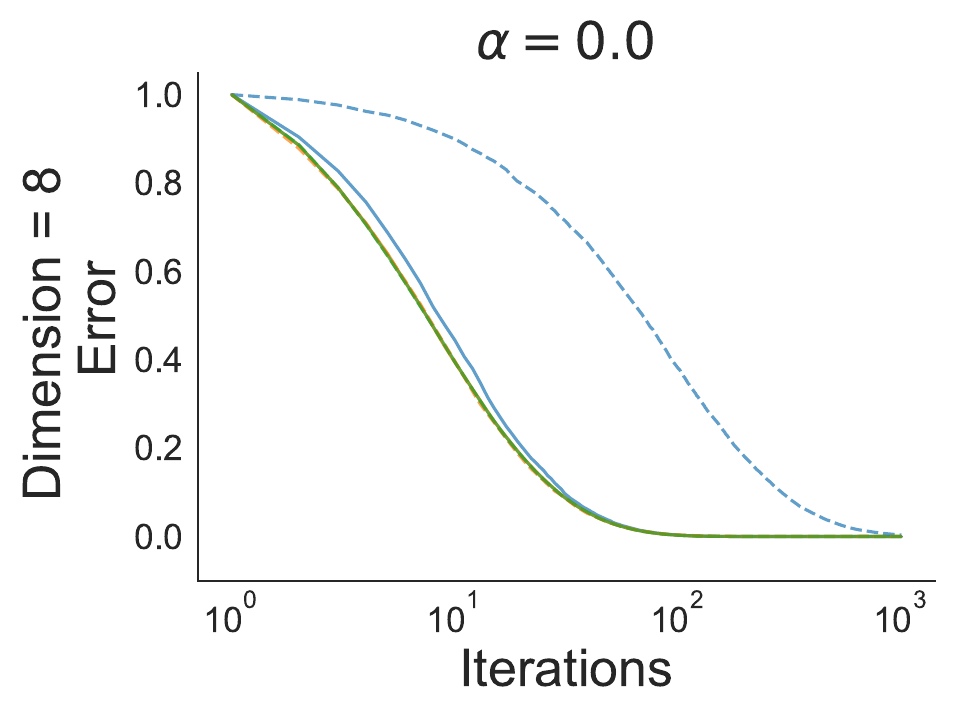}\hfill
  \includegraphics[scale=0.22, trim = {2.31cm 2.3cm 0 1.05cm}, clip]{./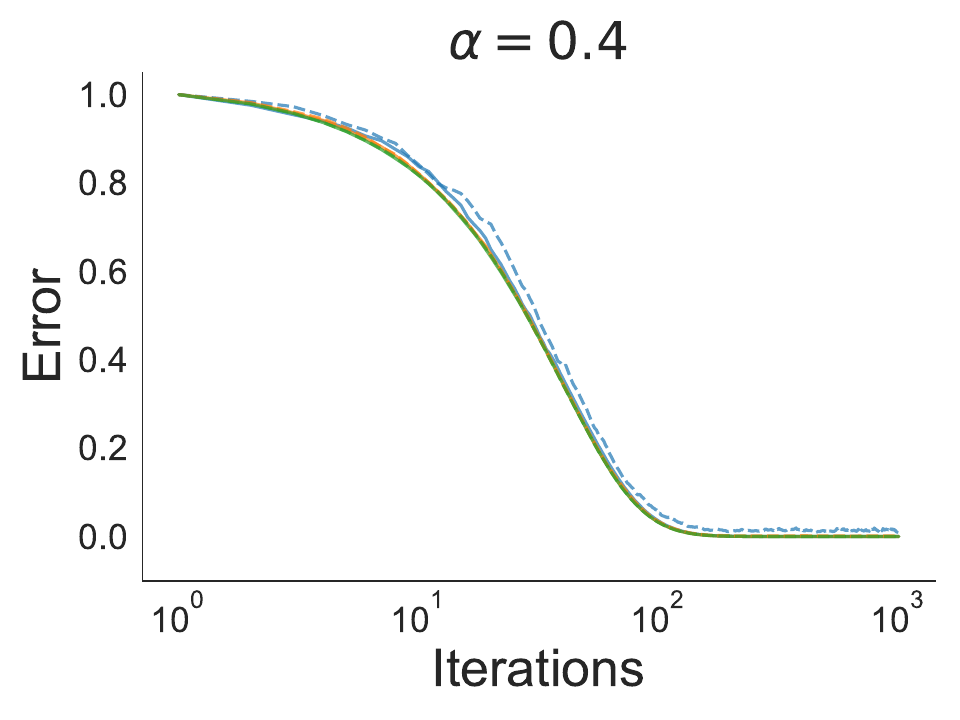}\hfill
  \includegraphics[scale=0.22, trim = {2.31cm 2.3cm 0 1.05cm}, clip]{./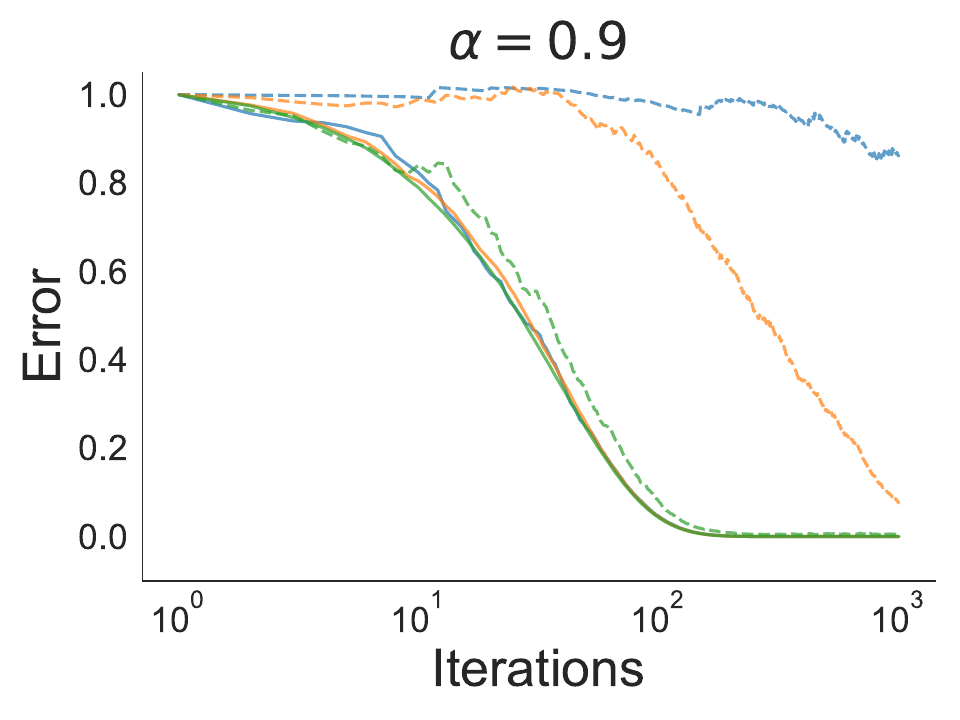}\hfill
  \includegraphics[scale=0.22, trim = {2.31cm 2.3cm 0 1.05cm}, clip]{./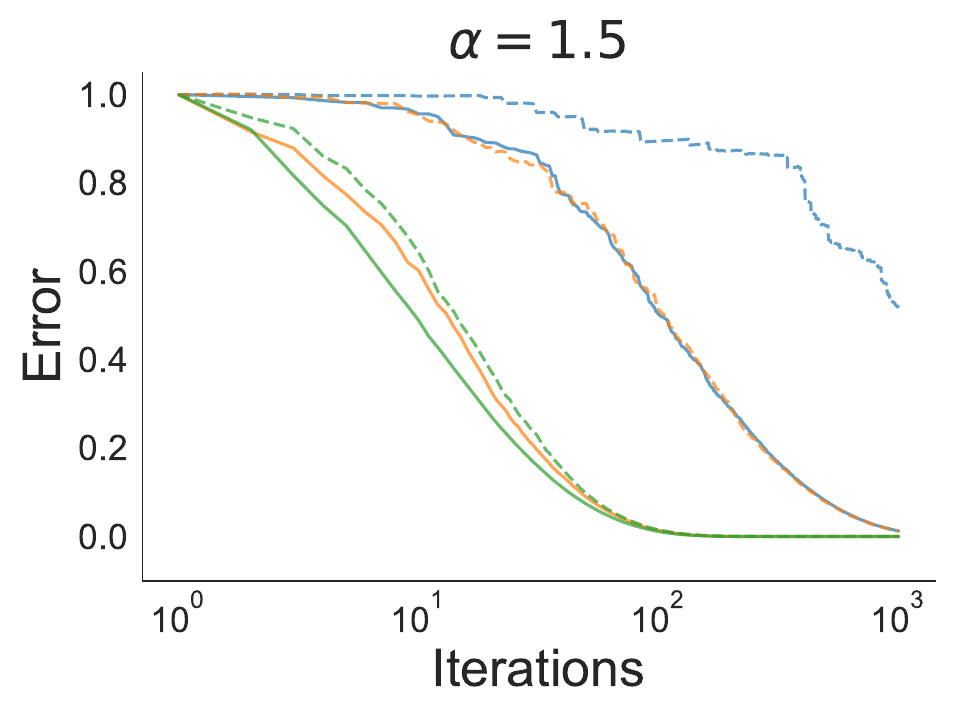}

  \vspace{0.2cm}

  \includegraphics[scale=0.22, trim = {0 2.3cm 0 1.05cm}, clip]{./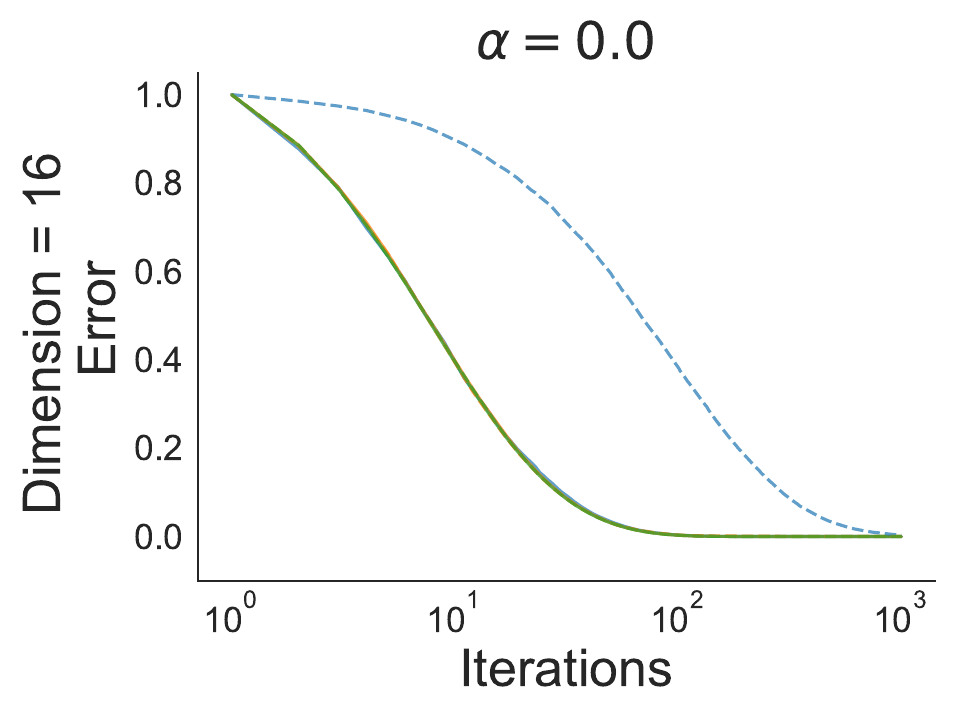}\hfill
  \includegraphics[scale=0.22, trim = {2.31cm 2.3cm 0 1.05cm}, clip]{./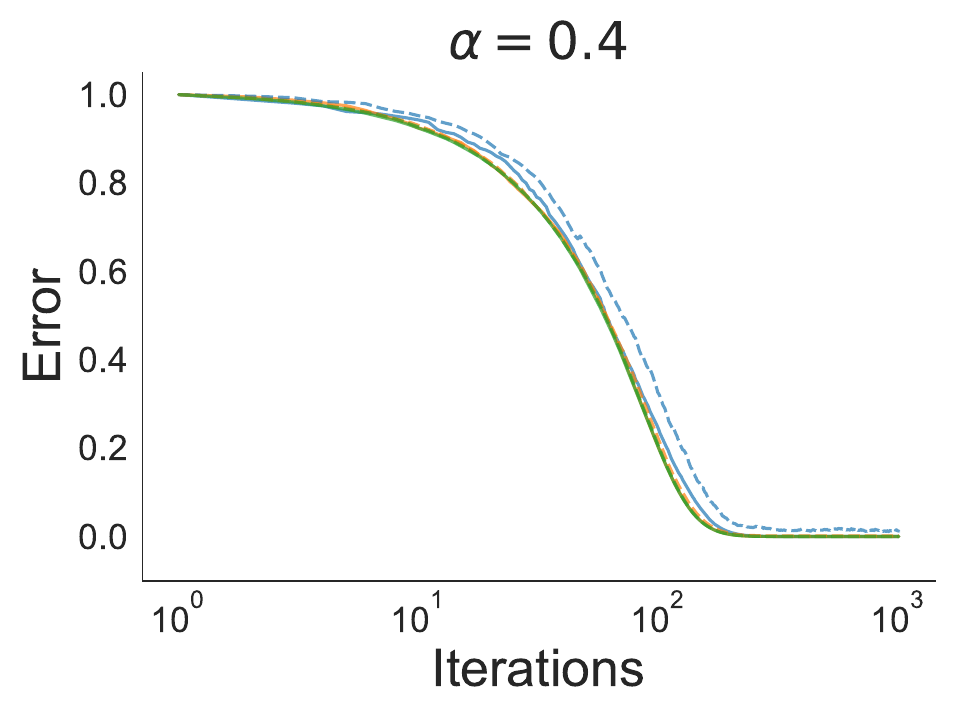}\hfill
  \includegraphics[scale=0.22, trim = {2.31cm 2.3cm 0 1.05cm}, clip]{./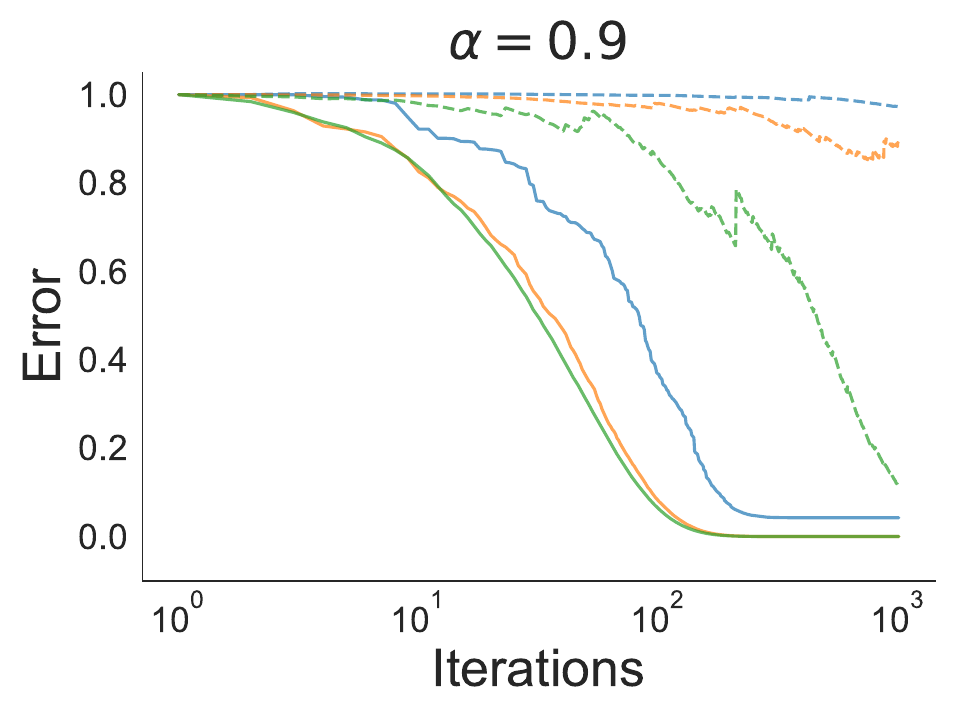}\hfill
  \includegraphics[scale=0.22, trim = {2.31cm 2.3cm 0 1.05cm}, clip]{./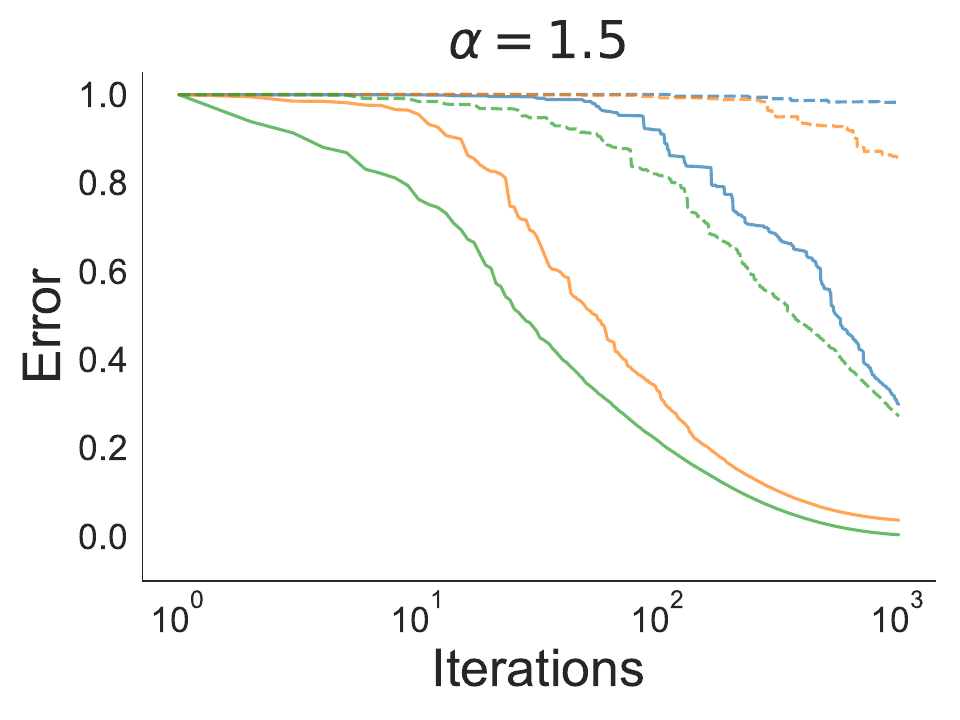}

  \vspace{0.2cm}

  \includegraphics[scale=0.22, trim = {0 0 0 1.1cm}, clip]{./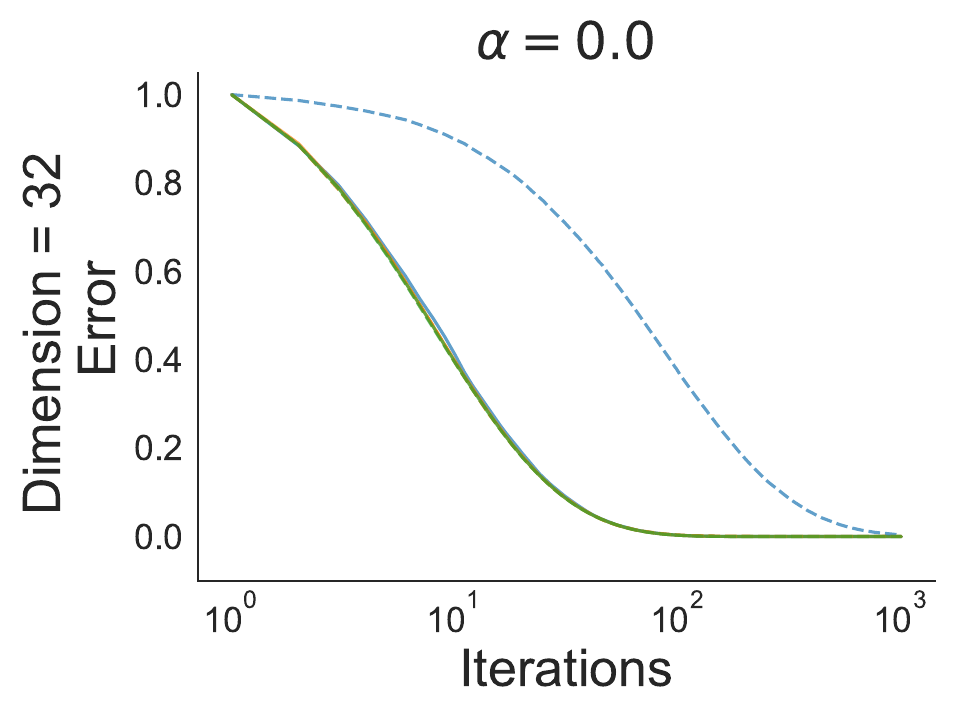}\hfill
  \includegraphics[scale=0.22, trim = {2.31cm 0 0 1.1cm}, clip]{./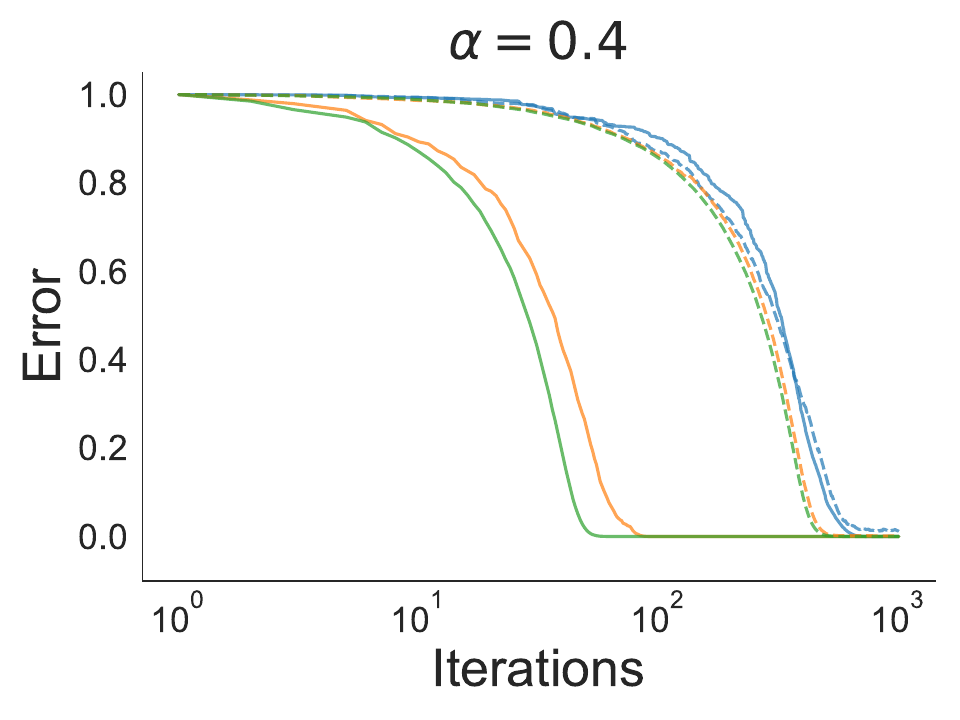}\hfill
  \includegraphics[scale=0.22, trim = {2.31cm 0 0 1.1cm}, clip]{./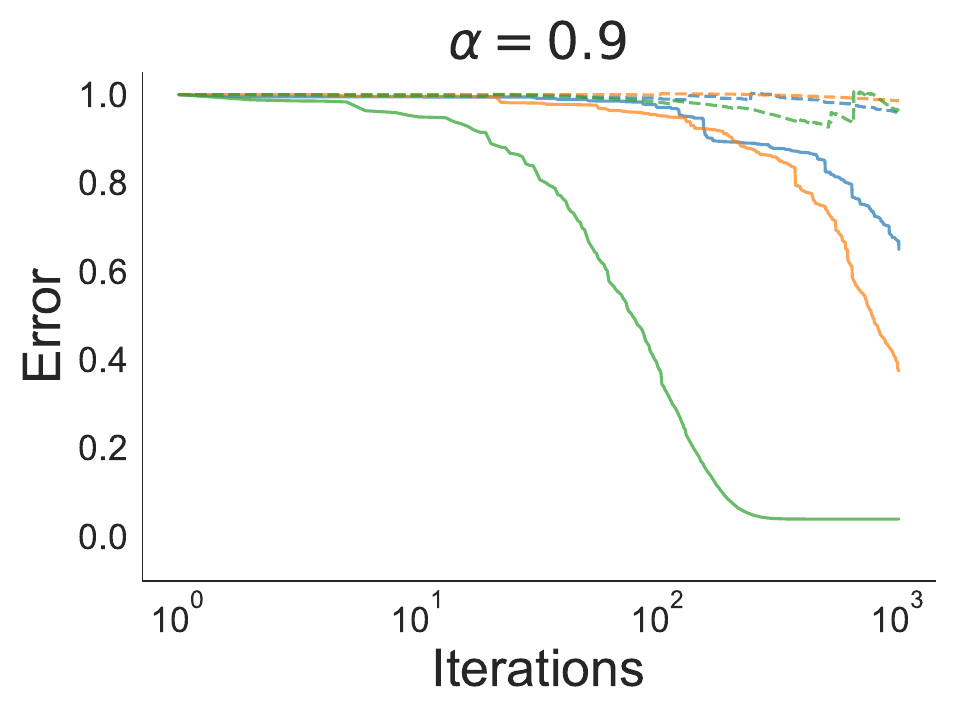}\hfill
  \includegraphics[scale=0.22, trim = {2.31cm 0 0 1.1cm}, clip]{./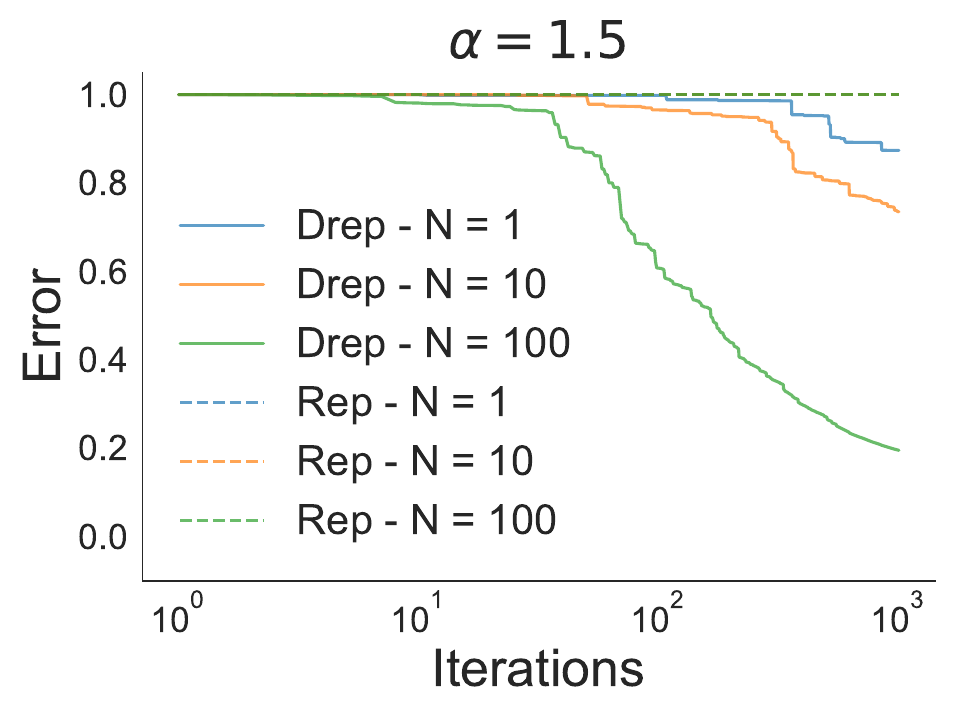}
  
  \caption{\textbf{$\mathbf{\gdrep}$ tends to work significantly better than $\mathbf{\grep}$.} Each plot shows the optimization results for each pair $(d, \alpha)$ for all values of $N$ considered. Plots show ``Error vs. Iteration'', where error is the normalized distance between the scale parameters of $p$ and $q_w$, computed as $(1/ d) \sum_i (\sigma_{qi} - 1)^2$.}
  \label{fig:compare_gauss}
\end{figure}

\clearpage

\section{Results using Adam} \label{app:adam}

In this section we show the same results as in the main paper, but using Adam \cite{adam} to optimize instead of plain SGD. Results for the Gaussian setting described in Section~\ref{sec:motiv} are shown in Fig.~\ref{fig:noopt_gauss_adam}, and results for the Logistic Regression model described in Section~\ref{sec:exps} are shown in Fig.~\ref{fig:exps_lr_opt_adam}. It can be observed that Adam also struggles when using gradient estimators with low SNR, and that for logistic regression with the \textit{australian} dataset ($d=14$) it also requires a huge number of samples to retain convergence (for a moderate value of $\alpha$). Overall, the results obtained using Adam are similar to those obtained using SGD.

\begin{figure*}[ht]
  \centering

  \includegraphics[scale=0.25, trim = {0 2.3cm 0 0}, clip]{./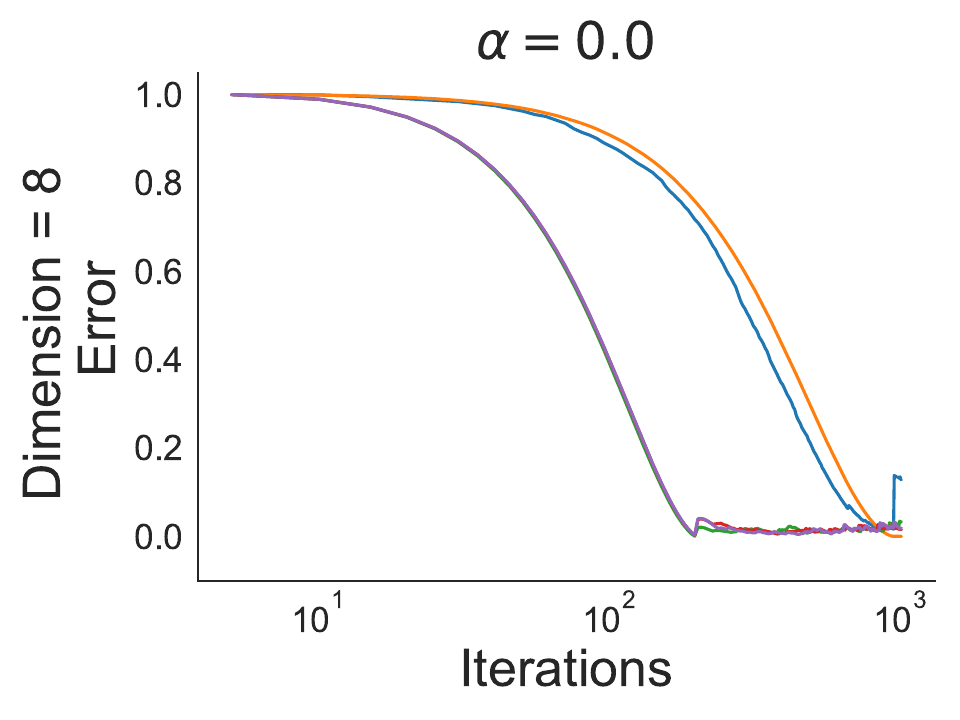}\hfill
  \includegraphics[scale=0.25, trim = {2.31cm 2.3cm 0 0}, clip]{./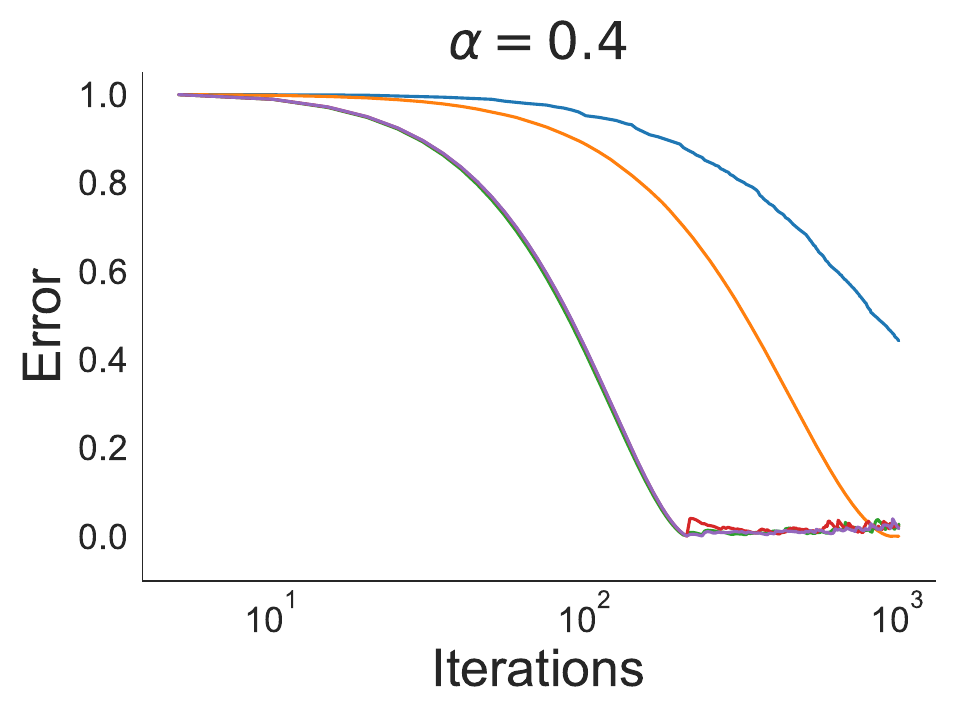}\hfill
  \includegraphics[scale=0.25, trim = {2.31cm 2.3cm 0 0}, clip]{./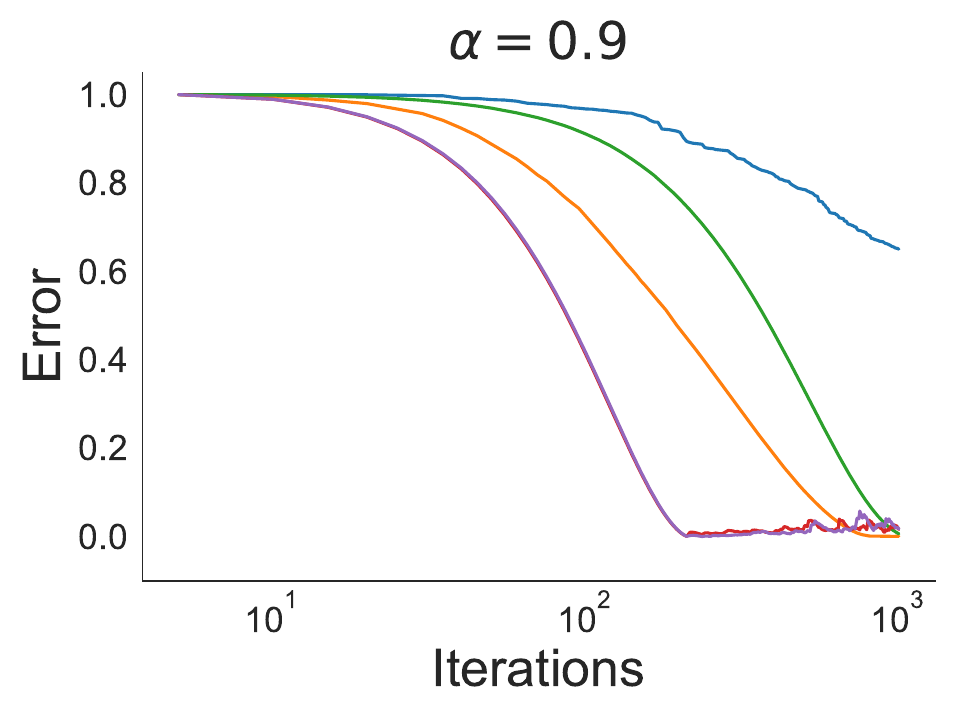}\hfill
  \includegraphics[scale=0.25, trim = {2.31cm 2.3cm 0 0}, clip]{./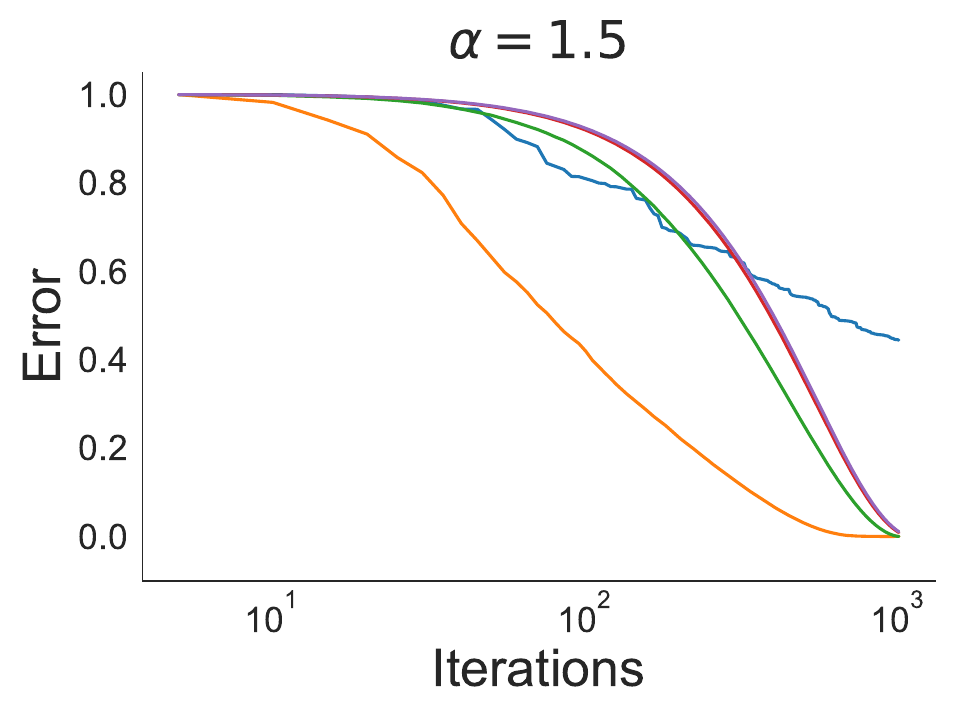}

  \vspace{0.2cm}

  \includegraphics[scale=0.25, trim = {0 2.3cm 0 1.05cm}, clip]{./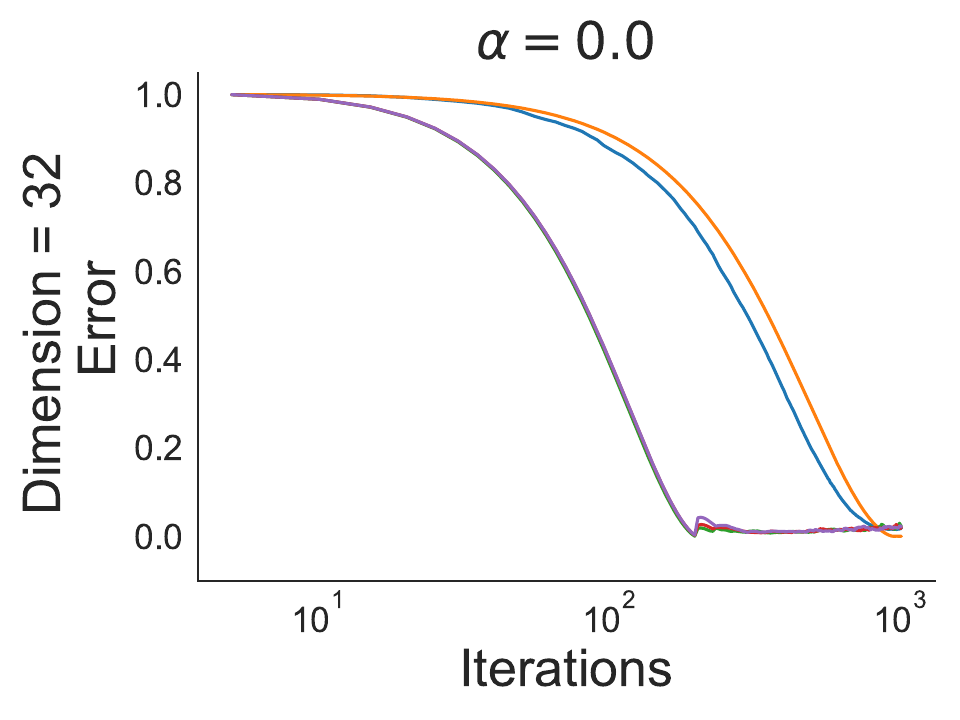}\hfill
  \includegraphics[scale=0.25, trim = {2.31cm 2.3cm 0 1.05cm}, clip]{./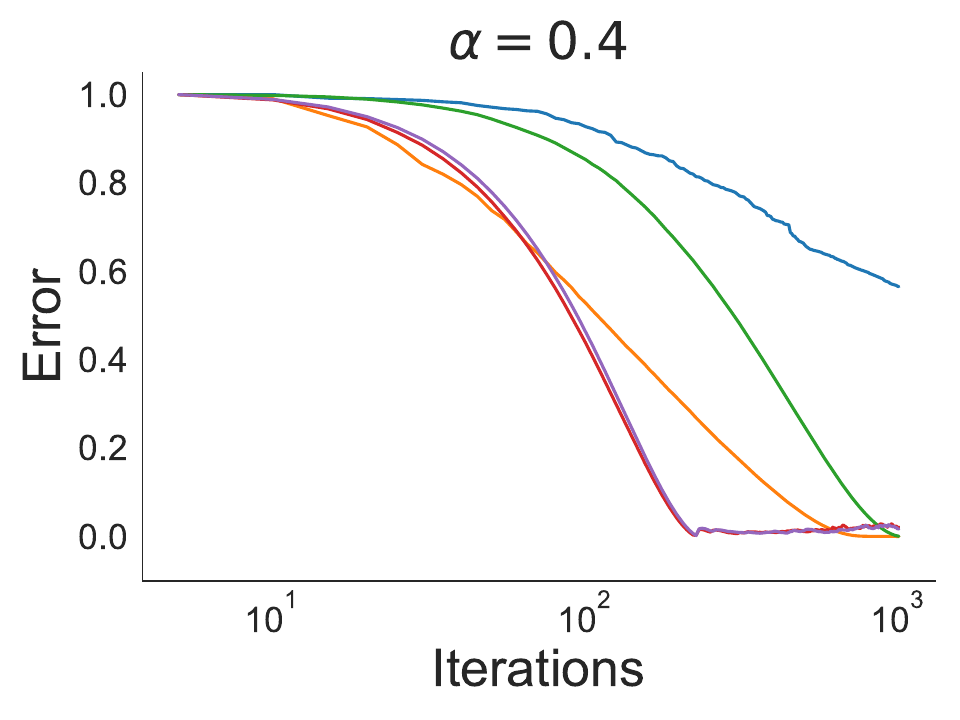}\hfill
  \includegraphics[scale=0.25, trim = {2.31cm 2.3cm 0 1.05cm}, clip]{./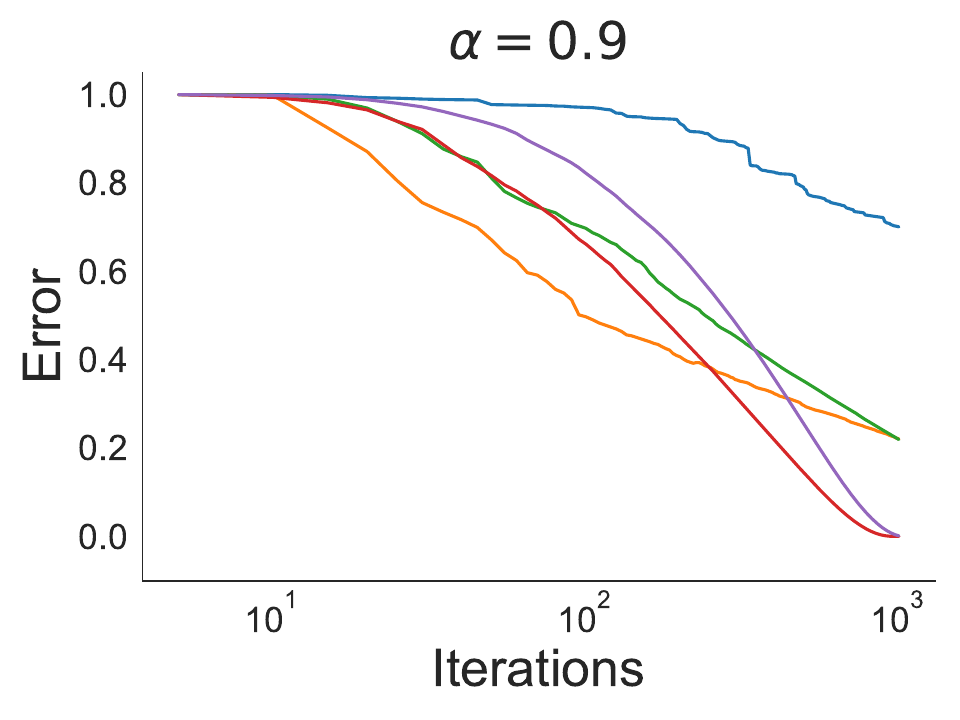}\hfill
  \includegraphics[scale=0.25, trim = {2.31cm 2.3cm 0 1.05cm}, clip]{./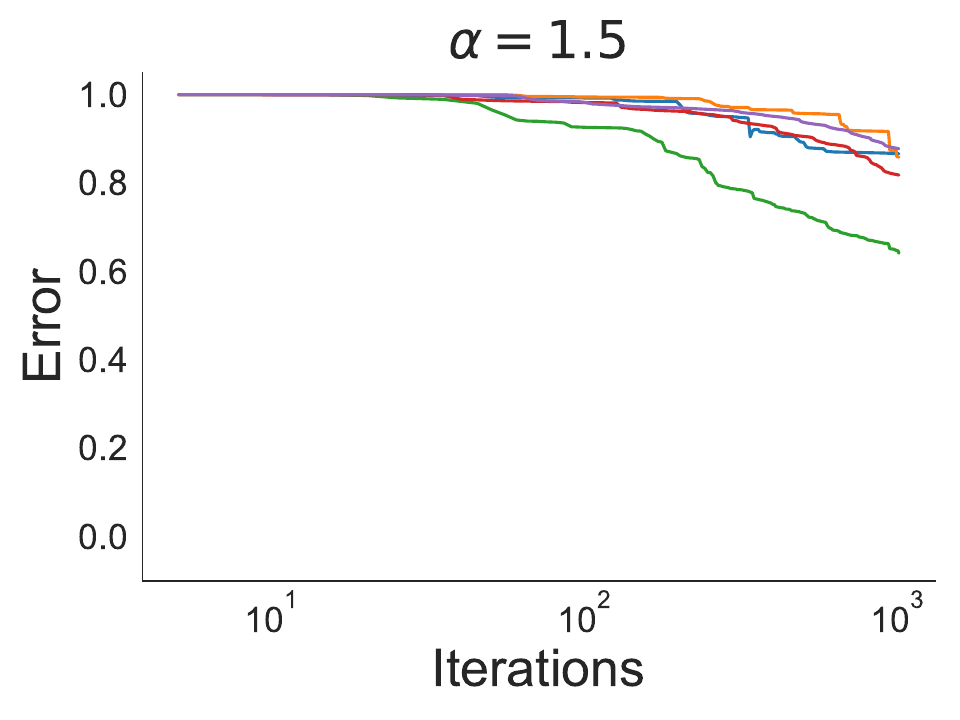}

  \vspace{0.2cm}

  \includegraphics[scale=0.25, trim = {0 0 0 1.1cm}, clip]{./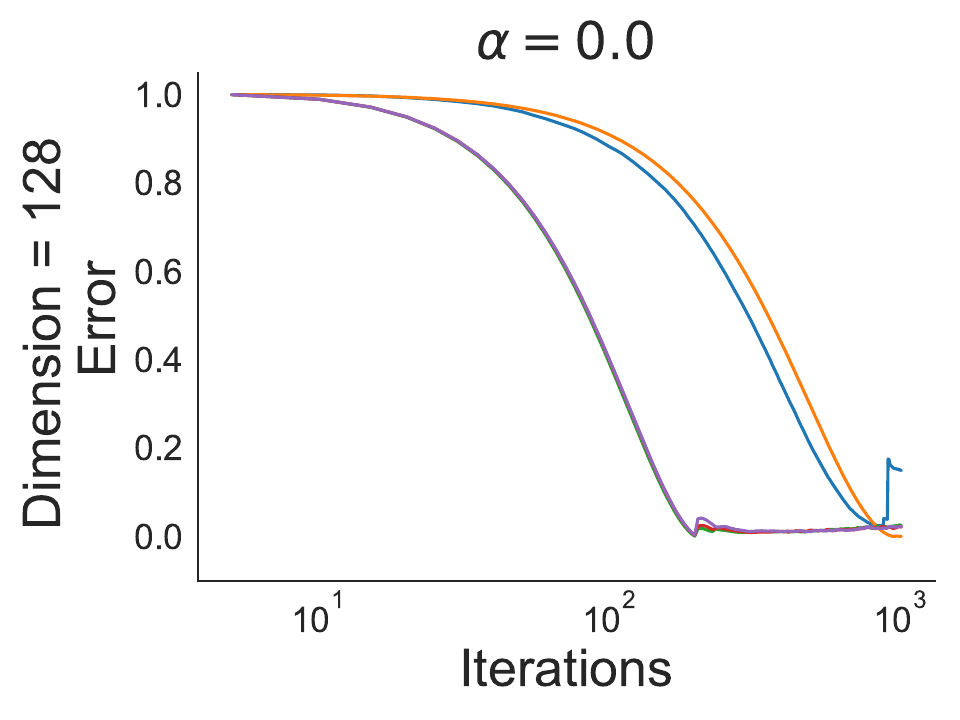}\hfill
  \includegraphics[scale=0.25, trim = {2.31cm 0 0 1.1cm}, clip]{./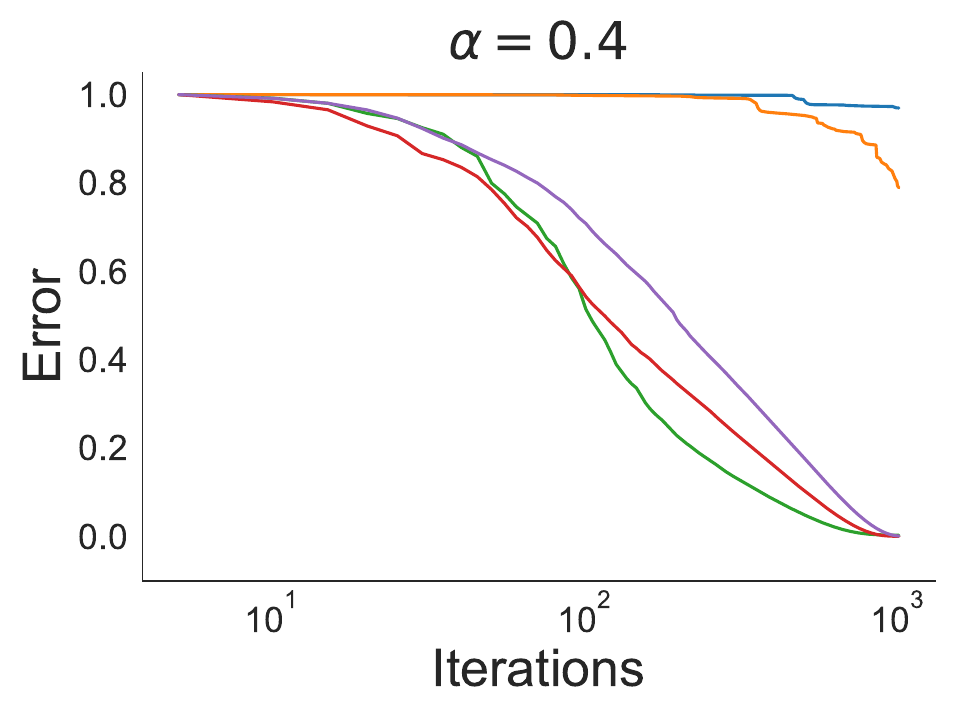}\hfill
  \includegraphics[scale=0.25, trim = {2.31cm 0 0 1.1cm}, clip]{./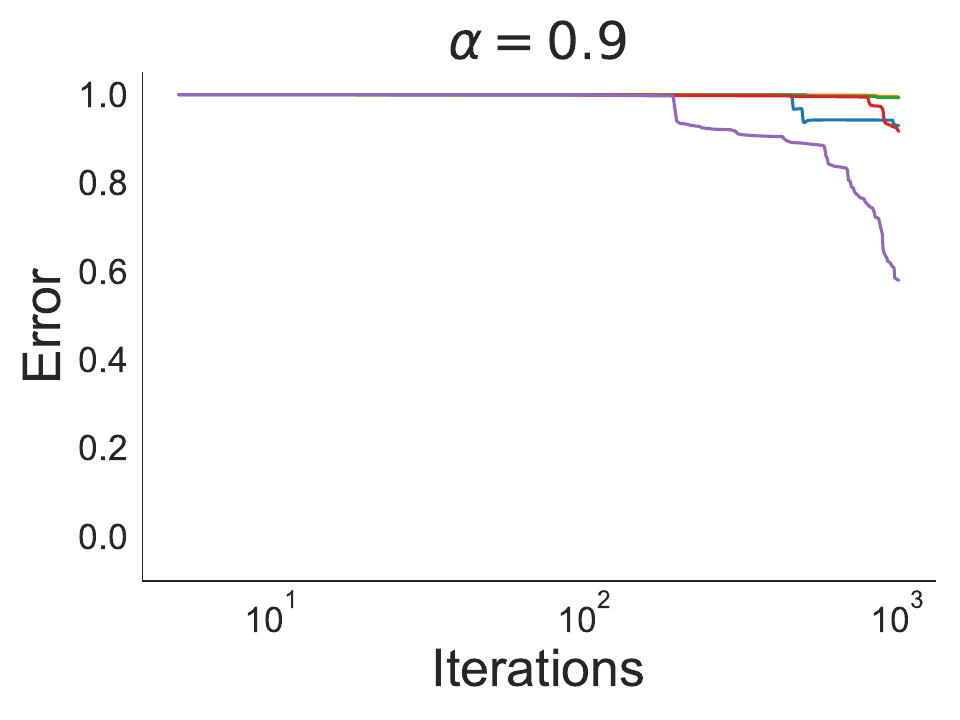}\hfill
  \includegraphics[scale=0.25, trim = {2.31cm 0 0 1.1cm}, clip]{./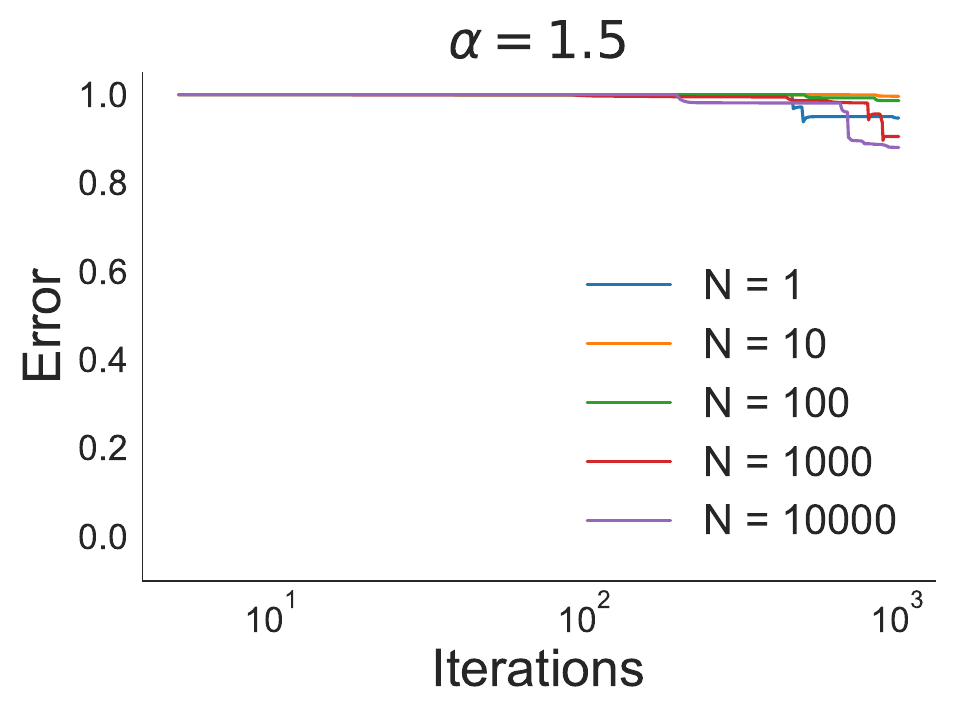}
  
  \caption{Optimization results for each pair $(d, \alpha)$ using Adam for all values of $N$ considered (exact setting described in Section~\ref{sec:motiv}). Plots show ``Error vs. Iteration'', where error is the normalized distance between the scale parameters of $p$ and $q_w$, computed as $(1/ d) \sum_i (\sigma_{qi} - 1)^2$.}
  \label{fig:noopt_gauss_adam}
\end{figure*}

\begin{figure*}[ht]
  \centering
  \includegraphics[scale=0.24, trim = {0 2cm 0 0}, clip]{./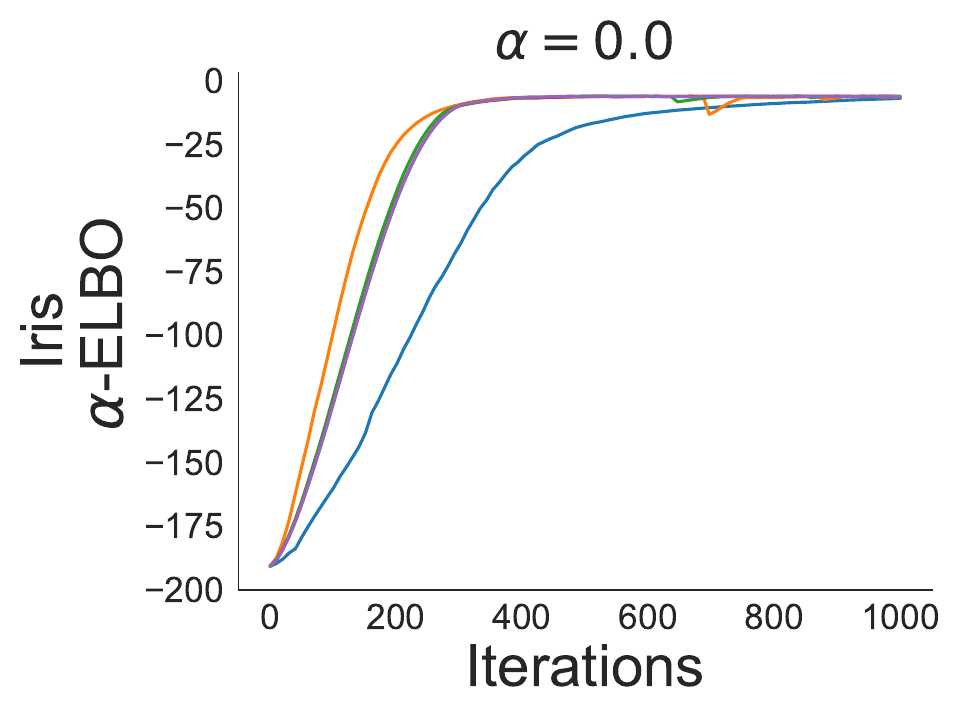}\hfill
  \includegraphics[scale=0.24, trim = {0 2cm 0 0}, clip]{./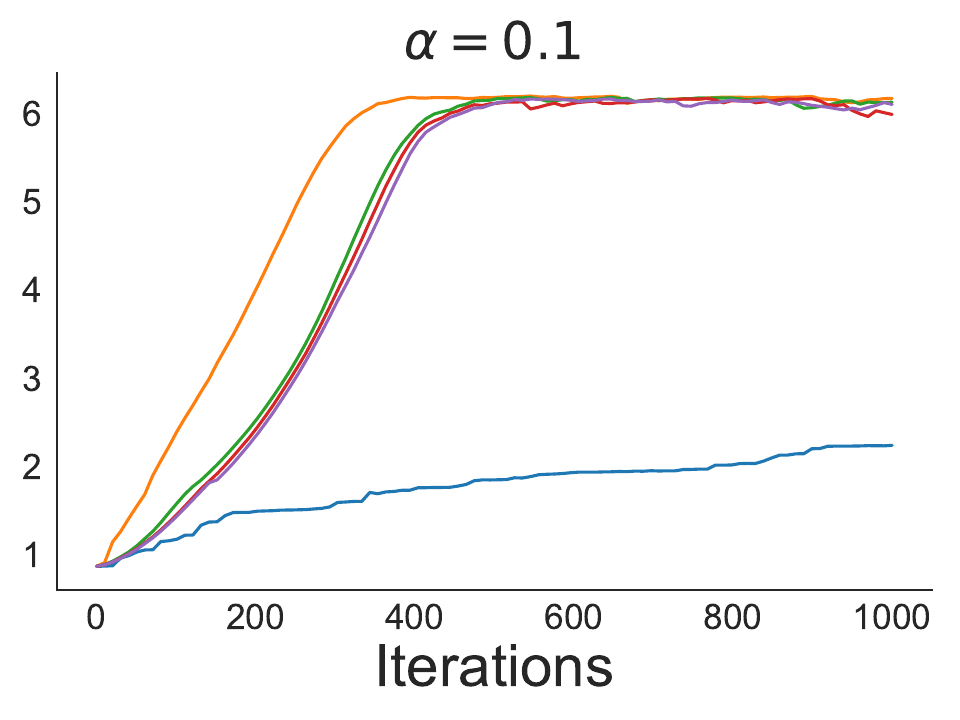}\hfill
  \includegraphics[scale=0.24, trim = {0 2cm 0 0}, clip]{./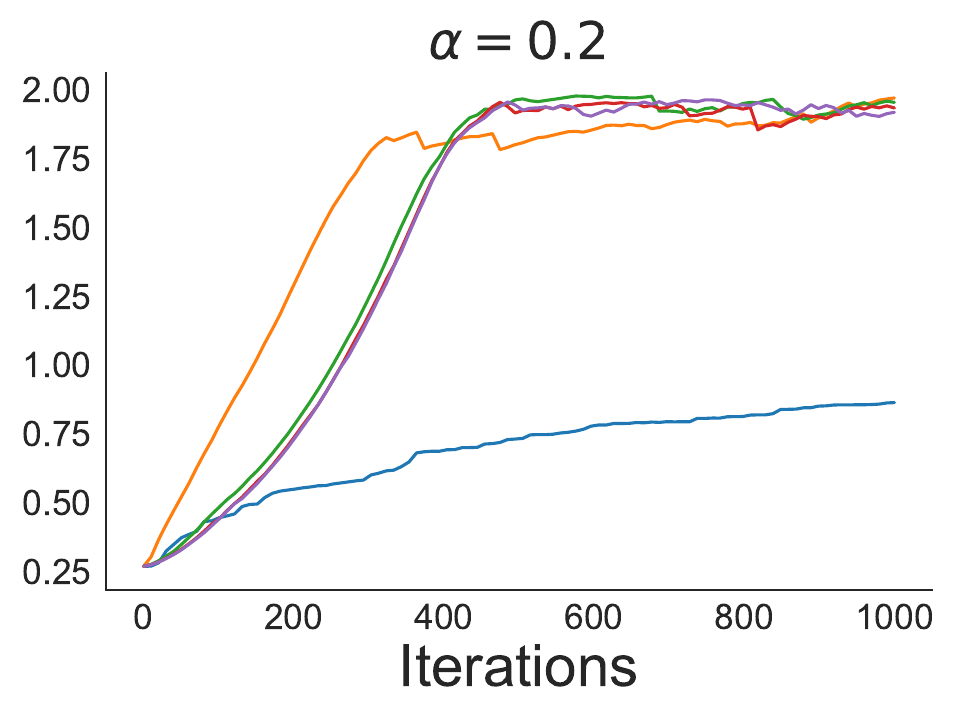}\hfill
  \includegraphics[scale=0.24, trim = {0 2cm 0 0}, clip]{./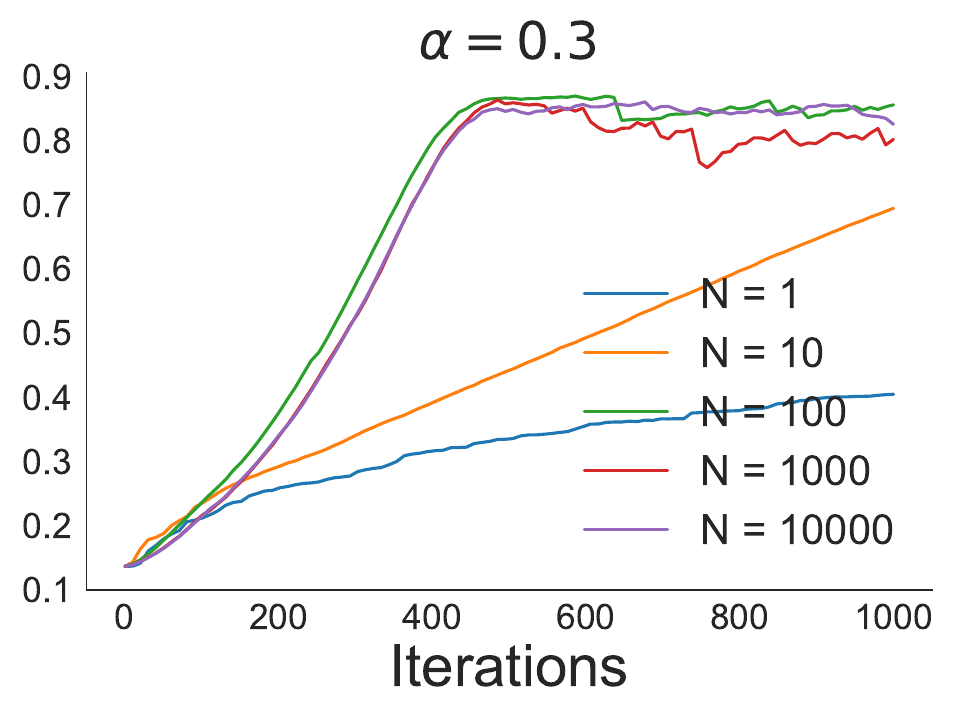}

  \vspace{0.1cm}

  \includegraphics[scale=0.24, trim = {0 0 0 0}, clip]{./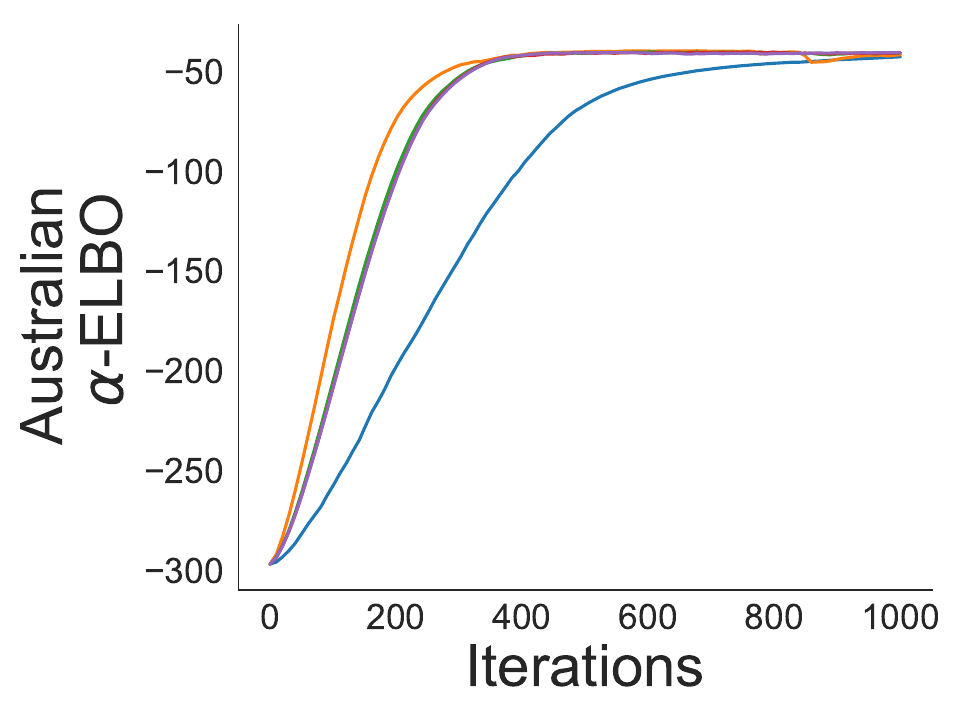}\hfill
  \includegraphics[scale=0.24, trim = {0 0 0 0}, clip]{./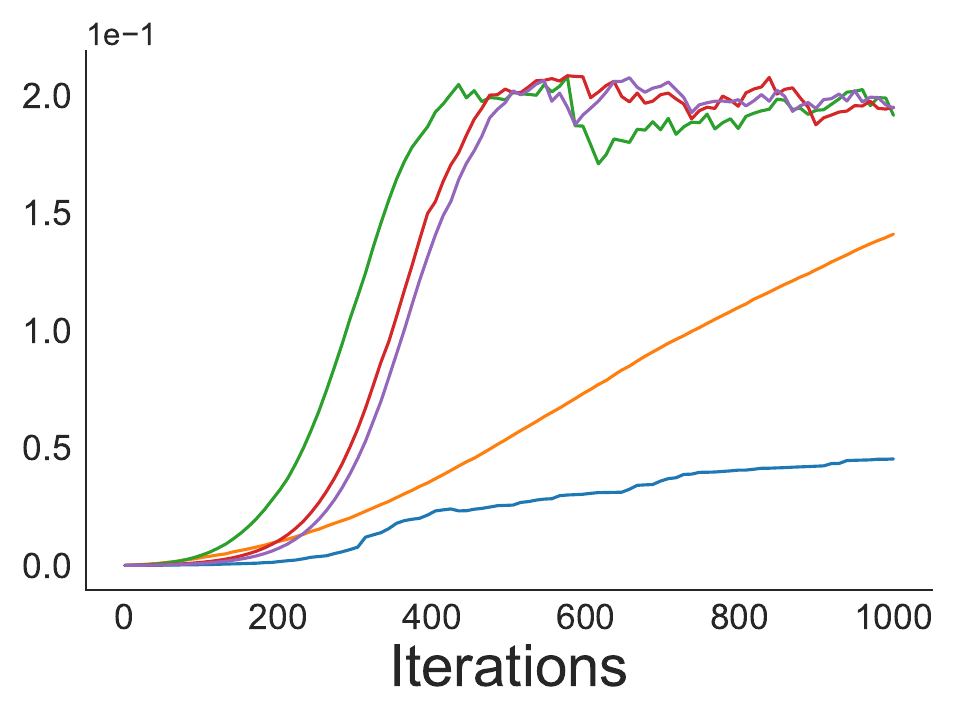}\hfill
  \includegraphics[scale=0.24, trim = {0 0 0 0}, clip]{./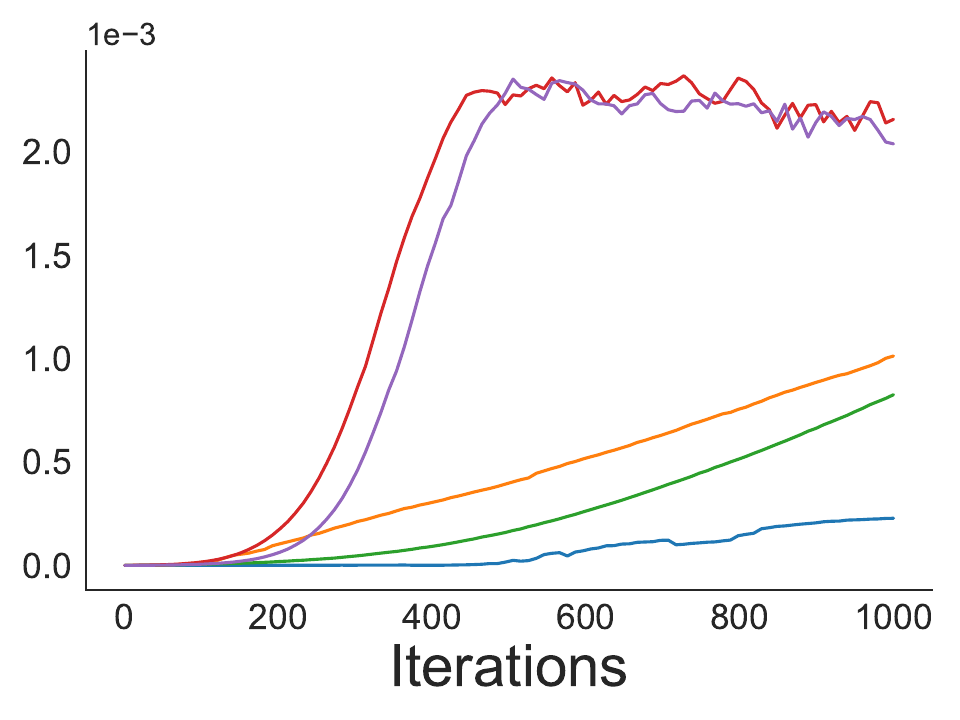}\hfill
  \includegraphics[scale=0.24, trim = {0 0 0 0}, clip]{./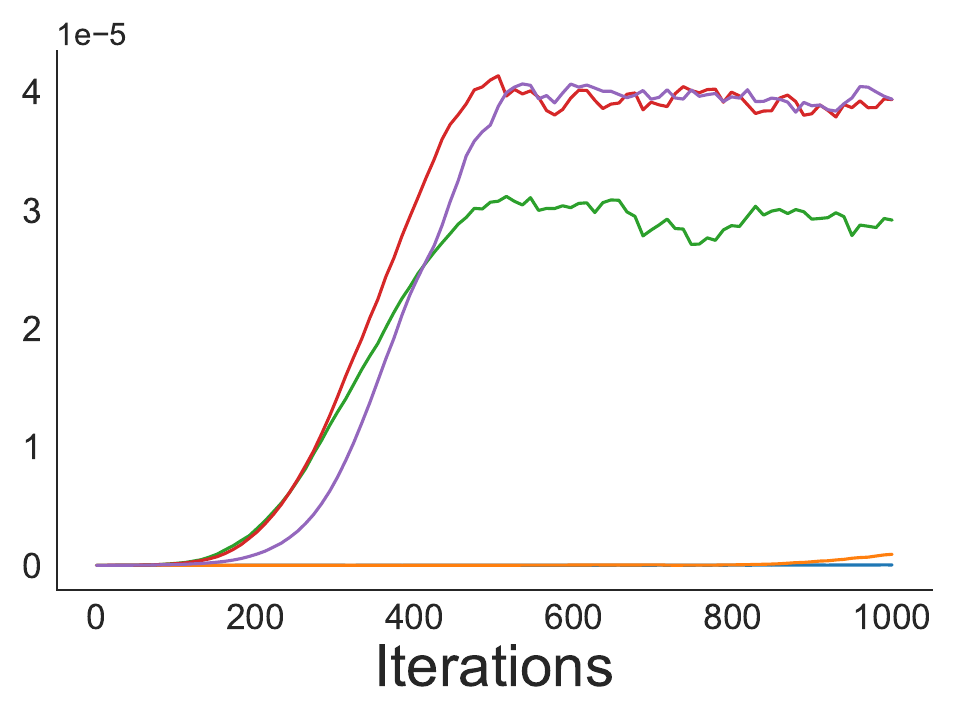}
  \caption{Optimization results using Adam for each $\alpha$ for all values of $N$ considered (exact setting described in Section~\ref{sec:exps}). The loss at each step (eq. \ref{eq:obj_L}) is estimated using $2.5 \times 10^5$ samples for both datasets.}
  \label{fig:exps_lr_opt_adam}
\end{figure*}

\clearpage

\section{Derivation of Estimators} \label{app:estimators}

We re-write the true gradient in different forms for each method. The corresponding estimator is obtained by estimating the resulting expectations with samples $\epsilon \sim q_0(\epsilon)$.

\textbf{Reparameterization.}

\begin{itemize}
\item For $\alpha \neq \{0, 1\}$ we have

\begin{align}
\nabla_w D_\alpha(p||q_w) & = \nabla_w \frac{1}{\alpha (\alpha - 1)}\E_{q_w(z)} \left[ \left( \frac{p(z)}{q_w(z)} \right)^\alpha - 1 \right]\\
& = \nabla_w \frac{1}{\alpha (\alpha - 1)}\E_{q_0(\epsilon)} \left[ \left( \frac{p(\T)}{q_w(\T)} \right)^\alpha - 1 \right]\\
& = \frac{1}{\alpha (\alpha - 1)}\E_{q_0(\epsilon)} \left[ \nabla_w \left( \frac{p(\T)}{q_w(\T)} \right)^\alpha \right].
\end{align}

\item For $\alpha \rightarrow 0$ we know that $D_\alpha(p||q_w) \rightarrow \KL(q_w||p)$, then the gradient can be expressed as

\begin{align}
\nabla_w D_\alpha(p||q_w) & = \nabla_w \E_{q_w(z)} \left[ \log \frac{q_w(z)}{p(z)} \right]\\
 & = \nabla_w \E_{q_0(\epsilon)} \left[ \log \frac{q_w(\T)}{p(\T)} \right]\\
 & = \E_{q_0(\epsilon)} \left[ \nabla_w \log \frac{q_w(\T)}{p(\T)} \right].
\end{align}

\item For $\alpha \rightarrow 1$ we know that $D_\alpha(p||q_w) \rightarrow \KL(p||q_w)$, then the gradient can be expressed as

\begin{align}
\nabla_w D_\alpha(p||q_w) & = \nabla_w \E_{p(z)} \left[ \log \frac{p(z)}{q_w(z)} \right]\\
 & = -\E_{p(z)} \left[ \nabla_w \log q_w(z) \right] \label{eqapp:KLpq}.
\end{align}

The expression from eq. \ref{eqapp:KLpq} is not useful, since we cannot draw samples from $p$. However, we can rewrite it as

\begin{align}
\nabla_w D_\alpha(p||q_w) & = -\E_{p(z)} \left[ \nabla_w \log q_w(z) \right]\\
& = -\E_{p(z)} \left[ \frac{1}{q_v(z)} \nabla_w q_w(z) \right]\\
& = -\nabla_w \E_{q_w(z)} \left[ \frac{p(z)}{q_v(z)} \right]_{v = w}\\
& = -\E_{q_0(\epsilon)} \left[\nabla_w \frac{p(\T)}{q_v(\T)} \right]_{v = w}\label{eqapp:KLpq2}.
\end{align}

\end{itemize}

\textbf{Double reparameterization.}

\begin{itemize}
\item For $\alpha \neq \{0, 1\}$ we have

\begin{align}
& \nabla_w D_\alpha(p||q_w)\\
& = \nabla_w \frac{1}{\alpha (\alpha - 1)} \E_{q_w(z)} \left[ \left( \frac{p(z)}{q_w(z)} \right)^\alpha - 1 \right]\\
& = \underbrace{\nabla_w \frac{1}{\alpha (\alpha - 1)} \E_{q_w(z)} \left[ \left( \frac{p(z)}{q_v(z)} \right)^\alpha\right]_{v = w}}_{A}
  + \underbrace{\nabla_w \frac{1}{\alpha (\alpha - 1)} \E_{q_v(z)} \left[ \left( \frac{p(z)}{q_w(z)} \right)^\alpha\right]_{v = w}}_{B}.
\end{align}

Term $A$ can be expressed as

\begin{equation}
A = \nabla_w \frac{1}{\alpha (\alpha - 1)} \E_{q_0(\epsilon)} \left[ \left( \frac{p(\T)}{q_v(\T)} \right)^\alpha\right]_{v = w}.
\end{equation}

For term $B$ we have

\begin{align}
B & = \nabla_w \frac{1}{\alpha (\alpha - 1)} \E_{q_v(z)} \left[ \left( \frac{p(z)}{q_w(z)} \right)^\alpha\right]_{v = w}\\
& = \frac{1}{\alpha (\alpha - 1)} \E_{q_v(z)} \left[ \nabla_w \left( \frac{p(z)}{q_w(z)} \right)^\alpha\right]_{v = w}\\
& = \frac{-1}{\alpha (\alpha - 1)} \E_{q_v(z)} \left[ \alpha \left( \frac{p(z)}{q_v(z)} \right)^{\alpha - 1} \frac{p(z)}{q_v(z)^2} \nabla_w q_w(z) \right]_{v = w}\\
& = \frac{-1}{\alpha (\alpha - 1)} \nabla_w \E_{q_w(z)} \left[ \alpha \left( \frac{p(z)}{q_v(z)} \right)^{\alpha - 1} \frac{p(z)}{q_v(z)} \right]_{v = w}\\
& = \frac{-1}{\alpha (\alpha - 1)} \nabla_w \E_{q_w(z)} \left[ \alpha \left( \frac{p(z)}{q_v(z)} \right)^{\alpha} \right]_{v = w}\\
& = \frac{-1}{\alpha - 1} \E_{q_0(\epsilon)} \left[ \nabla_w \left( \frac{p(\T)}{q_v(\T)} \right)^{\alpha} \right]_{v = w}.
\end{align}

Combining the results for $A$ and $B$ as $\nabla_w D_\alpha(p||q_w) = A + B$ yields

\begin{equation}
\nabla_w D_\alpha(p||q_w) = \frac{-1}{\alpha} \E_{q_0(\epsilon)} \left[ \nabla_w \left( \frac{p(\T)}{q_v(\T)} \right)^{\alpha} \right]_{v = w} \label{eqapp:drep}.
\end{equation}

\item For $\alpha \rightarrow 0$ we get the "Sticking the landing" estimator \citep{stickingthelanding}

\begin{align}
\nabla_w D_\alpha(p||q_w) & = \nabla_w \E_{q_w(z)} \left[ \log \frac{p(z)}{q_w(z)} \right]\\
& = \nabla_w \E_{q_v(z)} \left[ \log \frac{p(z)}{q_w(z)} \right]_{v = w} + \nabla_w \E_{q_w(z)} \left[ \log \frac{p(z)}{q_v(z)} \right]_{v = w}\\
& = -\E_{q_v(z)} \left[ \nabla_w \log q_w(z) \right]_{v = w} + \nabla_w \E_{q_0(\epsilon)} \left[ \log \frac{p(\T)}{q_v(\T)} \right]_{v = w}\\
& = \nabla_w \E_{q_0(\epsilon)} \left[ \log \frac{p(\T)}{q_v(\T)} \right]_{v = w}.
\end{align}

\item For $\alpha \rightarrow 1$ we just get eq. \ref{eqapp:KLpq2}.

\end{itemize}

\newpage

\section{Proofs}

We now present the proofs of the three SNR theorems in the paper. All theorems are proved separately for $\alpha \neq 0$ and $\alpha \rightarrow 0$, starting from the corresponding expression for the gradient estimator $\gdrep$ from eq. \ref{eq:gdrep}. However, all of these results could also be obtained by computing the SNR for $\alpha \neq 0$ and taking the limit when $\alpha \rightarrow 0$ of the resulting expression.

\subsection{Properties of the function $\f$} \label{app:fproof}

\begin{lemma} \label{lem:f-props}
Restricted to values of $\alpha$ and $\R$ that satisfy $1 + 2\alpha(\lambda - 1) > 0$, the function
\begin{equation}
\f = \frac{1}{\sqrt{1 + \alpha^2 \frac{(\R - 1)^2}{1 + 2\alpha \R - 2\alpha}}}
\end{equation}
has the following properties: (i) it achieves its maximum value of 1 if and only if $\R = 1$ or $\alpha = 0$; (ii) it is quasi-concave in $\R$ for a fixed $\alpha$; (iii) it is quasi-concave in $\alpha$ for a fixed $\R$.
\end{lemma}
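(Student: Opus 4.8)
The plan is to strip away the square root and the denominator by two reductions, until the whole lemma rests on the elementary fact that a certain rational function of one variable is unimodal. Write $u=\lambda-1$ and set
\[
t(\alpha,\lambda)=\alpha^2\,\frac{(\lambda-1)^2}{1+2\alpha(\lambda-1)},
\qquad\text{so that}\qquad \f=\bigl(1+t(\alpha,\lambda)\bigr)^{-1/2}.
\]
On the admissible region $1+2\alpha(\lambda-1)>0$ the denominator is positive and the numerator is a square, so $t\ge 0$, with equality iff $\alpha(\lambda-1)=0$, i.e. iff $\alpha=0$ or $\lambda=1$. Since $r\mapsto(1+r)^{-1/2}$ is a strictly decreasing bijection of $[0,\infty)$ onto $(0,1]$, this already gives (i): $\f\le 1$ with equality exactly in those cases. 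Moreover, because that outer map is a decreasing bijection, the superlevel sets $\{\f\ge c\}$ are precisely the sublevel sets $\{t\le c^{-2}-1\}$; hence $\f$ is quasi-concave in a given variable if and only if $t$ is quasi-convex in that variable (the admissible region is a half-line in each variable, hence convex, so there is no domain obstruction).

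Next I would make the substitution $s=\alpha(\lambda-1)$, under which $t=\phi(s)$ with
\[
\phi(s)=\frac{s^2}{1+2s},
\]
and the admissible region becomes $s>-\tfrac12$. For fixed $\alpha\neq 0$, $s$ is an invertible affine function of $\lambda$; for fixed $\lambda\neq 1$, $s$ is an invertible affine function of $\alpha$ (possibly orientation-reversing, which is irrelevant for quasi-convexity). Quasi-convexity is invariant under such reparametrizations, so it suffices to show $\phi$ is quasi-convex on $(-\tfrac12,\infty)$. The degenerate cases left out here, $\alpha=0$ (for (ii)) and $\lambda=1$ (for (iii)), give $t\equiv 0$, which is trivially quasi-convex.

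It then remains to check $\phi$ is unimodal. A direct computation gives
\[
\phi'(s)=\frac{2s(1+2s)-2s^2}{(1+2s)^2}=\frac{2s(s+1)}{(1+2s)^2}.
\]
On $(-\tfrac12,\infty)$ we have $s+1>\tfrac12>0$ and $(1+2s)^2>0$, so $\phi'$ has the sign of $s$: $\phi$ is strictly decreasing on $(-\tfrac12,0)$ and strictly increasing on $(0,\infty)$. A function that decreases then increases on an interval has every sublevel set an interval, hence is quasi-convex. Tracing this back through the two reductions yields (ii) and (iii), and together with the first paragraph, (i).

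\textbf{Main obstacle.} There is no real computational difficulty; the only points requiring care are the two ``transfer'' steps — that quasi-concavity of $\f$ is equivalent to quasi-convexity of $t$ (needs the outer function to be a \emph{decreasing bijection}, not merely monotone on the relevant range) and that quasi-convexity passes from $t$ to $\phi$ through an affine, possibly orientation-reversing, change of variable — plus correctly isolating the degenerate cases $\alpha=0$ and $\lambda=1$ where the substitution $s=\alpha(\lambda-1)$ ceases to be invertible. Once those are stated cleanly, the lemma is immediate from the sign analysis of $\phi'$.
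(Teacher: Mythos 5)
Your proof is correct and follows essentially the same route as the paper's: both reduce quasi-concavity of $\f$ to quasi-convexity of the inner term $\alpha^2(\R-1)^2/(1+2\alpha(\R-1))$ and settle the latter by a sign analysis of its derivative. The one difference is cosmetic but pleasant: your substitution $s=\alpha(\R-1)$ collapses the paper's two separate partial-derivative computations (in $\R$ and in $\alpha$) into a single one-variable analysis of $\phi(s)=s^2/(1+2s)$, and your argument for part (i) via strict positivity of $t$ off the set $\{\alpha=0\}\cup\{\R=1\}$ is a bit cleaner than the paper's appeal to vanishing derivatives.
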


\begin{proof}
We have \small$\f = \frac{1}{\sqrt{1 + \alpha^2 \frac{(\R - 1)^2}{1 + 2\alpha \R - 2\alpha}}}$\normalsize. Let C1 be the condition $1+2\alpha\R -2\alpha > 0$ and $\g = \alpha^2 \frac{(\R - 1)^2}{1 + 2\alpha \R - 2\alpha}$ (i.e. $\f = \sqrt{\frac{1}{1 + \g}}$). 

\begin{enumerate}
    \item C1 implies $1 + \alpha \R - \alpha > 0$. We will call this condition C2. This can be seen by noticing that C2 is equivalent to $\alpha(\R - 1) > -1$, while C1 is equivalent to $\alpha (\R - 1) > -1/2$. Thus, a pair $(\R, \alpha)$ that satisfies C1 must satisfy C2.
    \item $\f \leq 1$. This can be easily seen by noticing that C1 implies $\g \geq 0$.
    \item $\f = 1$ if $\R = 1$ or $\alpha = 0$. This is straightforward to check.
    \item $\f$ is quasi-concave in $\R$ for a fixed $\alpha$. We show this by proving that $\g$ is quasi-convex in $\R$ for a fixed $\alpha$. To do so we compute
    
    \begin{equation}
    \frac{\partial \g}{\partial \R} =  2 \alpha^2 \frac{(\R - 1) (1 + \alpha \R - \alpha)}{\left(1 + 2\alpha \R - 2\alpha \right)^2}.
    \end{equation}
    
    The denominator is strictly positive by C1, and $1 + \alpha \R - \alpha$ is also strictly positive by C2. Thus, because of the term $(\R - 1)$, this derivative is strictly negative for $\R < 1$ and strictly positive for $\R > 1$. Thus, $\g$ is decreasing for $\R < 1$ and increasing for $\R > 1$. Thus, $\f$ is increasing for $\R < 1$ and decreasing for $\R > 1$.
    
    \item $\f$ is quasi-concave in $\alpha$ for a fixed $\R$. We show this by proving that $\g$ is quasi-convex in $\alpha$ for a fixed $\R$. To do so we compute
    
    $$\frac{\partial \g}{\partial \alpha} =  2 \alpha \frac{(\R - 1)^2 (1 + \alpha \R - \alpha)}{\left(1 + 2\alpha \R - 2\alpha \right)^2}.$$
    
    The denominator is strictly positive by C1, and $1 + \alpha \R - \alpha$ is also strictly positive by C2. Thus, this derivative is strictly negative for $\alpha < 0$ and strictly positive for $\alpha > 0$. Thus, $\g$ is decreasing for $\alpha < 0$ and increasing for $\alpha > 0$. Thus, $\f$ is increasing for $\alpha < 0$ and decreasing for $\alpha > 0$.
    
    \item Finally, it can be observed that the derivatives are zero if and only if $\R = 1$ or $\alpha = 0$. Thus, we can conclude that the maximum value of 1 is achieved if and only if $\R = 1$ or $\alpha = 0$.
\end{enumerate}
\end{proof}

\subsection{General Fully-factorized Distributions}

\fullyfactgeneral*

\begin{proof}
We prove this results for $\ga = \gdrep$. The proof for $\ga = \grep$ follows the same steps.

We begin by proving the result for $\alpha \rightarrow 0$. In this case the gradient estimator is given by 

\begin{equation}
\gaj(p, q_w, \epsilon) = \nabla_{w_j} \log \frac{p(\mathcal{T}_w(\epsilon))}{q_v(\mathcal{T}_w(\epsilon))}.
\end{equation}

Since $p$ and $q$ are factorized

\begin{align}
\gaj(p, q_w, \epsilon) & = \left . \nabla_{w_j} \log \frac{p(\mathcal{T}_w(\epsilon))}{q_v(\mathcal{T}_w(\epsilon))} \right|_{v=w}\\
& = \left . \nabla_{w_j} \log \frac{p_j(\mathcal{T}_{w_j}(\epsilon_j))}{q_{v_j}(\mathcal{T}_{w_j}(\epsilon_j))}\right|_{v=w}\\
& = \ga(p_j, q_{w_j}, \epsilon_j).
\end{align}

Thus, for $\alpha \rightarrow 0$, 

\begin{equation}
\SNR[\gaj(p, q_w, \epsilon)] = \SNR[\ga(p_j, q_{w_j}, \epsilon_j)].
\end{equation}

For $\alpha \neq 0$, since $p$ and $q$ are factorized, the gradient estimator is given by

\begin{align}
\gaj(p, q_w, \epsilon) & = \left . -\frac{1}{\alpha} \nabla_{w_j} \left( \frac{p(\mathcal{T}_w(\epsilon))}{q_v(\mathcal{T}_w(\epsilon))} \right)^\alpha \right|_{v=w}\\
& = \left. -\frac{1}{\alpha} \nabla_{w_j} \left(\frac{p_j(\Tj)}{q_{v_j}(\Tj)}\right)^\alpha \prod_{i \in \{1, ..., d\} \backslash j} \left(\frac{p_i(\Ti)}{q_{v_i}(\Ti)}\right)^\alpha \right|_{v=w}.
\end{align}

The SNR of $\gaj(p, q_w, \epsilon)$ is given by $\nicefrac{\E[\gaj(p, q_w, \epsilon)]^2}{\E[\gaj(p, q_w, \epsilon)^2]}$. We compute these quantities as

\begin{align}
\E[\gaj]^2 & = \E \left[ \frac{-1}{\alpha} \nabla_{w_j} \left(\frac{p_j(\Tj)}{q_{v_j}(\Tj)}\right)^\alpha \prod_{i \in \{1, ..., d\} \backslash j} \left(\frac{p_i(\Ti)}{q_{v_i}(\Ti)}\right)^\alpha \right]^2_{v=w}\\
& = \E \left[ \frac{-1}{\alpha} \nabla_{w_j} \left(\frac{p_j(\Tj)}{q_{v_j}(\Tj)}\right)^\alpha\right]^2 \prod_{i \in \{1, ..., d\} \backslash j} \E \left[\left(\frac{p_i(\Ti)}{q_{v_i}(\Ti)}\right)^\alpha \right]^2_{v=w} \label{eq:lv1}\\
\E[\gaj^2] & = \E \left[ \left( \frac{-1}{\alpha} \nabla_{w_j} \left(\frac{p_j(\Tj)}{q_{v_j}(\Tj)}\right)^\alpha \prod_{i \in \{1, ..., d\} \backslash j} \left(\frac{p_i(\Ti)}{q_{v_i}(\Ti)}\right)^\alpha\right)^2 \right]_{v=w}\\
& = \E \left[ \left( \frac{-1}{\alpha} \nabla_{w_j} \left(\frac{p_j(\Tj)}{q_{v_j}(\Tj)}\right)^\alpha\right)^2\right] \prod_{i \in \{1, ..., d\} \backslash j} \E \left[\left(\frac{p_i(\Ti)}{q_{v_i}(\Ti)}\right)^{2\alpha} \right]_{v=w} \label{eq:lv2}.
\end{align}

Finally,

\begin{align}
\SNR[\gaj(p, q_w, \epsilon)] & = \frac{\E[\gaj(p, q_w, \epsilon)]^2}{\E[\gaj(p, q_w, \epsilon)^2]}\\
& = \frac{\E \left[ \frac{-1}{\alpha} \nabla_{w_j} \left(\frac{p_j(\Tj)}{q_{v_j}(\Tj)}\right)^\alpha\right]^2}{\E \left[ \left(\frac{-1}{\alpha}\nabla_{w_j} \left(\frac{p_j(\Tj)}{q_{v_j}(\Tj)}\right)^\alpha\right)^2\right]} \prod_{i \in \{1, ..., d\} \backslash j} \frac{\E \left[\left( \frac{p_i(\Ti)}{q_{v_i}(\Ti)}\right)^\alpha \right]^2}{\E \left[\left(\frac{p_i(\Ti)}{q_{v_i}(\Ti)}\right)^{2\alpha} \right]} \label{eq:here1}\\
& = \SNR[\gaj(p_j, q_{w_j}, \epsilon_j)] \prod_{i \in \{1, ..., d\} \backslash j} \SNR\left[\left(\frac{p_i(\Ti)}{q_{v_i}(\Ti)}\right)^\alpha\right].
\end{align}

Replacing the definition of $\tilde D_\alpha$ yields the desired result for $\alpha \neq 0$.

To prove the fact that the $j$-th component of the estimator is deterministically zero if $p_j = q_{w_j}$ it suffices to observe that, if $p_j = q_{w_j}$, then

\begin{align}
\left . \nabla_{w_j} \frac{p_j(\T)}{q_{v_j}(\T)} \right|_{v = w} & = \left . \nabla_{w_j} \frac{q_{v_j}(\T)}{q_{v_j}(\T)} \right|_{v = w} = \nabla_{w_j} 1 = 0.
\end{align}
\end{proof}

\newpage

\subsection{Fully-factorized Gaussians}

In this section we prove our main result for the case where both $p$ and $q_w$ are mean-zero fully-factorized Gaussians. We begin by proving some auxiliary lemmas for the one-dimensional case, which we then use to prove the main SNR result for $d$-dimensional factorized Gaussians. In this case the set of parameters $w$ is given by the scales of each dimension of $q_w$, $w = \{\sigma_{w1}, ..., \sigma_{wd}\}$.

\subsubsection{Factorized Gaussians - Auxiliary Lemmas}

\begin{lemma} \label{app:lemmaffg1}
Let $p$ and $q_w$ be one dimensional Gaussian distributions with mean zero and scales $\sigma_p$ and $\sw$, respectively. Then, for $\ga = \gdrep$ and $\alpha \rightarrow 0$,
\begin{itemize}[leftmargin=*] \denselist
\item If $\sigma_{p} = \sw$, the gradient estimator is deterministically zero.
\item Otherwise,
\begin{align}
\SNR[\ga(p, q_w, \epsilon)] = \frac{1}{3}.
\end{align}
\end{itemize}
\end{lemma}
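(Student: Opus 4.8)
The plan is to evaluate the estimator $\ga=\gdrep$ in closed form in this one-dimensional Gaussian setting and simply read off the SNR. Since $\alpha\rightarrow 0$, eq.~\ref{eq:gdrep} gives $\ga(p,q_w,\epsilon)=\nabla_w\log\frac{q_v(\mathcal{T}_w(\epsilon))}{p(\mathcal{T}_w(\epsilon))}\big|_{v=w}$ with $\mathcal{T}_w(\epsilon)=\sw\epsilon$ and $\epsilon\sim\mathcal{N}(0,1)$. Writing out the mean-zero Gaussian log-densities, the normalizing constants $-\log\sigma_v$ and $-\log\sigma_p$ carry no $w$-dependence (before the substitution $v=w$), so only the quadratic term matters:
\begin{equation*}
\log\frac{q_v(z)}{p(z)}=\log\frac{\sigma_p}{\sigma_v}-\frac{z^2}{2}\Bigl(\frac{1}{\sigma_v^2}-\frac{1}{\sigma_p^2}\Bigr).
\end{equation*}
Substituting $z=\sw\epsilon$, differentiating with respect to $w$ holding $v$ fixed, and only then setting $v=w$ yields
\begin{equation*}
\ga(p,q_w,\epsilon)=-\sw\,\epsilon^2\Bigl(\frac{1}{\sw^2}-\frac{1}{\sigma_p^2}\Bigr)=\frac{\R-1}{\sw}\,\epsilon^2,\qquad \R:=\frac{\sw^2}{\sigma_p^2}.
\end{equation*}

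Next I would observe that this is just $c\,\epsilon^2$ with $c=(\R-1)/\sw$ a constant depending only on $\sigma_p$ and $\sw$. If $\sigma_p=\sw$ then $\R=1$ and $c=0$, so $\ga$ is deterministically zero, which settles the first bullet. For $\sigma_p\neq\sw$ we have $c\neq 0$, and since $\epsilon$ is standard Gaussian, $\E[\epsilon^2]=1$ and $\E[\epsilon^4]=3<\infty$ (so the variance is finite and the SNR is well defined), giving
\begin{equation*}
\SNR[\ga(p,q_w,\epsilon)]=\frac{(\E[c\,\epsilon^2])^2}{\E[(c\,\epsilon^2)^2]}=\frac{c^2(\E\epsilon^2)^2}{c^2\,\E\epsilon^4}=\frac{1}{3}.
\end{equation*}
(Equivalently, this is the $d=1$, $\alpha\rightarrow0$ specialization of Corollary~\ref{thm:NSR-ffg}, but the direct computation is cleaner and will be reused in the $d$-dimensional argument.)

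The calculation is essentially routine, so there is no real obstacle; the only point requiring care is the bookkeeping of the ``stop-gradient'' evaluation at $v=w$ in the double-reparameterization estimator — one differentiates through the sampling path $z=\sw\epsilon$ but not through the second copy $v$ of the parameters appearing in $q_v$, substituting $v=w$ only after differentiating. A secondary, trivial point is to note that the $-\log\sigma_v$ term drops out precisely because it contains no $w$ prior to that substitution, which is what makes the estimator a pure multiple of $\epsilon^2$ and hence forces the SNR to the dimension-free value $1/3$.
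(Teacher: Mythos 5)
Your proposal is correct and follows essentially the same route as the paper's proof: compute the closed form $\ga(p,q_w,\epsilon) = -\sw\,\epsilon^2\bigl(\tfrac{1}{\sw^2}-\tfrac{1}{\sigma_p^2}\bigr)$ by differentiating through the sampling path while holding $v$ fixed, observe it is a constant multiple of $\epsilon^2$, and read off $\SNR = (\E\epsilon^2)^2/\E\epsilon^4 = 1/3$ using standard Gaussian moments. The handling of the $v=w$ substitution and the vanishing of the normalizing-constant term match the paper's argument exactly.
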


\begin{proof}
Using the fact that $\T = \sw \epsilon$ and the pdf of Gaussian distributions we can compute the gradient estimator for $\alpha \rightarrow 0$ as

\begin{align}
\gaj(p, q_w, \epsilon) & = \left . \nabla_w \log \frac{q_v(\T)}{p(\T)} \right|_{v=w}\\
& = \left . \nabla_w \left(\log \frac{\sv}{\sigma_p} - \frac{1}{2} \sw^2 \epsilon^2 \left(\frac{1}{\sv^2} - \frac{1}{\sigma_p^2}\right)\right)\right|_{v=w}\\
& = -\sw \epsilon^2 \left(\frac{1}{\sw^2} - \frac{1}{\sigma_p^2}\right).
\end{align}

If $\sigma_p = \sw$ the gradient estimator is deterministically zero. Otherwise, using the fact that $\epsilon \sim \mathcal{N}(0, 1)$, we can compute
\begin{align}
\E[\gaj(p, q_w, \epsilon)]^2 & = \sw^2 \left(\frac{1}{\sw^2} - \frac{1}{\sigma_p^2}\right)^2\\
\E[\gaj(p, q_w, \epsilon)^2] & = 3\sw^2 \left(\frac{1}{\sw^2} - \frac{1}{\sigma_p^2}\right)^2.
\end{align}

Thus, if $\sw \neq \sigma_p$, we get $\SNR[\ga(p, q_w, \epsilon)] = \frac{1}{3}$.
\end{proof}

\begin{lemma} \label{app:lemmaf}
Let $p$ and $q_w$ be one dimensional Gaussian distributions with mean zero and scales $\sigma_p$ and $\sw$, respectively. Let $\R = \nicefrac{\sw^2}{\sigma_p^2}$. Then, for $\ga = \gdrep$ and $\alpha \neq 0$,

\begin{itemize}[leftmargin=*] \denselist
\item If $1 + 2\alpha \R - 2\alpha > 0$, then
\begin{align}
\SNR\left[\left(\frac{p(\T)}{q_w(\T)}\right)^\alpha\right] = f(\R, \alpha),
\end{align}

where \small{$f(\R, \alpha) = \nicefrac{1}{\sqrt{1 + \alpha^2 \frac{(\R - 1)^2}{1 + 2\alpha \R - 2\alpha}}}$}\normalsize.
\item If $1 + 2\alpha \R - 2\alpha \leq 0$, the SNR is not defined.
\end{itemize}
\end{lemma}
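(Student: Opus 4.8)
The plan is to reduce the claim to two elementary Gaussian second-moment computations, noting first that the quantity whose SNR is being evaluated, $\bigl(p(\T)/q_w(\T)\bigr)^\alpha$, does not actually involve the gradient estimator. I would substitute $\T=\sw\epsilon$ with $\epsilon\sim\mathcal{N}(0,1)$ into the density ratio: since $p$ and $q_w$ are mean-zero Gaussians with scales $\sigma_p$ and $\sw$, the Gaussian exponents combine and the normalizing constants contribute a factor $\sw/\sigma_p=\sqrt{\R}$, giving
\begin{equation*}
\frac{p(\T)}{q_w(\T)} = \sqrt{\R}\,\exp\!\left(-\frac{\sw^2\epsilon^2}{2}\Bigl(\tfrac{1}{\sigma_p^2}-\tfrac{1}{\sw^2}\Bigr)\right) = \sqrt{\R}\,\exp\!\left(-\frac{\R-1}{2}\,\epsilon^2\right),
\end{equation*}
and hence $\bigl(p(\T)/q_w(\T)\bigr)^\alpha = \R^{\alpha/2}\exp\!\bigl(-\tfrac{\alpha(\R-1)}{2}\,\epsilon^2\bigr)$.

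Next I would invoke the standard fact that for $\epsilon\sim\mathcal{N}(0,1)$ and $t\in\mathbb{R}$, $\E[e^{-t\epsilon^2}]=(1+2t)^{-1/2}$ when $1+2t>0$, while $\E[e^{-t\epsilon^2}]=+\infty$ when $1+2t\le 0$ (the integrand is $\propto e^{-(1+2t)\epsilon^2/2}$, not integrable in that case). Applying this with $t=\tfrac{\alpha(\R-1)}{2}$ gives $\E\bigl[(p(\T)/q_w(\T))^\alpha\bigr]=\R^{\alpha/2}\bigl(1+\alpha(\R-1)\bigr)^{-1/2}$, finite iff $1+\alpha(\R-1)>0$; with $t=\alpha(\R-1)$ it gives $\E\bigl[(p(\T)/q_w(\T))^{2\alpha}\bigr]=\R^{\alpha}\bigl(1+2\alpha(\R-1)\bigr)^{-1/2}$, finite iff $1+2\alpha(\R-1)>0$. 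I would then observe that $1+2\alpha(\R-1)>0$ implies $1+\alpha(\R-1)>0$ (immediate if $\alpha(\R-1)\ge 0$; and if $\alpha(\R-1)<0$ then $1+2\alpha(\R-1)>0$ forces $\alpha(\R-1)>-\tfrac12$, so $1+\alpha(\R-1)>\tfrac12$). Thus the lemma's condition $1+2\alpha\R-2\alpha>0$ is exactly the one guaranteeing both moments finite, and in the complementary case the second moment is infinite, so by the paper's convention on undefined expectations the SNR is not defined — settling the second bullet.

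For the first bullet, I would form
\begin{equation*}
\SNR\Bigl[\bigl(p(\T)/q_w(\T)\bigr)^\alpha\Bigr] = \frac{\E\bigl[(p(\T)/q_w(\T))^\alpha\bigr]^2}{\E\bigl[(p(\T)/q_w(\T))^{2\alpha}\bigr]} = \frac{\sqrt{1+2\alpha(\R-1)}}{1+\alpha(\R-1)},
\end{equation*}
the $\R^{\alpha}$ factors cancelling, and then check this equals $f(\R,\alpha)$: writing $a=\alpha(\R-1)$ and squaring, the left side is $\tfrac{1+2a}{(1+a)^2}=\tfrac{1+2a}{1+2a+a^2}=\bigl(1+\tfrac{a^2}{1+2a}\bigr)^{-1}$, which is exactly $f(\R,\alpha)^2$ after substituting $a=\alpha(\R-1)$; since both quantities are positive ($1+2a>0$, and $1+a>0$ by the implication above), taking square roots concludes the proof.

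I do not anticipate a genuine obstacle: the argument is a direct moment calculation. The only points needing a little care are (i) recognizing that the \emph{second}-moment finiteness condition $1+2\alpha(\R-1)>0$ is the binding one and that it entails the first-moment condition, and (ii) the small algebraic identity $(1+a)^2=(1+2a)+a^2$ used to match the closed form with $f(\R,\alpha)$.
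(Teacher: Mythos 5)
Your proposal is correct and follows essentially the same route as the paper's proof: compute $\bigl(p(\T)/q_w(\T)\bigr)^\alpha$ explicitly, evaluate the first and second moments as Gaussian integrals (you package this as the identity $\E[e^{-t\epsilon^2}]=(1+2t)^{-1/2}$, the paper writes the integral out), note that $1+2\alpha\lambda-2\alpha>0$ is the binding condition and implies $1+\alpha\lambda-\alpha>0$, and match the resulting ratio to $f(\lambda,\alpha)$ via the identity $(1+a)^2=(1+2a)+a^2$. No gaps.
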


\begin{proof}
Using the pdf of a univariate Gaussian, we can compute

\begin{equation}
\left(\frac{p(\T)}{q_w(\T)}\right)^\alpha = \left(\frac{\sw}{\sigma_p}\right)^\alpha \exp\left[-\frac{\alpha}{2} \sw^2 \epsilon^2 \left(\frac{1}{\sigma_p^2} - \frac{1}{\sw^2} \right) \right].
\end{equation}

To compute the SNR we need to compute $\E\left[ \left(\frac{p(\T)}{q_w(\T)}\right)^\alpha \right]^2$ and $\E\left[ \left(\frac{p(\T)}{q_w(\T)}\right)^{2\alpha} \right]$. The former can be computed as 

\begin{align}
\E\left[ \left(\frac{p(\T)}{q_w(\T)}\right)^\alpha \right] & = \left(\frac{\sw}{\sigma_p}\right)^\alpha \int \frac{1}{\sqrt{2\pi}} \exp\left[-\frac{\epsilon^2}{2} \right] \exp\left[-\frac{\alpha}{2} \sw^2 \epsilon^2 \left(\frac{1}{\sigma_p^2} - \frac{1}{\sw^2} \right) \right] d\epsilon\\
& = \left(\frac{\sw}{\sigma_p}\right)^\alpha \int \frac{1}{\sqrt{2\pi}}\exp\left[-\frac{1}{2} \epsilon^2 \left(1 + \alpha\frac{\sw^2}{\sigma_p^2} - \alpha \right) \right] d\epsilon\\
& = \R^{\alpha / 2} \int \frac{1}{\sqrt{2\pi}} \exp\left[-\frac{1}{2} \epsilon^2 \left(1 + \alpha \R - \alpha \right) \right] d\epsilon.
\end{align}

This integral converges if and only if $1 + \alpha \R - \alpha > 0$. If this is not satisfied, the quantity does not have an expectation and the SNR is not defined. If the condition is satisfied, then

\begin{align}
\E\left[ \left(\frac{p(\T)}{q_w(\T)}\right)^\alpha \right] & = \frac{\R^{\alpha / 2}}{\sqrt{1 + \alpha \R - \alpha}} \label{eqapp:here2}.
\end{align}

With this result we get
\begin{align}
\E\left[ \left(\frac{p(\T)}{q_w(\T)}\right)^\alpha \right]^2 & = \frac{\R^{\alpha}}{1 + \alpha \R - \alpha}\\
\E\left[ \left(\frac{p(\T)}{q_w(\T)}\right)^{2\alpha} \right] & = \frac{\R^{\alpha}}{\sqrt{1 + 2\alpha \R - 2\alpha}} \label{eqapp:here3},
\end{align}

where eq. \ref{eqapp:here3} is obtained by replacing $\alpha$ by $2\alpha$ in eq. \ref{eqapp:here2} (notice that the condition for existence now becomes $1 + 2\alpha \R - 2\alpha > 0$). Finally, we can compute the SNR by taking the ratio between the above quantities, which yields

\begin{align}
\SNR\left[\left(\frac{p(\T)}{q_w(\T)}\right)^\alpha\right] & = \frac{\sqrt{1 - 2\alpha + 2\alpha \R}}{1 - \alpha + \alpha \R}\\
& = \sqrt{\frac{1}{ \frac{(1 + \alpha \R - \alpha)^2}{1 + 2\alpha \R - 2\alpha} }}\\
& = \sqrt{\frac{1}{ \frac{ 1 + \alpha^2 \R^2 + \alpha^2 + 2\alpha \R -2\alpha - 2\alpha^2 \R}{1 + 2\alpha \R - 2\alpha} }}\\
& = \sqrt{\frac{1}{ 1 + \alpha^2 \frac{(\R - 1)^2}{1 + 2\alpha \R - 2\alpha}}}\\
& = f(\R, \alpha).
\end{align}

Finally, the condition $1 + \alpha \R - \alpha > 0$ does not show up in the theorem statement because it is less restrictive than the condition $1 + 2\alpha \R - 2\alpha > 0$ (i.e. the latter is a sufficient but not necessary condition for the former). To see this, note that the former is equivalent to $\alpha (\R - 1) > -1$, while the latter is equivalent to $\alpha (\R - 1) > -1/2$. Thus, any pair $(\R, \alpha)$ that satisfies $1 + 2\alpha \R - 2\alpha > 0$ also satisfies $1 + \alpha \R - \alpha > 0$. The inverse is not true.
\end{proof}

\begin{lemma} \label{app:lemmaffg3}
Let $p$ and $q_w$ be one dimensional Gaussian distributions with mean zero and scales $\sigma_p$ and $\sw$, respectively. Let $\R = \nicefrac{\sw^2}{\sigma_p^2}$. Then, for $\ga = \gdrep$ and $\alpha \neq 0$,

\begin{itemize}[leftmargin=*] \denselist
\item If $\sigma_p = \sw$, the gradient estimator is deterministically zero.
\item If $1 + 2\alpha \R - 2\alpha \leq 0$, the gradient estimator's SNR is not defined.
\item Otherwise,
\begin{align}
\SNR[\ga(p, q_w, \epsilon)] = \frac{1 + 2\alpha \R - 2\alpha}{3} f(\R, \alpha)^3,
\end{align}
\end{itemize}

where \small{$f(\R, \alpha) = \nicefrac{1}{\sqrt{1 + \alpha^2 \frac{(\R - 1)^2}{1 + 2\alpha \R - 2\alpha}}}$}\normalsize.
\end{lemma}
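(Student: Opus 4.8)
The plan is to evaluate $\gdrep$ in closed form for one‑dimensional mean‑zero Gaussians and then reduce its SNR to a ratio of elementary Gaussian moment integrals.

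First I would substitute the reparameterization $\T = \sw\epsilon$ and the univariate Gaussian densities into the $\alpha\neq 0$ branch of eq.~\ref{eq:gdrep}. Writing $c := \sigma_p^{-2}-\sw^{-2}$, one has $\bigl(p(\T)/q_v(\T)\bigr)^\alpha = (\sigma_v/\sigma_p)^\alpha \exp\!\bigl(-\tfrac{\alpha}{2}\sw^2\epsilon^2(\sigma_p^{-2}-\sigma_v^{-2})\bigr)$. Differentiating with respect to $w=\sw$ while holding $v$ fixed, and then setting $v=w$, the prefactor $-1/\alpha$ cancels against the $-\alpha$ brought down from the exponent, leaving
\[
\gdrep(p,q_w,\epsilon) \;=\; \sw\, c\, \epsilon^2 \Bigl(\tfrac{\sw}{\sigma_p}\Bigr)^{\!\alpha} \exp\!\Bigl(-\tfrac{\alpha}{2}\sw^2 c\,\epsilon^2\Bigr).
\]
The first bullet is then immediate: if $\sigma_p=\sw$ then $c=0$ and the estimator vanishes identically. (This cancellation is also why $\alpha\neq 0$ must be assumed; the $\alpha\to 0$ case is handled separately in Lemma~\ref{app:lemmaffg1}.)

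Next, for $\sigma_p\neq\sw$, I would note that the deterministic constant $\sw c(\sw/\sigma_p)^\alpha$ cancels in the SNR, so that with $a := \alpha(\R-1) = \alpha\sw^2 c$ and $\epsilon\sim\mathcal{N}(0,1)$,
\[
\SNR[\gdrep(p,q_w,\epsilon)] \;=\; \frac{\E\bigl[\epsilon^2 e^{-a\epsilon^2/2}\bigr]^2}{\E\bigl[\epsilon^4 e^{-a\epsilon^2}\bigr]}.
\]
Both expectations are standard Gaussian moments: completing the square gives $\E[\epsilon^2 e^{-a\epsilon^2/2}] = (1+a)^{-3/2}$, which requires $1+a>0$, and $\E[\epsilon^4 e^{-a\epsilon^2}] = 3(1+2a)^{-5/2}$, which requires $1+2a>0$. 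Since $1+2a = 1+2\alpha\R-2\alpha$ and (as in Lemma~\ref{app:lemmaf}) this condition implies $1+a>0$, the binding constraint is exactly the one stated; when it fails the second moment diverges and the SNR is undefined, giving the second bullet. Otherwise $\SNR[\gdrep] = (1+2a)^{5/2}/\bigl(3(1+a)^3\bigr)$.

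Finally I would reconcile this with the claimed expression. Using $\alpha^2(\R-1)^2=a^2$ and $1+2a+a^2=(1+a)^2$ one gets $\f^3 = \bigl(1+a^2/(1+2a)\bigr)^{-3/2} = (1+2a)^{3/2}/(1+a)^3$, hence $\tfrac{1+2\alpha\R-2\alpha}{3}\,\f^3 = (1+2a)^{5/2}/\bigl(3(1+a)^3\bigr)$, matching the SNR computed above. There is no serious obstacle here; the only places that need care are performing the ``$v$ then $w$'' differentiation correctly (this is what makes $\gdrep$ the double‑reparameterization estimator and what removes the $1/\alpha$), and keeping track of the two convergence conditions so that the hypothesis $1+2\alpha\R-2\alpha>0$ emerges as the sharp one.
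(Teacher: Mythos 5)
Your proposal is correct and follows essentially the same route as the paper's proof: evaluate $\gdrep$ in closed form for univariate mean-zero Gaussians, compute the first and second moments as Gaussian integrals (with convergence conditions $1+\alpha\R-\alpha>0$ and $1+2\alpha\R-2\alpha>0$ respectively, the latter being the binding one), and take the ratio. Your only departure is cosmetic --- cancelling the deterministic prefactor before integrating and substituting $a=\alpha(\R-1)$, which tidies the algebra the paper carries through explicitly.
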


\begin{proof}
For $\alpha \neq 0$ we have $\ga(p, q_w, \epsilon) = -\frac{1}{\alpha} \nabla_w \left( \frac{p(\mathcal{T}_w(\epsilon))}{q_v(\mathcal{T}_w(\epsilon))} \right)^\alpha$. We can get an expression for the gradient estimate by first computing 

\begin{equation}
\left(\frac{p(\T)}{q_v(\T)}\right)^\alpha = \left(\frac{\sv}{\sigma_p}\right)^\alpha \exp\left[-\frac{\alpha}{2} \sw^2 \epsilon^2 \left(\frac{1}{\sigma_p^2} - \frac{1}{\sv^2} \right) \right].
\end{equation}

Thus,

\begin{equation}
\ga(p, q_w, \epsilon) = \left(\frac{\sw}{\sigma_p}\right)^\alpha \exp\left[-\frac{\alpha}{2} \sw^2 \epsilon^2 \left(\frac{1}{\sigma_p^2} - \frac{1}{\sw^2} \right) \right] \epsilon^2 \sw \left(\frac{1}{\sigma_p^2} - \frac{1}{\sv^2} \right) \label{eqapp:here4}.
\end{equation}

It can be observed that, if $\sw = \sigma_p$, the gradient estimator is deterministically zero. Otherwise, to compute the SNR we need to compute $\E[\ga(p, q_w, \epsilon)]$ and $\E[\ga(p, q_w, \epsilon)^2]$. For the former, we can compute

\small
\begin{align}
\E[\ga(p, q_w, \epsilon)] & = \left(\frac{\sw}{\sigma_p}\right)^\alpha  \sw \left(\frac{1}{\sigma_p^2} - \frac{1}{\sw^2} \right)
\int \frac{\epsilon^2}{\sqrt{2\pi}} \exp\left[\frac{-\epsilon^2}{2}\right] \exp\left[-\frac{\alpha}{2} \sw^2 \epsilon^2 \left(\frac{1}{\sigma_p^2} - \frac{1}{\sw^2} \right) \right] d\epsilon\\
& = \left(\frac{\sw}{\sigma_p}\right)^\alpha  \sw \left(\frac{1}{\sigma_p^2} - \frac{1}{\sw^2} \right) 
\int \frac{\epsilon^2}{\sqrt{2\pi}} \exp\left[-\frac{\epsilon^2}{2} \left(1 + \frac{\alpha \sw^2}{\sigma_p^2} - \alpha \right) \right] d\epsilon\\
& = \R^{\alpha/2} \sw \left(\frac{1}{\sigma_p^2} - \frac{1}{\sw^2} \right) 
\int \frac{\epsilon^2}{\sqrt{2\pi}} \exp\left[-\frac{\epsilon^2}{2} \left(1 + \alpha \R - \alpha \right) \right] d\epsilon.
\end{align}
\normalsize

This integral converges if and only if $1 + \alpha R - \alpha > 0$. If so,

\begin{align}
\E[\ga(p, q_w, \epsilon)] & = \R^{\alpha/2} \sigma_q \left(\frac{1}{\sigma_p^2} - \frac{1}{\sw^2} \right) \frac{1}{(1 + \alpha \R - \alpha)^{3/2}}.
\end{align}

Thus, we get

\begin{equation}
\E[\ga(p, q_w, \epsilon)]^2 = \R^{\alpha} \sigma_q^2 \left(\frac{1}{\sigma_p^2} - \frac{1}{\sw^2} \right)^2 \frac{1}{(1 + \alpha \R - \alpha)^{3}}.
\end{equation}

We now need to compute $\E[\ga(p, q_w, \epsilon)^2]$. Using eq. \ref{eqapp:here4} we get

\begin{equation}
\ga(p, q_w, \epsilon)^2 = \left(\frac{\sw}{\sigma_p}\right)^{2\alpha} \exp\left[-\alpha \sw^2 \epsilon^2 \left(\frac{1}{\sigma_p^2} - \frac{1}{\sw^2} \right) \right] \epsilon^4 \sw^2 \left(\frac{1}{\sigma_p^2} - \frac{1}{\sw^2} \right)^2.
\end{equation}

The expectation of the term above can be computed as

\small
\begin{align}
\E[\ga(p, q_w, \epsilon)^2] & = \left(\frac{\sw}{\sigma_p}\right)^{2\alpha} \sw^2 \left(\frac{1}{\sigma_p^2} - \frac{1}{\sw^2} \right)^2
\int \frac{\epsilon^4}{\sqrt{2\pi}} \exp\left[\frac{-\epsilon^2}{2}\right] \exp\left[-\alpha \sw^2 \epsilon^2 \left(\frac{1}{\sigma_p^2} - \frac{1}{\sw^2} \right) \right] d\epsilon\\
& = \left(\frac{\sw}{\sigma_p}\right)^{2\alpha} \sw^2 \left(\frac{1}{\sigma_p^2} - \frac{1}{\sw^2} \right)^2
\int \frac{\epsilon^4}{\sqrt{2\pi}} \exp\left[-\frac{\epsilon^2}{2} \left(1 + 2\frac{\alpha \sw^2}{\sigma_p^2} - 2\alpha \right) \right] d\epsilon\\
& = \R^\alpha \sw^2 \left(\frac{1}{\sigma_p^2} - \frac{1}{\sw^2} \right)^2
\int \frac{\epsilon^4}{\sqrt{2\pi}} \exp\left[-\frac{\epsilon^2}{2} \left(1 + 2\alpha \R - 2\alpha \right) \right] d\epsilon.
\end{align}
\normalsize

This integral converges if and only if $1 + 2\alpha \R - 2\alpha > 0$. If so,

\begin{align}
\E[\ga(p, q_w, \epsilon)^2] & = \R^\alpha \sigma_q^2 \left(\frac{1}{\sigma_p^2} - \frac{1}{\sw^2} \right)^2 \frac{3}{(1 + 2\alpha \R - 2\alpha)^{5/2}} \label{eqapp:here10}.
\end{align}

Finally,

\begin{align}
\SNR[\ga(p, q_w, \epsilon)] & = \frac{1 + 2\alpha \R - 2\alpha}{3} \left(\frac{\sqrt{1 + 2\alpha \R - 2\alpha}}{1 - \alpha \R - \alpha}\right)^3\\
& = \frac{1 + 2\alpha \R - 2\alpha}{3} f(\R, \alpha)^3,
\end{align}

where we followed the same algebraic manipulations as in the proof of lemma \ref{app:lemmaf} to get the final expression in terms of $f(\R, \alpha)$.
\end{proof}

\subsubsection{Factorized Gaussians - Main Result}

\fullyfactgauss*



\begin{proof}
This result can be easily obtained by using the expression from Theorem \ref{thm:NSR-gff}, and the results from lemmas \ref{app:lemmaffg1}, \ref{app:lemmaf}, and \ref{app:lemmaffg3}. The limit for $\alpha \rightarrow 0$ is $1/3$, which is correct.
\end{proof}

\newpage

\subsection{Full-rank Gaussians} \label{app:allfull}

In this section we prove our main results for the case where both $p$ and $q_w$ are $d$-dimensional mean-zero Gaussians with arbitrary covariance matrices $\Sigma_p$ and $\Cw$, respectively. In this case the set of parameters $w$ is given by the some factorization of the covariance $\Sw$ such that $\Sw \Sw^\top = \Cw$. In this case the gradients $\nabla_w$ are computed with respect $\Sw$, and thus have shape $d \times d$. Thus, the SNR is defined in terms of the Frobenius norm $\Vert \cdot \Vert_F$, that is

\begin{equation}
\SNR[\ga(p, q_w, \epsilon)] = \frac{\Vert \E \ga(p, q_w, \epsilon) \Vert_F^2}{\E \Vert \ga(p, q_w, \epsilon) \Vert_F^2}.
\end{equation} 

We begin by showing a proof for the case $\alpha \rightarrow 0$.

\begin{lemma} \label{lem:frg-a0}
Let $p(z) = \mathcal{N}(z|0, \Sigma_p)$ and $q(z) = \mathcal{N}(z|0, \Cw)$. Also, let $\Sw$ be a matrix such that $\Sw \Sw^\top = \Cw$, which we use to reparameterize as $\T = \Sw \epsilon$. Then, for $\ga = \gdrep$ and $\alpha \rightarrow 0$,

\begin{itemize}[leftmargin=*] \denselist
\item If $\Sigma_p = \Cw$, the gradient estimator is deterministically zero.
\item Otherwise,
\begin{equation}
\SNR[\ga(p, q_w, \epsilon)] = \frac{1}{d + 2},
\end{equation}
where $d$ is the dimension of $z$.
\end{itemize}
\end{lemma}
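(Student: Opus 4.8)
The plan is to compute the estimator $\ga$ in closed form for this Gaussian model and then evaluate directly the two Frobenius-norm moments appearing in the definition of the SNR.

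First I would write out the $\alpha\to 0$ double-reparameterization (``sticking the landing'') estimator. With $w=\Sw$ and $\T=\Sw\epsilon$, it is $\ga(p,q_w,\epsilon)=\left.\nabla_{\Sw}\log\frac{q_v(\Sw\epsilon)}{p(\Sw\epsilon)}\right|_{v=\Sw}$. Substituting the two mean-zero Gaussian densities gives $\log\frac{q_v(z)}{p(z)}=-\tfrac12\log\det(vv^\top)+\tfrac12\log\det\Sigma_p-\tfrac12\,z^\top\big((vv^\top)^{-1}-\Sigma_p^{-1}\big)z$. Since the $\log\det$ terms depend only on $v$, which is held fixed while differentiating, only the quadratic form contributes; using that $\nabla_{\Sw}\big(\epsilon^\top\Sw^\top M\Sw\epsilon\big)=2M\Sw\epsilon\epsilon^\top$ for symmetric $M$, and then setting $v=\Sw$ (so $vv^\top=\Cw$), I obtain
\[
\ga(p,q_w,\epsilon)=\big(\Sigma_p^{-1}-\Cw^{-1}\big)\Sw\,\epsilon\epsilon^\top =: B\,\epsilon\epsilon^\top,
\]
where $B=(\Sigma_p^{-1}-\Cw^{-1})\Sw$ is precisely the matrix appearing in Theorem~\ref{thm:thmafrg}.

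The two cases then follow quickly. Since $\Sw$ is invertible, $B=0$ iff $\Sigma_p=\Cw$, which gives the ``deterministically zero'' case. Otherwise $B\neq 0$, and I would evaluate the moments using $\epsilon\sim\mathcal N(0,I)$. From $\E[\epsilon\epsilon^\top]=I$ we get $\E\,\ga=B$, hence $\Vert\E\,\ga\Vert_F^2=\mathrm{tr}(B^\top B)$. For the denominator, $\Vert B\epsilon\epsilon^\top\Vert_F^2=\mathrm{tr}(\epsilon\epsilon^\top B^\top B\epsilon\epsilon^\top)=(\epsilon^\top B^\top B\epsilon)(\epsilon^\top\epsilon)$, so the quartic Gaussian identity $\E\big[(\epsilon^\top A\epsilon)(\epsilon^\top C\epsilon)\big]=\mathrm{tr}(A)\mathrm{tr}(C)+2\,\mathrm{tr}(AC)$ for symmetric $A,C$, applied with $A=B^\top B$ and $C=I$, yields $\E\Vert\ga\Vert_F^2=(d+2)\,\mathrm{tr}(B^\top B)$. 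Dividing gives $\SNR[\ga]=\mathrm{tr}(B^\top B)/\big((d+2)\,\mathrm{tr}(B^\top B)\big)=1/(d+2)$.

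The only delicate point is the matrix-calculus and bookkeeping step: one has to track the sticking-the-landing convention carefully so that the $\log\det$ terms drop out (they carry only the held-fixed copy $v$) and the estimator collapses to the rank-one-times-matrix form $B\epsilon\epsilon^\top$. After that, everything reduces to the standard quartic Gaussian moment, which can be cited or obtained in one line from Wick's theorem. (Alternatively, one could recover the lemma as the $\alpha\to0$ limit of the exact expression in Theorem~\ref{thm:thmafrg}, using $U,V\to I$ and $\prod_i f(\lambda_i,\alpha)\to1$, but the direct computation above is shorter and self-contained.)
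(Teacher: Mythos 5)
Your proposal is correct and follows essentially the same route as the paper's proof of Lemma~\ref{lem:frg-a0}: compute the estimator in closed form as $\pm B\,\epsilon\epsilon^\top$ with $B=(\Sigma_p^{-1}-\Cw^{-1})\Sw$, then evaluate the two Frobenius-norm moments. The only (cosmetic) difference is that you evaluate $\E\Vert\ga\Vert_F^2$ via the coordinate-free quartic identity $\E[(\epsilon^\top A\epsilon)(\epsilon^\top C\epsilon)]=\mathrm{tr}(A)\mathrm{tr}(C)+2\,\mathrm{tr}(AC)$, whereas the paper sums the entrywise moments $\E[\epsilon_k\epsilon_l\epsilon_j^2]$; both give $(d+2)\Vert B\Vert_F^2$.
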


\begin{proof}
We begin by computing the gradient estimator $\ga(p, q_w, \epsilon)$. Using the pdf of a mean-zero multivariate normal and the reparameterization transformation $\T = \Sw \epsilon$ we get

\begin{align}
\ga(p, q_w, \epsilon) & = \left . \nabla_w \log \frac{p(\T)}{q_v(\T)}\right|_{v=w}\\
& = \left . \nabla_w \left( \log \sqrt{\frac{\det \Sigma_p}{\det \Cv}} - \frac{1}{2} \epsilon^\top \Sw^\top (\Cv^{-1} - \Sigma_p^{-1}) \Sw \epsilon \right)\right|_{v=w}\\
& = \left. -\frac{1}{2} \nabla_w \epsilon^\top \Sw^\top (\Cv^{-1} - \Sigma_p^{-1}) \Sw \epsilon\right|_{v=w}\\
& = -(\Cw^{-1} - \Sigma_p^{-1}) \Sw \epsilon \epsilon^\top\\
& = -B \epsilon \epsilon^\top,
\end{align}

where in the last line we defined $B = (\Cw^{-1} - \Sigma_p^{-1}) S$. It can be observed that the estimator is deterministically zero if $\Sigma_p = \Cw$. Otherwise, to compute the SNR we have to compute $\Vert \E \ga(p, q_w, \epsilon) \Vert^2$ and $\E \Vert \ga(p, q_w, \epsilon) \Vert^2$. Using the fact that $\epsilon$ follows a $d$-dimensional standard Gaussian, the former can be computed as

\begin{equation}
\E[\ga(p, q_w, \epsilon)] = -B \,\,\,\,\mbox{ and }\,\,\,\, \Vert \E[\ga(p, q_w, \epsilon)] \Vert_F^2 = \Vert B \Vert_F^2.
\end{equation}

Computing $\E \Vert \ga(p, q_w, \epsilon) \Vert^2_F$ requires more work. We will first consider the expected squared norm of a single component $ij$ of $\ga(p, q_w, \epsilon)$, that is $\E \ga(p, q_w, \epsilon)_{ij}^2$, and then add all of the results. We have

\begin{align}
\ga(p, q_w, \epsilon)_{ij} & = (-B \epsilon \epsilon^\top)_{ij}\\
& = -\sum_{k=1}^d B_{ik} \epsilon_k \epsilon_j,\\
\ga(p, q_w, \epsilon)^2_{ij} & = \sum_{l = 1}^d \sum_{k=1}^d B_{ik} B_{il} \epsilon_k \epsilon_l \epsilon_j^2.
\end{align}

Using the fact that $\epsilon = [\epsilon_1, ..., \epsilon_d]$ follows a multivariate standard Gaussian, we can compute $\E \ga(p, q_w, \epsilon)_{ij}^2$ as

\begin{align}
\E \ga(p, q_w, \epsilon)_{ij}^2 & = \sum_{k, l} B_{ik} B_{il} \E [\epsilon_k \epsilon_l \epsilon_j^2]\\
& = \sum_{k,l} B_{ik} B_{il} \left( \mathbbm{1}_{k = l} + 2\mathbbm{1}_{j = k} \mathbbm{1}_{j = l} \right)\\
& = \sum_k B_{ik}^2 + 2B_{ij}^2.
\end{align}

Finally, adding the results for all components $ij$ yields

\begin{align}
\E \Vert \ga(p, q_w, \epsilon)\Vert_F^2 & = \sum_{ij} \E [\ga(p, q_w, \epsilon)_{ij}^2]\\
& = 2 \sum_{ij} B_{ij}^2 + \sum_{ijk} B_{ik}^2\\
& = 2 \sum_{ij} B_{ij}^2 + d \sum_{ik} B_{ik}^2\\
& = (2 + d) \sum_{ij} B_{ij}^2\\
& = (2 + d) \Vert B\Vert_F^2.
\end{align}

Taking the ratio between the expressions for $\E \Vert \ga(p, q_w, \epsilon)\Vert_F^2$ and $\Vert \E[\ga(p, q_w, \epsilon)] \Vert_F^2$ yields the desired result.
\end{proof}

\fullrankexact*




\begin{proof}
We split the proof into several steps:

\begin{enumerate}

\item We begin by finding an expression for the gradient estimator. Using the fact that both $p$ and $q$ are mean-zero multivariate Gaussians we get

\small
\begin{align}
\ga(p, q_w, \epsilon) & = \left. -\frac{1}{\alpha} \nabla_w \left(\frac{p(\T)}{q_v(\T)}\right)^\alpha\right|_{v=w}\\
& =\left. -\frac{1}{\alpha} \nabla_w \left(\frac{\det\Cv}{\det \Sigma_p}\right)^{\alpha / 2} \exp\left[ -\frac{\alpha}{2} \epsilon^\top \Sw^\top (\Sigma_p^{-1} - \Cw^{-1}) S \epsilon \right]\right|_{v=w}\\
& = \left(\frac{\det\Cw}{\det \Sigma_p}\right)^{\alpha/2} \exp\left[ -\frac{\alpha}{2} \epsilon^\top \Sw^\top (\Sigma_p^{-1} - \Cw^{-1}) \Sw \epsilon \right] (\Sigma_p^{-1} - \Cw^{-1}) \Sw \epsilon \epsilon^\top.
\end{align}\normalsize

It can be observed that if $\Sigma_p = \Cw$ the gradient estimator is deterministically zero.

\item We compute $\E \Vert \ga(p, q_w, \epsilon) \Vert_F^2$. Using the definitions for $B$ and $U$ we can compute $\E[\ga(p, q_w, \epsilon)]$ as 

\small
\begin{align}
\E[\ga(p, q_w, \epsilon)] & = \left(\frac{\det\Cw}{\det \Sigma_p}\right)^{\alpha/2} \int \frac{1}{(2\pi)^{d/2}}\exp\left[ -\frac{1}{2} \epsilon^\top \epsilon \right]\exp\left[ -\frac{1}{2} \epsilon^\top (U - I) \epsilon \right] B \epsilon \epsilon^\top d\epsilon\\
& = \left(\frac{\det\Cw}{\det \Sigma_p}\right)^{\alpha/2} B \int \frac{1}{(2\pi)^{d/2}} \exp\left[ -\frac{1}{2} \epsilon^\top U \epsilon \right] \epsilon \epsilon^\top d\epsilon.
\end{align}
\normalsize

This integral converges if and only if $U$ is positive definite. If this is satisfied, then 

\small
\begin{align}
\E[\ga(p, q_w, \epsilon)] & = \left(\frac{\det\Cw}{\det \Sigma_p}\right)^{\alpha/2} \sqrt{\det(U^{-1})} B \int \frac{1}{(2\pi)^{d/2} \sqrt{\det(U^{-1})}} \exp\left[ -\frac{1}{2} \epsilon^\top U \epsilon \right] \epsilon \epsilon^\top d\epsilon\\
& = \left(\frac{\det\Cw}{\det \Sigma_p}\right)^{\alpha/2} \frac{1}{\sqrt{\det U}} B U^{-1},
\end{align}
\normalsize

and 

\begin{equation}
\Vert \E[\ga(p, q_w, \epsilon)]\Vert^2_F = \left(\frac{\det\Cw}{\det \Sigma_p}\right)^{\alpha} \frac{1}{\det U} \Vert B U^{-1}\Vert^2_F.
\end{equation}

\item We compute $\E \Vert\ga(p, q_w, \epsilon)\Vert^2_F$. Computing $\E \Vert\ga(p, q_w, \epsilon)\Vert^2_F$ requires more work. We first compute $\E[\ga(p, q_w, \epsilon)_{ij}^2]$ for each component $ij$ of $\ga$ and then add up all the results. Recall, the gradient estimator is given by

\begin{align}
\ga(p, q_w, \epsilon) & = \left(\frac{\det\Cw}{\det \Sigma_p}\right)^{\alpha/2} \exp\left[ -\frac{\alpha}{2} \epsilon^\top S^\top (\Sigma_p^{-1} - \Cw^{-1}) S \epsilon \right] B \epsilon \epsilon^\top.
\end{align}

Thus, component $ij$ of the estimator, that is, $\ga(p, q_w, \epsilon)_{ij}$, is given by

\begin{align}
\ga(p, q_w, \epsilon)_{ij} & = \left(\frac{\det\Cw}{\det \Sigma_p}\right)^{\alpha/2} \exp\left[ -\frac{\alpha}{2} \epsilon^\top \Sw^\top (\Sigma_p^{-1} - \Cw^{-1}) \Sw \epsilon \right] \sum_{k= 1}^d B_{ik} \epsilon_k \epsilon_j,
\end{align}

and $\ga(p, q_w, \epsilon)_{ij}^2$ is given by

\begin{align}
\ga(p, q_w, \epsilon)_{ij}^2 & = \left(\frac{\det\Cw}{\det \Sigma_p}\right)^{\alpha} \exp\left[ -\alpha \epsilon^\top \Sw^\top (\Sigma_p^{-1} - \Cw^{-1}) \Sw \epsilon \right] \sum_{k,l = 1}^d B_{il} B_{ik} \epsilon_k \epsilon_l \epsilon_j^2.
\end{align}

Using the definition for $V$, and using the fact that $\epsilon$ follows a multivariate standard Gaussian distribution, we can compute $\E [\ga(p, q_w, \epsilon)_{ij}^2]$ as 

\small
\begin{align}
& \E[\ga(p, q_w, \epsilon)_{ij}^2] \\
& = \left(\frac{\det\Cw}{\det \Sigma_p}\right)^{\alpha} \int \frac{1}{(2\pi)^{d/2}} \exp\left[ -\frac{1}{2} \epsilon^\top \epsilon \right]\exp \left[ -\alpha \epsilon^\top \Sw^\top (\Sigma_p^{-1} - \Cw^{-1}) \Sw \epsilon \right] \sum_{k,l = 1}^d B_{il} B_{ik} \epsilon_k \epsilon_l \epsilon_j^2\\
& = \left(\frac{\det\Cw}{\det \Sigma_p}\right)^{\alpha} \int \frac{1}{(2\pi)^{d/2}} \exp \left[ -\frac{1}{2}\epsilon^\top V \epsilon \right] \sum_{k,l = 1}^d B_{il} B_{ik} \epsilon_k \epsilon_l \epsilon_j^2\\
& = \left(\frac{\det\Cw}{\det \Sigma_p}\right)^{\alpha} \sum_{k,l = 1}^d B_{il} B_{ik} \int \frac{1}{(2\pi)^{d/2}} \exp \left[ -\frac{1}{2}\epsilon^\top V \epsilon \right] \epsilon_k \epsilon_l \epsilon_j^2.
\end{align}
\normalsize

The integrals converge if and only if $V$ is positive definite. If this is satisfied, then

\begin{align}
& \E[\ga(p, q_w, \epsilon)_{ij}^2]\\
& = \left(\frac{\det\Cw}{\det \Sigma_p}\right)^{\alpha} \frac{1}{\sqrt{\det V}} \sum_{k,l = 1}^d B_{il} B_{ik} \int \frac{1}{(2\pi)^{d/2} \sqrt{\det (V^{-1})}} \exp \left[ -\frac{1}{2}\epsilon^\top V \epsilon \right] \epsilon_k \epsilon_l \epsilon_j^2\\
& = \left(\frac{\det\Cw}{\det \Sigma_p}\right)^{\alpha} \frac{1}{\sqrt{\det V}} \sum_{k,l = 1}^d B_{il} B_{ik} \left( V^{-1}_{jj} V^{-1}_{kl} + 2 V^{-1}_{kj} V^{-1}_{lj} \right),
\end{align}

where $V^{-1}_{ij}$ is component $ij$ of matrix $V^{-1}$. Finally, we compute $\E \Vert \ga(p, q_w, \epsilon) \Vert_F^2 = \sum_{ij} \E[\ga(p, q_w, \epsilon)_{ij}^2]$, which gives

\begin{equation}
\E \Vert \ga(p, q_w, \epsilon) \Vert_F^2 = \left(\frac{\det\Cw}{\det \Sigma_p}\right)^{\alpha} \frac{1}{\sqrt{\det V}} \sum_{i,j,k,l = 1}^d B_{il} B_{ik} \left( V^{-1}_{jj} V^{-1}_{kl} + 2 V^{-1}_{kj} V^{-1}_{lj} \right) \label{eqapp:here6}. 
\end{equation}

\item We work on eq. \ref{eqapp:here6}. For this we use the notation $B_{i:}$, which denotes the $i$-th row of $B$. First,  

\begin{align}
\sum_{ij} \sum_{kl} B_{il} B_{ik} V^{-1}_{jj} V^{-1}_{kl} & = \sum_{ij} V^{-1}_{jj} \sum_{kl} B_{ik} V^{-1}_{kl} B_{il}\\
& = \sum_{ij} V^{-1}_{jj} (B_{i:}^\top V^{-1} B_{i:})\\
& = \sum_{ij} V^{-1}_{jj} (B^\top V^{-1} B)_{ii}\\
& = \sum_{j} V^{-1}_{jj} \sum_i (B^\top V^{-1} B)_{ii}\\
& = \mathrm{tr}(V^{-1}) \mathrm{tr}(B^\top V^{-1} B).
\end{align}

Second,

\begin{align}
\sum_{ijkl} B_{il} B_{ik} V^{-1}_{kj} V^{-1}_{lj} & = \sum_{ij} \sum_{kl} B_{il} B_{ik} V^{-1}_{kj} V^{-1}_{lj}\\
& = \sum_{ij} \left(\sum_{k} B_{ik} V^{-1}_{kj}\right) \left(\sum_l B_{il} V^{-1}_{lj}\right)\\
& = \sum_{ij} \left(\sum_{k} B_{ik} V^{-1}_{kj}\right)^2\\
& = \sum_{ij} \left(B_{i:} V^{-1}_{:j}\right)^2\\
& = \sum_{ij} \left(B V^{-1}\right)_{ij}^2\\
& = \Vert B V^{-1} \Vert_F^2.
\end{align}

Using these two results we get

\begin{equation}
\E \Vert \ga(p, q_w, \epsilon) \Vert_F^2 = \left(\frac{\det\Cw}{\det \Sigma_p}\right)^{\alpha} \frac{1}{\sqrt{\det V}} \left(\mathrm{tr}(V^{-1}) \mathrm{tr}(B^\top V^{-1}B) + 2 \Vert B V^{-1} \Vert_F^2\right).
\end{equation}

\item Taking the ratio between $\Vert \E \ga(p, q_w, \epsilon) \Vert_F^2$ and $\E \Vert \ga(p, q_w, \epsilon) \Vert_F^2$ yields

\begin{equation}
\SNR[\ga(p, q_w, \epsilon)] = \frac{\sqrt{\det V}}{\det U} \frac{\Vert B U^{-1} \Vert^2_F}{\mathrm{tr}(V^{-1}) \mathrm{tr}(B V^{-1}B^\top) + 2 \Vert B V^{-1} \Vert_F^2}. \label{eqpp:here7}.
\end{equation}

\item We prove the eigenvalues $\lambda_1, ..., \lambda_d$ of $\Sigma_p^{-1} \Cw$ are all real and positive. This can be seen by noticing that $\Sigma_p^{-1} \Cw$ is similar to the matrix $S^\top \Sigma_p^{-1} S$.\footnote{$\Sw^{-\top}(\Sw^\top \Sigma_p^{-1} \Sw)\Sw^\top = \Sigma_p^{-1} \Sw \Sw^\top = \Sigma_p^{-1} \Cw$.} These latter matrix is symmetric and positive definite, and thus its eigenvalues are real and positive. Since similar matrices have the same eigenvalues, it can be concluded that $\lambda_1, ..., \lambda_d$ are real and positive.

\item The first condition for existence of the SNR that requires that $U = (1 - \alpha) I + \alpha \Sw^\top \Sigma_p^{-1} \Sw$ is positive definite can be rewritten as $1 - \alpha + \alpha \lambda_i > 0$ for $i = 1,...,d$. Since $\Sw^\top \Sigma_p^{-1} \Sw$ is similar to $\Sigma_p^{-1} \Cw$, $\Sw^\top \Sigma_p^{-1} \Sw$ eigenvalues are $\lambda_1,...,\lambda_d$. Therefore, $U$'s $i$-th eigenvalue is $\lambda_i(U) = 1 - \alpha + \alpha \lambda_i$. Since $U$ must be positive definite (i.e. all its eigenvalues are positive) we get the conditions $1 - \alpha + \alpha \lambda_i > 0$ for all $i$.

\item The second condition for existence of the SNR that requires that $V = (1 - 2\alpha) I + 2\alpha \Sw^\top \Sigma_p^{-1} \Sw$ is positive definite can be rewritten as $1 - 2\alpha + 2\alpha \lambda_i > 0$ for $i = 1,...,d$. Since $\Sw^\top \Sigma_p^{-1} \Sw$ is similar to $\Sigma_p^{-1} \Cw$, $\Sw^\top \Sigma_p^{-1} \Sw$ eigenvalues are $\lambda_1,...,\lambda_d$. Therefore, $V$'s $i$-th eigenvalue is $\lambda_i(V) = 1 - 2\alpha + 2\alpha \lambda_i$. Since $V$ must be positive definite (i.e. all its eigenvalues are positive) we get the conditions $1 - 2\alpha + 2\alpha \lambda_i > 0$ for all $i$. Notice that this condition is strictly more restrictive than $1 - \alpha + \alpha \lambda_i > 0$, and thus is the only one mentioned in the theorem.

\item We get an exact expression for the ratio $\frac{\sqrt{\det V}}{\det U}$ (see eq. \ref{eqpp:here7}). Using the fact that the determinant of a matrix is the product of its eigenvalues, we can get an exact expression for this ratio in terms of $\lambda_1, ..., \lambda_d$ as 

\begin{align}
\frac{\sqrt{\det V}}{\det U} & = \prod_{i=1}^d \frac{\sqrt{\lambda_{i}(V)}}{\lambda_{i}(U)}\\
& = \prod_{i=1}^d \frac{\sqrt{2\alpha \lambda_i + 1 - 2\alpha}}{\alpha \lambda_i + 1 - \alpha}\\
& = \prod_{i=1}^d \sqrt{\frac{1}{ \frac{(1 + \alpha \lambda_i - \alpha)^2}{1 + 2\alpha \lambda_i - 2\alpha} }}\\
& = \prod_{i=1}^d \sqrt{\frac{1}{ 1 + \alpha^2 \frac{(\lambda_i - 1)^2}{1 + 2\alpha \lambda_i - 2\alpha}}}\\
& = \prod_{i=1}^d f(\lambda_i, \alpha).
\end{align}

Plugging this result into eq. \ref{eqpp:here7} yields the desired result.
\end{enumerate}
\end{proof}


\fullrankbound*

\begin{proof} 

The case for $\alpha \rightarrow 0$ is given by Lemma \ref{lem:frg-a0}. 

For $\alpha \neq 0$, given eq.~\ref{eqpp:here7}, all we need to prove the result is an upper bound for $\frac{\Vert B U^{-1} \Vert^2_F}{\mathrm{tr}(V^{-1}) \mathrm{tr}(B V^{-1}B^\top) + 2 \Vert B V^{-1} \Vert_F^2}$. This requires finding an upper bound for the numerator, and a lower bound for the denominator. To do this we introduce the induced matrix norm $\Vert A \Vert_2 = \max_{x \neq 0} \frac{\Vert A x \Vert_2}{\Vert x \Vert_2}$, and use the following inequalities:
\begin{description}
  \item[M1:] For any $d \times d$ real matrix $A$ we have $\Vert A \Vert_2^2 \leq \Vert A \Vert_F^2 \leq d \Vert A \Vert_2^2$.
  \item[M2:] For any $d \times d$ real matrices $A$ and $B$ we have $\Vert A B \Vert_2 \leq \Vert A \Vert_2 \Vert B \Vert_2$.
  \item[M3:] For any symmetric $d \times d$ real matrix $A$ we have $\Vert A \Vert_2 = \max_i |\lambda_i(A)|$, where $\lambda_i(A)$ is the $i$-th eigenvalue of $A$.
  \item[M4:] For any $d \times d$ matrix $B$ and symmetric $d \times d$ real matrix $A$ we have $\Vert B A \Vert_2 \geq \Vert B \Vert_2 \min_i |\lambda_i(A)|$, where $\lambda_i(A)$ is the $i$-th eigenvalue of $A$.
\end{description}

An upper bound for $\Vert B U^{-1} \Vert^2_F$ is given by 

\begin{align}
\Vert B U^{-1} \Vert^2_F & \leq d \Vert B U^{-1} \Vert_2^2 & \mbox{(M1)}\\
& \leq d \Vert B \Vert_2^2 \Vert U^{-1} \Vert_2^2 & \mbox{(M2)}\\
& = d \Vert B \Vert_2^2 \left(\max_i \lambda_i(U^{-1}) \right)^2 & \mbox{(M3)}\\
& = d \Vert B \Vert_2^2 \, \lambda_{\min}(U)^2,
\end{align}

where in the last line we used the fact that all eigenvalues of $U$ are positive and that $\lambda_i(U) = \frac{1}{\lambda_i(U^{-1})}$.

A lower bound for $\mathrm{tr}(V^{-1})$ is given by

\begin{align}
\mathrm{tr}(V^{-1}) & = \sum_{i=1}^d \lambda_i(V^{-1})\\
& \geq d \lambda_{\min}(V^{-1}) &  \\
& = d \lambda_{\max}(V).
\end{align}

Using the eigen-decomposition $V = P D_v P^\top$, where $P$ is orthogonal and $D$ diagonal with positive entries, we can get a lower bound for $\mathrm{tr}(B^\top V^{-1} B)$ as

\begin{align}
\mathrm{tr}(B^\top V^{-1} B) & = \mathrm{tr}(B^\top P D_v^{-1} P^\top B)\\
& = \mathrm{tr}((B^\top P D_v^{-1/2})(B^\top P D_v^{-1/2})^\top)\\
& = \Vert B^\top P D_v^{-1/2} \Vert_F^2\\
& \geq \Vert B^\top P D_v^{-1/2} \Vert^2 & \mbox{(M1)}\\
& \geq \Vert B^\top P \Vert^2 \left(\min_i \lambda_i(D_v^{-1/2})\right)^2  & \mbox{(M4)}\\
& = \Vert B^\top \Vert^2 \min_i \lambda_i(V^{-1})\\
& = \Vert B \Vert^2 \lambda_{\max}(V).
\end{align}

Combining the last two results we can get a lower bound for the denominator $\mathrm{tr}(V^{-1}) \mathrm{tr}(B V^{-1}B^\top) + 2 \Vert B V^{-1} \Vert_F^2$ as 

\begin{align}
\mathrm{tr}(V^{-1}) \mathrm{tr}(B V^{-1}B^\top) + 2 \Vert B V^{-1} \Vert_F^2 & \geq \mathrm{tr}(V^{-1}) \mathrm{tr}(B V^{-1}B^\top)\\
& \geq d \Vert B \Vert_2^2 \, \lambda_{\max}(V)^2.
\end{align}

Finally, the upper bound for $\frac{\Vert B U^{-1} \Vert^2_F}{\mathrm{tr}(V^{-1}) \mathrm{tr}(B V^{-1}B^\top) + 2 \Vert B V^{-1} \Vert_F^2}$ is given by

\begin{align}
\frac{\Vert B U^{-1} \Vert^2_F}{\mathrm{tr}(V^{-1}) \mathrm{tr}(B V^{-1}B^\top) + 2 \Vert B V^{-1} \Vert_F^2} & \leq \left( \frac{\lambda_{\min}(U)}{\lambda_{\max}(V)} \right)^2\\
& = \left(\frac{\min_i (1 + \alpha \lambda_i - \alpha)}{\max_i (1 + 2\alpha \lambda_i - 2\alpha)}\right)^2.
\end{align}

Considering the cases $\alpha > 0$ and $\alpha < 0$ yields the desired result.
\end{proof}

\newpage
\section{Low Variance Proof (Section \ref{sec:motiv})} \label{app:lowvar}

Section \ref{sec:motiv} considers the setting where both $p$ and $q_w$ are fully-factorized Gaussians with mean zero. That section empirically showed that for $d = 128$ and $\alpha = 0.4$ optimization failed even though the gradient estimator had ``low'' variance. In this section we state this variance result formally. Theorem \ref{thm:small_var} presents an upper bound for the variance of the gradient estimator, and Preposition \ref{prep:small_var} states that, for $\alpha \in (0, 1/2)$, this upper bound ``gets small'' for high dimensional problems.

\begin{thm} \label{thm:small_var}
Let $p$ and $q$ be two fully-factorized $d$-dimensional Gaussian distributions with mean zero and variances {$\sigma_{pi}^2$} and {$\sigma_{qi}^2$} for $i \in \{ 1, \hdots, d \}$. Let {$\R_i = \nicefrac{\sigma_{qi}^2}{\sigma_{pi}^2}$} and $\ga = \gdrep$. Then,

\begin{itemize}[leftmargin=*] \denselist
\item If $\sigma_{pj} = \sigma_{qj}$, the $j$-th component of $\ga$ is deterministically zero.
\item If $\alpha \neq 0$ and $1 + 2\alpha \R_i - 2\alpha \leq 0$ for any $i$, the gradient estimator has infinite variance.
\item Otherwise,
\small
\begin{equation}
\Var[\gaj] \leq A(\sigma_{pj}, \sigma_{qj}, \alpha) \prod_{i=i}^d \sqrt{\frac{\R_i^{2\alpha}}{1 + 2\alpha \R_i - 2\alpha}}, \label{eq:var_bound}
\end{equation}
\normalsize
where $\gaj$ is the $j$-th component of the estimator $\ga$, and \small$A(\sigma_{pj}, \sigma_{qj}, \alpha) = \frac{3 \sigma_{qj}^2}{(1 + 2\alpha \R_j - 2\alpha)^2} (\frac{1}{\sigma_{pj}^{2}} - \frac{1}{\sigma_{qj}^{2}})^2$\normalsize.
\end{itemize}
\end{thm}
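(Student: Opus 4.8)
The plan is to bound the variance by the raw second moment, $\Var[\gaj]\le\E[\gaj^2]$ (legitimate since $\gaj$ is a scalar), and then simply read off a closed form for $\E[\gaj^2]$ from the coordinatewise computations already performed for the SNR results, so that essentially no new calculation is required. The inequality in eq.~\ref{eq:var_bound} will in fact come out as an \emph{equality} for $\E[\gaj^2]$, with the ``$\le$'' appearing only when the nonnegative term $\E[\gaj]^2$ is discarded.

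For $\alpha\neq 0$ I would start from the factorization of the second moment that already appears in the proof of Theorem~\ref{thm:NSR-gff}. Since $p$, $q_w$ and the reparameterization all factor over coordinates, eq.~\ref{eq:lv2} gives
\begin{equation*}
\E[\gaj^2] = \E\!\left[\Bigl(\tfrac{-1}{\alpha}\nabla_{w_j}\bigl(\tfrac{p_j(\Tj)}{q_{v_j}(\Tj)}\bigr)^{\alpha}\Bigr)^{2}\right]_{v=w}\ \prod_{\substack{i=1\\ i\neq j}}^{d}\E\!\left[\bigl(\tfrac{p_i(\Ti)}{q_{v_i}(\Ti)}\bigr)^{2\alpha}\right]_{v=w},
\end{equation*}
i.e.\ one ``gradient'' factor in coordinate $j$ times $d-1$ ``density-ratio'' factors. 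Each of these has already been evaluated in closed form for mean-zero Gaussians: the density-ratio factors equal $\lambda_i^{\alpha}/\sqrt{1+2\alpha\lambda_i-2\alpha}=\sqrt{\lambda_i^{2\alpha}/(1+2\alpha\lambda_i-2\alpha)}$ by eq.~\ref{eqapp:here3} in the proof of Lemma~\ref{app:lemmaf}, and the gradient factor equals $3\lambda_j^{\alpha}\sigma_{qj}^{2}(\sigma_{pj}^{-2}-\sigma_{qj}^{-2})^{2}/(1+2\alpha\lambda_j-2\alpha)^{5/2}$ by eq.~\ref{eqapp:here10} in the proof of Lemma~\ref{app:lemmaffg3}. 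Multiplying and regrouping the $j$-th factor into the product over all $i$, the $\lambda$-dependence collapses to $\prod_{i=1}^{d}\sqrt{\lambda_i^{2\alpha}/(1+2\alpha\lambda_i-2\alpha)}$ and the leftover constant is exactly $A(\sigma_{pj},\sigma_{qj},\alpha)=3\sigma_{qj}^{2}(\sigma_{pj}^{-2}-\sigma_{qj}^{-2})^{2}/(1+2\alpha\lambda_j-2\alpha)^{2}$, which is eq.~\ref{eq:var_bound}.

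What remains is the case analysis, and this is where the only care is needed rather than any genuine difficulty. If $\sigma_{pj}=\sigma_{qj}$ then $\nabla_{w_j}(p_j(\Tj)/q_{v_j}(\Tj))^{\alpha}|_{v=w}=\nabla_{w_j}1=0$, exactly as in the proof of Theorem~\ref{thm:NSR-gff}, so the $j$-th component is deterministically zero. If some $\lambda_i$ violates $1+2\alpha\lambda_i-2\alpha>0$, then the Gaussian integral producing the corresponding factor (eq.~\ref{eqapp:here3} or eq.~\ref{eqapp:here10}) diverges, so the estimator has infinite variance; one should note that this condition is strictly stronger than the $1+\alpha\lambda_i-\alpha>0$ needed for the first moments, so it alone governs finiteness of the variance. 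The $\alpha\to 0$ case follows either by continuity of the bound, whose limit is $3\sigma_{qj}^{2}(\sigma_{pj}^{-2}-\sigma_{qj}^{-2})^{2}$, or directly from Lemma~\ref{app:lemmaffg1}. I expect the main obstacle to be purely bookkeeping: tracking the product over all $i$ versus $i\neq j$, and correctly absorbing the $j$-th term into the constant $A(\sigma_{pj},\sigma_{qj},\alpha)$.
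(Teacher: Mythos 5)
Your proposal is correct and follows essentially the same route as the paper: bound $\Var[\gaj]$ by $\E[\gaj^2]$, factorize via eq.~\ref{eq:lv2}, and substitute the closed forms from eqs.~\ref{eqapp:here3} and \ref{eqapp:here10}, with the $j$-th factor absorbed into $A(\sigma_{pj},\sigma_{qj},\alpha)$ exactly as you describe. The paper's own proof is just a terser version of this, and your handling of the degenerate and infinite-variance cases matches the arguments already given in Lemmas~\ref{app:lemmaf} and \ref{app:lemmaffg3}.
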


\begin{proposition} \label{prep:small_var}
If $\alpha \in (0, 1/2)$, then: (i) the gradient estimator $\gdrep$ has finite variance; (ii) all of the terms in the $d$-term product in eq. \ref{eq:var_bound} are less than or equal one, with equality holding if and only if $p_i = q_{w_i}$ for the corresponding $i$.
\end{proposition}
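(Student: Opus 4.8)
The plan is to reduce both parts of the proposition to a single elementary inequality concerning the map $\R \mapsto \R^{2\alpha}/(1 + 2\alpha(\R - 1))$, using Theorem~\ref{thm:small_var} as a black box. For part (i), recall that by Theorem~\ref{thm:small_var} the estimator $\gdrep$ fails to have finite variance (for $\alpha \neq 0$) exactly when $1 + 2\alpha\R_i - 2\alpha \le 0$ for some $i$. Since each $\R_i = \sigma_{qi}^2/\sigma_{pi}^2 > 0$, we have $\R_i - 1 > -1$, hence $2\alpha(\R_i - 1) > -2\alpha > -1$ because $0 < \alpha < 1/2$; therefore $1 + 2\alpha(\R_i - 1) > 0$ for every $i$, which rules out the infinite-variance case and also shows that each denominator appearing in eq.~\ref{eq:var_bound} is strictly positive.

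For part (ii), each factor in the $d$-term product of eq.~\ref{eq:var_bound} is $\sqrt{\R_i^{2\alpha}/(1 + 2\alpha(\R_i - 1))}$, so, given the positivity established above, it suffices to prove
\[ \R^{2\alpha} \le 1 + 2\alpha(\R - 1) \qquad (\R > 0,\ 2\alpha \in (0,1)), \]
with equality iff $\R = 1$. I would obtain this from the strict concavity of $x \mapsto x^{2\alpha}$ on $(0,\infty)$ — its second derivative $2\alpha(2\alpha - 1)x^{2\alpha - 2}$ is negative since $0 < 2\alpha < 1$ — so the curve lies strictly below its tangent line at $x = 1$, which is precisely the affine map $x \mapsto 1 + 2\alpha(x - 1)$, with equality holding only at the point of tangency $\R = 1$. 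Equivalently, weighted AM--GM gives $\R^{2\alpha}\cdot 1^{1 - 2\alpha} \le 2\alpha\R + (1 - 2\alpha) = 1 + 2\alpha(\R - 1)$, again with equality iff $\R = 1$. Finally, $\R_i = 1$ is equivalent to $\sigma_{qi} = \sigma_{pi}$, i.e.\ $p_i = q_{w_i}$, which yields the stated equality condition.

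There is essentially no obstacle: the whole argument rests on one textbook convexity inequality, and the positivity bookkeeping in part~(i) is immediate from $\R_i > 0$ and $\alpha < 1/2$. The only minor point requiring care is that the inequality is applied coordinatewise and the resulting nonnegative factors are then multiplied over $i$, which is legitimate precisely because the denominators were shown to be positive in part~(i).
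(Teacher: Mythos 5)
Your proposal is correct and follows essentially the same route as the paper's own proof: part (i) by checking that $1+2\alpha(\R_i-1)>0$ holds automatically when $\R_i>0$ and $\alpha\in(0,1/2)$, and part (ii) by the strict concavity of $\R\mapsto\R^{2\alpha}$ and the tangent-line (first-order Taylor) bound at $\R=1$, with the equality case at the point of tangency. The weighted AM--GM remark is a nice alternative phrasing of the same inequality but does not change the argument.
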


Theorem \ref{thm:small_var} gives an upper bound for the variance of the $j$-th component of the gradient estimator $\ga$. It basically states that this bound is some function $A$ that depends on $\alpha, \sigma_{pj}$ and $\sigma_{qj}$, times $d$ terms, one for each $i = 1, ..., d$. In addition, Preposition \ref{prep:small_var} states that, if $\alpha \in (0, 1/2)$, each of these $d$ terms is at most 1, with equality holding if and only if $p_i = q_{w_i}$ for the corresponding $i$. Therefore, if this does not hold for any $i$ (e.g. at initialization), the variance bound is just the factor $A(\sigma_{pj}, \sigma_{qj}, \alpha)$ times $d$ terms strictly less than one. Intuitively, if $\alpha \in (0, 1/2)$, larger dimensionalities $d$ lead to smaller upper bounds for the variance (if $p$ and $q_w$ are isotropic this bound goes to zero exponentially in $d$).

We now present a proof for Theorem \ref{thm:small_var}.

\begin{proof}
We know that the variance is just $\Var[\gaj] = \E[\gaj^2] - \E[\gaj]^2 \leq \E[\gaj^2]$. Thus, an upper bound for the variance is given by $\E[\gaj^2]$. We can compute this for the case where $p$ and $q_w$ are fully-factorized mean zero Gaussians. From eq. \ref{eq:lv2} we know 

\small
\begin{equation}
\E[\gaj^2] = \E \left[ \left( \frac{-1}{\alpha} \nabla_{w_j} \left(\frac{p_j(\Tj)}{q_{v_j}(\Tj)}\right)^\alpha\right)^2\right] \prod_{i \in \{1, ..., d\} \backslash j} \E \left[\left(\frac{p_i(\Ti)}{q_{v_i}(\Ti)}\right)^{2\alpha} \right]_{v=w}.
\end{equation}
\normalsize

Replacing the results from eqs. \ref{eqapp:here3} and \ref{eqapp:here10} into the above expression yields the result from the lemma.
\end{proof}

We now present a proof for Preposition \ref{prep:small_var}.

\begin{proof}
We begin by proving item (i) of the preposition. The condition for finite variance is $1 + 2\alpha(\R_i - 1) > 0$ for all $i$. For $\alpha > 0$ this is equivalent to $\sigma_{qi}^2 > (1 - \frac{1}{2\alpha}) \sigma_{pi}^2$. If $\alpha \in (0, 1/2)$ the right hand side of the inequality is negative. Thus, since $\sigma_q^2 > 0$, the condition is always satisfied.

We now prove item (ii). $\R^{2\alpha}$ is strictly concave for $\alpha \in (0, 1/2)$ and $R \geq 0$. Thus, a first order Taylor expansion around any $\R_0 > 0$ yields an upper bound of the function. For $\R_0 = 1$ the Taylor expansion is given by $1 + 2\alpha(\R - 1) = 1 + 2\alpha \R - 2\alpha$. Since the Taylor expansion is an upper bound, we must have $\frac{\R^{2\alpha}}{1 + 2\alpha \R - 2\alpha} \leq 1$, with equality holding if and only if $\R = 1$.
\end{proof}

\end{document}